\documentclass{article}

\PassOptionsToPackage{numbers, compress}{natbib}
\PassOptionsToPackage{table}{xcolor}

 \usepackage[main, preprint]{neurips_2026}

\usepackage[utf8]{inputenc} 
\usepackage[T1]{fontenc}    
\usepackage{hyperref}       
\usepackage{url}            
\usepackage{booktabs}       
\usepackage{amsfonts}       
\usepackage{nicefrac}       
\usepackage{microtype}      
\usepackage{xcolor}         

\usepackage{amsmath,amssymb,amsthm,mathtools}

\usepackage{amsfonts}

\theoremstyle{plain}
\newtheorem{theorem}{Theorem}
\newtheorem{lemma}[theorem]{Lemma}
\newtheorem{proposition}[theorem]{Proposition}
\newtheorem{corollary}[theorem]{Corollary}
\theoremstyle{definition}
\newtheorem{definition}{Definition}
\theoremstyle{remark}
\newtheorem{remark}{Remark}

\newcounter{algorithm}

\usepackage{titletoc}

\usepackage{makecell}

\usepackage{adjustbox}
\usepackage{booktabs}
\usepackage{makecell}
\usepackage{multirow}
\usepackage{graphicx}
\usepackage{xcolor}

\definecolor{cvprblue}{rgb}{0.21,0.49,0.74}

\usepackage[T1]{fontenc}
\IfFileExists{inconsolata.sty}{%
  \usepackage[scaled=0.92]{inconsolata}%
  \newcommand{\algttfamily}{\fontfamily{zi4}}%
}{%
  \usepackage{courier}%
  \newcommand{\algttfamily}{\ttfamily}%
}
\usepackage{amsmath,amssymb}
\usepackage[most]{tcolorbox}
\usepackage{xcolor}
\usepackage{lmodern}

\usepackage{booktabs}
\usepackage{xcolor}
\usepackage{subcaption}

\definecolor{algcomment}{RGB}{74,133,135}
\definecolor{algfunc}{RGB}{219,66,145}
\definecolor{algframe}{RGB}{30,30,30}
\definecolor{algbg}{RGB}{248,248,248}

\newcommand{\cmt}[1]{\textcolor{algcomment}{#1}}
\newcommand{\fn}[1]{\textcolor{algfunc}{\texttt{#1}}}
\newcommand{\code}[1]{\texttt{#1}}
\newcommand{\algcodefont}{\algttfamily\fontsize{9.2}{11.2}\selectfont}

\AtBeginDocument{%
  \setlength{\abovedisplayskip}{1pt}%
  \setlength{\belowdisplayskip}{1pt}%
  \setlength{\jot}{1pt}%
}

\title{Drift Flow Matching}

\author{%
  \textbf{Chenrui Ma\textsuperscript{1,2} \quad
  Xi Xiao\textsuperscript{3} \quad
  Lin Zhao\textsuperscript{4}} \\
  \textbf{Tianyang Wang\textsuperscript{3} \quad
  Ferdinando Fioretto\textsuperscript{2} \quad
  Yanning Shen\textsuperscript{1}} \\[0.2cm]
  \normalfont\textsuperscript{1}University of California, Irvine \quad
  \textsuperscript{2}University of Virginia \\
  \normalfont\textsuperscript{3}University of Alabama at Birmingham \quad
  \textsuperscript{4}Northeastern University \\
  \texttt{\{chenrum,yannings\}@uci.edu} \quad
  \texttt{\{zzz8fa,fioretto\}@virginia.edu} \\
  \texttt{\{xxiao,tw2\}@uab.edu} \quad
  \texttt{zhao.lin1@northeastern.edu} \\
}

\begin{document}

\maketitle

\begin{abstract}
Iterative generative models such as Flow Matching and Diffusion models have demonstrated strong test-time scaling behavior, where additional inference computation can improve generation quality.
In contrast, Drift Models offer efficient one-step generation, but their direct generation paradigm limits such flexibility.
In this work, we propose \textbf{Drift Flow Matching (DFM)}, a framework that connects drifting generative modeling with flow-based iterative generation.
DFM preserves the efficiency of direct transport maps while enabling generation to be refined through multiple inference steps when desired.
This bridges the gap between one-step Drift Models and multi-step Flow Matching methods, and provides a novel generative paradigm that can adapt sampling computation to different quality--efficiency requirements. 
Extensive experiments across different tasks and datasets demonstrate the effectiveness and generality of the proposed framework.
\end{abstract}

\section{Introduction}
\label{sec:intro}

Flow Matching~\cite{lipman2023flow, albergo2023building, esser2024scaling} and Diffusion~\cite{albergo2023stochastic, cai2025diffusion, song2021scorebased} models describe generation as the evolution of distributions through continuous-time dynamics, usually formulated by ordinary differential equations (ODEs)~\cite{lipman2024flow} and stochastic differential equations (SDEs)~\cite{lai2025principlesdiffusionmodels}, respectively.
At inference time, they generate samples by iterative simulation, which requires multiple inference steps~\cite{ma2025learningstraightflowsvariational}.
This iterative procedure also provides an important test-time scaling capability: using more inference steps can often lead to better generation quality, stronger controllability, and improved distribution coverage.
In contrast, Drift Models learn to transport a source distribution directly to a target distribution through a pushforward map that evolves during training~\cite{deng2026generative,he2026sinkhorn,lai2026unified}.
Such models enable efficient one-step generation, but they lack a natural mechanism to improve results by increasing inference steps at test time.

To address these limitations, this paper proposes \textbf{Drift Flow Matching (DFM)}, a framework that equips Drift Models~\cite{deng2026generative,he2026sinkhorn,lai2026unified} with test-time scaling capability.
The key idea is to learn a two-time transport model that can move samples from any current time $t$ to any future time $r>t$ along a Flow Matching probability path from $p_0$ to $p_1$. 
However, the learning principle is different: rather than regressing a pointwise conditional velocity target, for each sampled time pair $(t,r)$, it transports the intermediate state $X_t$ from its corresponding marginal $p_t$ to a predicted distribution $q^\theta_{t,r}$ and applies drift supervision that pushes this predicted marginal toward the true target marginal $p_r$. 
This distinction allows DFM to interpolate between two generation regimes: With one-step inference, DFM behaves like a Drift Model and directly maps the source distribution to the target distribution. With multi-step inference, DFM progressively transports samples along intermediate states, thereby inheriting the test-time scaling behavior of Flow Matching and Diffusion~\cite{ma2025scaling,singhal2025general,li2025reflect}. 
Figure~\ref{fig:illustration} illustrates the relationship between \textbf{DFM} and prior work.
Extensive experiments across different tasks and datasets demonstrate the effectiveness of DFM.
Our results show that DFM provides a flexible trade-off between sampling speed and generation quality, bridging efficient one-step Drift Models and iterative Flow Matching methods.

\section{Related Works} \label{sec:rw}

\textbf{Flow Matching and Diffusion.}
An important direction in Flow Matching~\cite{lipman2023flow, albergo2023building, esser2024scaling} and Diffusion models~\cite{albergo2023stochastic, cai2025diffusion, song2021scorebased} is to reduce the number of inference steps while retaining the quality gains of iterative sampling~\cite{lipman2024flow,lai2025principlesdiffusionmodels,ma2025learningstraightflowsvariational,liu2023flow}. Consistency Models~\cite{song2023consistency, geng2025consistency, kim2024consistency, lu2025simplifying, song2024improved, frans2025one} impose consistency constraints along the generation trajectory, while Mean Velocity Models~\cite{geng2025mean, zhang2025alphaflow, hu2025cmt, geng2025improved, lu2026one, ma2026transition} optimize objectives derived from the relationship between instantaneous velocity and mean velocity to enable one- or few-step generation. However, these approaches can be limited by instability in self-distillation~\cite{sabour2025align,zheng2025large,zhang2026t3d,wang2024rectified, zhang2025towards, lee2024improving, geng2023onestep, salimans2022progressive} and may generalize poorly across different tasks and model scales~\cite{sheng2026mp1,wang2026one,li2025meanaudio}. In contrast, our proposed method introduces the optimization perspective of Drift Models~\cite{deng2026generative,he2026sinkhorn,lai2026unified}, inheriting their strong one-step generation capability while preserving the test-time scaling behavior~\cite{ma2025scaling,singhal2025general,li2025reflect} and connection to Flow Matching and Diffusion models.

\textbf{Drift Models.}
Drift Models~\cite{deng2026generative,he2026sinkhorn,lai2026unified} shift multi-step generation from inference time to training time, thereby naturally enabling one-step generation at inference time. To this end, they introduce a drift field that transports the model-generated distribution toward the target distribution~\cite{deng2026generative,he2026sinkhorn,lai2026unified}. Several recent works have developed more principled designs for drift fields~\cite{he2026sinkhorn} or established connections between Drift Models and Score-Based Models~\cite{lai2026unified}. In contrast, our proposed method aims to build a framework that unlocks the test-time scaling capability of drift-based methods. It preserves the one-step generation ability of Drift Models while inheriting the test-time scaling potential of Flow Matching and Diffusion Models~\cite{ma2025scaling,singhal2025general,li2025reflect}, which is particularly advantageous for controlled or guided generation~\cite{christopher2025constrained}. Our formulation is also orthogonal to the particular drift-field instantiation: it only requires a valid drift signal between the current model distribution and the target marginal, so it can be combined with stronger drift constructions such as Sinkhorn-based variants~\cite{he2026sinkhorn}.

\section{Preliminaries} \label{sec:prelim}

\subsection{Flow Matching}
\label{pre:sec:fm}

Flow Matching constructs a continuous probability path $\{p_t\}_{t\in[0,1]}$ that transports a source distribution $p_0$ to a target distribution $p_1$. Let $X_0\sim p_0$ denote a source sample, e.g., Gaussian noise, and let $X_1\sim p_1$ denote a target sample, e.g., an image. A coupling between these endpoints is specified by a joint density $\pi$ on $\mathbb{R}^d\times\mathbb{R}^d$ whose marginals are $p_0$ and $p_1$. In this work, we follow the standard Flow Matching setting and use the independent coupling: $\pi(x_0,x_1)=p_0(x_0)p_1(x_1)$.

The Flow Matching construction is often described through conditional probability paths~\cite{lipman2023flow}. Let $Z$ be a conditioning random variable that indexes such paths. Conditioned on $Z=z$, we obtain endpoint variables $(X_0^Z,X_1^Z)$ and a conditional path $\{X_t^Z\}_{t\in[0,1]}$ with conditional density $p_{t\mid Z}(\cdot\mid z)$. The corresponding marginal path $\{X_t\}_{t\in[0,1]}$ has density $p_t(\cdot)$ obtained by marginalizing over $Z$:
\begin{equation}
\label{eq:bayes_xt}
p_t(x_t)
=
\int p_{t\mid Z}(x_t\mid z)\,p_Z(z)\,dz,
\qquad
X_t\sim p_t,\quad
X_t\mid (Z=z)\sim p_{t\mid Z}(\cdot\mid z).
\end{equation}
Thus, conditional paths describe simpler transports indexed by $Z$, while the marginal path is their mixture.

We consider a general interpolant between the marginal endpoints $X_0$ and $X_1$, defined by scalar schedules $\alpha,\beta:[0,1]\to\mathbb{R}$:
\begin{equation}
\label{eq:generalInterpolant}
X_t
=
\alpha(t)X_0+\beta(t)X_1,
\qquad t\in[0,1].
\end{equation}
The schedules satisfy the boundary conditions \( \alpha(0)=1,\quad \beta(0)=0, \alpha(1)=0,\quad \beta(1)=1\),
so that $X_{t=0}=X_0$ and $X_{t=1}=X_1$. Notice also that when $\alpha$ and $\beta$ are differentiable, the path has instantaneous velocity \( \frac{d}{dt}X_t = \dot{\alpha}(t)X_0+\dot{\beta}(t)X_1. \)
The same construction applies conditionally: given $Z=z$, 
\( X_t^Z = \alpha(t)X_0^Z+\beta(t)X_1^Z\), and  \( \frac{d}{dt}X_t^Z = \dot{\alpha}(t)X_0^Z+\dot{\beta}(t)X_1^Z\).

\textbf{Flow Matching vector fields~\cite{lipman2023flow}.}
Let $v(x_t,t\mid z)\in\mathbb{R}^d$ denote a \emph{conditional} velocity field that transports the conditional density $p_{t\mid Z}(\cdot\mid z)$ along time.
The corresponding \emph{marginal} velocity field is defined by conditional expectation:
\begin{equation}
\label{eq:marginalV}
v(x_t,t)
\;=\;
\int v(x_t,t\mid z)\,p_{Z\mid t}(z\mid x_t)\,dz
\;=\;
\mathbb{E}\!\left[\,v(X_t,t\mid Z)\,\middle|\,X_t=x_t\,\right].
\end{equation}
A marginal trajectory, i.e., Flow Matching generation trajectory, follows the ODE:
\begin{equation}
\label{eq:ode_marginal}
\frac{d x_t}{dt} = v(x_t,t), \qquad t\in[0,1],
\end{equation}
and similarly a conditional trajectory follows $\frac{d x_t^z}{dt}=v(x_t,t\mid z)$.

For brevity, the main paper includes only the material needed to introduce Drift Flow Matching. A more complete presentation of Flow Matching is provided in Appendix~\S~\ref{proof:sec:fm}.

\subsection{Drift Method}
\label{pre:sec:drift}

\textbf{Drift Velocity Field.}
Drift methods define a distribution-level update that moves a current model distribution $q$ toward a fixed target distribution $p$ on $\mathbb{R}^d$~\cite{deng2026generative,he2026sinkhorn,lai2026unified}. The update is specified by a drift field
\(V_{q,p}:\mathbb{R}^d\to\mathbb{R}^d\), which decomposes into an attraction toward the target distribution and a self-correction term associated with the current model distribution:
\begin{equation}
V_{q,p}(x) = V_p^{+}(x) - V_q^{-}(x),
\label{eq:drift-velocity-field}
\end{equation}
where each term is a kernel-weighted (and normalized) average of displacement vectors~\cite{deng2026generative,he2026sinkhorn,lai2026unified}. Concretely, let $k : \mathbb{R}^d \times \mathbb{R}^d \to \mathbb{R}_{+}$ be a positive kernel, and define the normalizing factors
$
Z_p(x) := \int k(x,y)\,dp(y), \ \
Z_q(x) := \int k(x,y)\,dq(y)
$.
The Drift update is then given by
\begin{equation}
V_p^{+}(x) := \frac{1}{Z_p(x)} \int k(x,y)(y-x)\,dp(y),
\qquad
V_q^{-}(x) := \frac{1}{Z_q(x)} \int k(x,y)(y-x)\,dq(y).
\label{eq:drift-positive-negative-components}
\end{equation}
Intuitively, $V_p^{+}(x)$ attracts $x$ toward the target $p$, while $V_q^{-}(x)$ subtracts an analogous attraction toward the current model $q$, producing a repulsive (self-correction) effect~\cite{deng2026generative,he2026sinkhorn}.

In this work, we use a Gibbs kernel
\(k(x,y)=\exp\!\left(-\frac{C(x,y)}{\tau}\right),\)
where $C(x,y)$ is a cost function and $\tau>0$ is a temperature hyperparameter.\footnote{The kernel used in~\cite{deng2026generative} is $\exp(-\lVert x-y\rVert/\tau)$, whereas the kernel used in~\cite{he2026sinkhorn} is $\exp(-\frac{1}{2}\lVert x-y\rVert^2/\tau)$.}~\cite{deng2026generative,he2026sinkhorn,lai2026unified}
Unless otherwise stated, we use the squared Euclidean cost \(C(x,y)=\frac{1}{2}\lVert x-y\rVert^2.\)

The empirical form of the drift field is for minibatch training. Let \(p=\frac{1}{n}\sum_{j=1}^n \delta_{y_j}\), and \( q=\frac{1}{n}\sum_{i=1}^n \delta_{x_i}\) be empirical measures with samples $Y=\{y_j\}_{j=1}^n$ and $X=\{x_i\}_{i=1}^n$. Then, for each sample $x_i$ from the current model distribution, the discrete drift update is
\begin{equation}
\begin{aligned}
&V_{q,p}(x_i)
=
\sum_{j=1}^n P_{XY}^{\mathrm{drift}}[i,j]\,y_j
-
\sum_{j=1}^n P_{XX}^{\mathrm{drift}}[i,j]\,x_j, \\
\text{where} \quad
&P_{XY}^{\mathrm{drift}}[i,j]
:=
\frac{k(x_i,y_j)}{\sum_{l=1}^n k(x_i,y_l)},
\qquad
P_{XX}^{\mathrm{drift}}[i,j]
:=
\frac{k(x_i,x_j)}{\sum_{l=1}^n k(x_i,x_l)}.
\label{eq:discrete-drift-update}
\end{aligned}
\end{equation}
Thus, the empirical drift moves each generated sample toward a kernel-weighted average of target samples while subtracting a kernel-weighted average of generated samples.

\textbf{Drift generative model.}
Drift generative models define a one-step generative model~\cite{deng2026generative,he2026sinkhorn,lai2026unified}:
$
f_\theta : \mathbb{R}^d \to \mathbb{R}^d,
\epsilon \mapsto x_\theta := f_\theta(\epsilon)
$,
where $\epsilon \sim p_\epsilon$ for some prior distribution $p_\epsilon$. The drift flow is the probability path
$
q_\theta := (f_\theta)_{\#}p_\epsilon
$
generated by $V_{q_\theta,p}$~\cite{lai2026unified}:
$
\dot{x}_\theta = V_{q_\theta,p}(x_\theta),
\forall x_\theta = f_\theta(\epsilon)
$,
which is approximated by its forward Euler process in the time discretization scheme~\cite{lai2026unified}:
$x_{\theta_{k+1}} = x_{\theta_k} + V_{q_\theta,p}(x_{\theta_k})$.
The training loss is then set to be
\begin{equation}
\mathcal{L}^{\mathrm{drift}}
:=
\frac{1}{2}\mathbb{E}_{\epsilon}
\left[
\left\|
f_\theta(\epsilon)
-
\operatorname{sg}\!\left(
f_\theta(\epsilon)+V_{q_\theta,p_{\mathrm{data}}}(f_\theta(\epsilon))
\right)
\right\|^2
\right],
\label{eq:drift-training-loss}
\end{equation}
where $\operatorname{sg}$ is the stop-gradient operator, and the optimization scheme is gradient descent:
\begin{equation}
\theta \leftarrow \theta - \eta \nabla_\theta \mathcal{L}^{\mathrm{drift}},
\qquad
\nabla_\theta \mathcal{L}^{\mathrm{drift}}
=
-\mathbb{E}_{\epsilon}
\left[
J_f(\theta,\epsilon)^\top
V_{q_\theta,p_{\mathrm{data}}}(f_\theta(\epsilon))
\right],
\label{eq:drift-gradient-update}
\end{equation}
where $\eta$ is the learning rate, and $J_{f}(\theta,\epsilon) \in \mathbb{R}^{d \times \dim(\theta)}$ denotes the Jacobian. It can be verified that the above parameter update~\eqref{eq:drift-gradient-update} induces the drift flow $\dot{x}_\theta = V_{q_\theta,p}(x_\theta)$~\cite{he2026sinkhorn}.

\begin{remark}[Identifiability of the drift objective]
\label{remark:drift_identifiability}
The drifting objective Eq.~\eqref{eq:drift-training-loss} minimizes the magnitude of the drift field, i.e., $\|V_{q,p}\|^2$.
By construction, the equilibrium condition $q=p$ implies $V_{q,p}(x)=0$ for all $x$~\cite{deng2026generative,he2026sinkhorn,lai2026unified}.
However, the converse implication does not hold in general and depends on the choice of kernel and the distribution family.
Under suitable non-degeneracy conditions (e.g., sufficiently expressive kernels or feature representations), the condition $V_{q,p} \approx 0$ can serve as a useful surrogate for distribution matching, in the sense that it encourages $q \approx p$. This heuristic is supported both empirically and by recent theoretical analyses of drifting-based methods~\cite{deng2026generative,he2026sinkhorn,lai2026unified}.
We note that stronger drift constructions can improve identifiability by making the zero-drift condition more directly correspond to distribution matching~\cite{he2026sinkhorn,lai2026unified}. Our formulation is compatible with such improvements.
\end{remark}

\section{Drift Flow Matching}
\label{sec:dfm}

Building on the conditional-path construction of Flow Matching and the distribution-level drift update introduced in Sec.~\ref{sec:prelim}, this section introduces \textbf{Drift Flow Matching (DFM)}, a two-time transport model trained with a drift-based objective on marginal pairs $(p_t, p_r)$. This formulation enables direct generation from any state $x_t$ to any state $x_r$ with an arbitrary step size $(r - t)$, generalizing Flow Matching through a drifting mechanism.


\subsection{Two-Time Marginal Construction}
Let
$
\Delta := \{(t,r)\in[0,1]^2 : 0 \le t \le r \le 1\}
$
and let $(t,r)\sim \rho$ be a sampling distribution on $\Delta$ (e.g., uniform on $\Delta$ or logit-normal (lognorm) on $\Delta$).
We adopt the same coupling $\pi(x_0,x_1)$ and conditional-path construction as in Flow Matching~\cite{lipman2023flow,lipman2024flowmatchingguidecode}, shown in Sec~\ref{pre:sec:fm}.
Let the conditional variable $Z \sim \pi(x_0,x_1)$ with independent coupling $\pi(x_0,x_1)=p_0(x_0)p_1(x_1)$, therefore:
$
Z=(X_0,X_1),
X_0\sim p_0,X_1\sim p_1
$. 
Note that other coupling strategies in Flow Matching are compatible with the proposed method~\cite{albergo2024stochastic,silvestri2025vct,pooladian2023multisample,zhang2025hierarchical}.
To maintain the same general interpolant form as in Flow Matching, given in Eq.~\eqref{eq:generalInterpolant}, we define the conditional paths as:
\begin{equation}
X_t^Z = \alpha(t)X_0^Z + \beta(t)X_1^Z,
\qquad
X_r^Z = \alpha(r)X_0^Z + \beta(r)X_1^Z.
\label{eq:dfm_conditional_pair}
\end{equation}
Conditioned on $Z=z$, we obtain conditional coupling $(X_0^Z, X_1^Z)$ and conditional states $\{X_t^Z\}_{t\in[0,r)}$ and $\{X_r^Z\}_{r\in(t,1]}$ within conditional distributions $p_{t\mid Z}(\cdot\mid z)$ and $p_{r\mid Z}(\cdot\mid z)$ respectively.
Marginalizing over $Z$ yields the marginal distributions $p_t(x_t)$ and $p_r(x_r)$:
\begin{equation}
p_t(x_t)
=
\int p_{t\mid Z}(x_t\mid z)\,p_Z(z)\,dz,
\qquad
p_r(x_r)
=
\int p_{r\mid Z}(x_r\mid z)\,p_Z(z)\,dz.
\label{eq:dfm_two_marginals}
\end{equation}
Therefore, by sampling multiple realizations of $Z$ and evaluating Eq.~\eqref{eq:dfm_conditional_pair}, we obtain direct samples $\{X_t\}_{t\in[0,r)}$ and $\{X_r\}_{r\in(t,1]}$ from the marginal distributions $p_t$ and $p_r$. These together constitute the marginal distribution pair $(p_t, p_r)$ required by the proposed \textbf{DFM}.
It is worth noting that sampling multiple realizations of $Z$ is equivalent to standard mini-batch sampling used in model optimization frameworks (e.g., SGD~\cite{bottou2012stochastic}), including but not limited to Flow/Diffusion models~\cite{lipman2023flow,lipman2024flowmatchingguidecode,lai2025principlesdiffusionmodels,karras2022elucidating}, and Drifting Models~\cite{deng2026generative,he2026sinkhorn,lai2026unified}.
The difference between Flow Matching~\cite{lipman2023flow,lipman2024flowmatchingguidecode} and \textbf{DFM} is therefore \emph{not} the interpolant itself, but the learning principle: Flow Matching regresses a pointwise conditional target, as shown in Appendix~\S~\ref{proof:sec:fm}, whereas \textbf{DFM} directly learns the marginal distribution transport between $p_t$ and $p_r$.

\begin{proposition}[Wasserstein control of DFM marginal pairs]
\label{prop:dfm_main_w2_control}
For any \((t,r)\in\Delta\), the conditional construction in Eq.~\eqref{eq:dfm_conditional_pair} induces the marginal distribution pair $(p_t,p_r)$. Consequently,
\[
W_2^2(p_t,p_r)
\le
\mathbb{E}\!\left[
\|(\alpha(r)-\alpha(t))X_0+(\beta(r)-\beta(t))X_1\|^2
\right].
\]
In the linear-interpolant case, namely \(\alpha(t)=1-t\) and \(\beta(t)=t\), this gives
\[
W_2(p_t,p_r)
\le
|r-t|
\Big(\mathbb{E}\|X_1-X_0\|^2\Big)^{1/2}.
\]
Moreover, for any grid \(0=t_0<\cdots<t_N=1\), the linear-interpolant case implies
\[
\sum_{m=0}^{N-1}
\frac{W_2^2(p_{t_m},p_{t_{m+1}})}
{t_{m+1}-t_m}
\le
\mathbb{E}\|X_1-X_0\|^2.
\]
\end{proposition}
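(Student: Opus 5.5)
The plan is to exhibit an explicit coupling of $p_t$ and $p_r$ and then use the definition of $W_2$ as an infimum over couplings.

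Fix $(t,r)\in\Delta$ and draw $Z=(X_0,X_1)\sim\pi=p_0\otimes p_1$. Set $X_t=\alpha(t)X_0+\beta(t)X_1$ and $X_r=\alpha(r)X_0+\beta(r)X_1$ as in Eq.~\eqref{eq:dfm_conditional_pair}.

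First, by the marginalization in Eq.~\eqref{eq:dfm_two_marginals}, $X_t\sim p_t$ and $X_r\sim p_r$. The conditional law given $Z=z$ is a Dirac mass at the interpolant, so integrating over $p_Z$ gives exactly the marginals. Hence the joint law $\gamma$ of $(X_t,X_r)$ lies in $\Pi(p_t,p_r)$. I will assume $X_0,X_1$ have finite second moments, which is implicit in the statement since otherwise the right-hand side is infinite and the bound is trivial.

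Second, the definition of $W_2$ gives
\[
W_2^2(p_t,p_r)\le \mathbb{E}_\gamma\|X_r-X_t\|^2=\mathbb{E}\|(\alpha(r)-\alpha(t))X_0+(\beta(r)-\beta(t))X_1\|^2,
\]
which is the first claim.

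For the linear interpolant, $\alpha(r)-\alpha(t)=-(r-t)$ and $\beta(r)-\beta(t)=r-t$. The integrand is therefore $(r-t)^2\|X_1-X_0\|^2$, and taking square roots gives the second claim.

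For the grid bound, apply the squared version on each interval $[t_m,t_{m+1}]$:
\[
W_2^2(p_{t_m},p_{t_{m+1}})\le (t_{m+1}-t_m)^2\,\mathbb{E}\|X_1-X_0\|^2 .
\]
Dividing by $t_{m+1}-t_m>0$ gives $(t_{m+1}-t_m)\,\mathbb{E}\|X_1-X_0\|^2$. Summing over $m$, the increments telescope to $t_N-t_0=1$, which yields the third claim.

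There is no genuine obstacle here. The only point that needs care is the first step: justifying that the synchronous coupling built from the same $Z$ has the correct marginals. That follows directly from Eq.~\eqref{eq:dfm_two_marginals}, or equivalently from the fact that the pushforward of $\pi$ under $(x_0,x_1)\mapsto \alpha(s)x_0+\beta(s)x_1$ is $p_s$. Measurability of these linear maps is immediate.
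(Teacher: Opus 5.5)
Your proposal is correct and matches the paper's proof. The paper likewise uses the synchronous coupling $\gamma_{t,r}=\mathrm{Law}(X_t^Z,X_r^Z)\in\Pi(p_t,p_r)$ built from the same endpoint pair $Z=(X_0,X_1)$, bounds $W_2^2(p_t,p_r)$ by its transport cost, and specializes to $X_r^Z-X_t^Z=(r-t)(X_1-X_0)$ for the linear interpolant; it then divides by $t_{m+1}-t_m$ and sums using $\sum_m(t_{m+1}-t_m)=1$, and your finite-second-moment caveat is harmless but not needed, since the first inequality holds in $[0,\infty]$ as an infimum evaluated at one particular coupling.
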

This result shows that DFM decomposes endpoint generation into short-range marginal transport problems along a \(W_2\)-controlled path.
See Appendix~\S~\ref{app:dfm_ot} for details and proof.

\subsection{Mean-Velocity Transport Parameterization}
\textbf{DFM} parameterizes a \emph{mean velocity} field~\cite{geng2025mean, zhang2025alphaflow, hu2025cmt, geng2025improved, lu2026one, ma2026transition} to transport the current distribution $p_t$ at time step $t$ to future distribution $p_r$ at time step $r$.
\begin{equation}
x_r = x_t + \int_t^r v(x_\tau,\tau)\,d\tau = x_t + (r-t) u(x_t,t,r),
\qquad
u(x_t,t,r)
:=
\frac{1}{r-t}\int_t^r v(x_\tau,\tau)\,d\tau,
\label{eq:dfm_meanV}
\end{equation}
where \( v(x_\tau,\tau) \) denotes the instantaneous velocity solving the ODE $\frac{d x_\tau}{d\tau} = v(x_\tau,\tau), \tau \in [t,r]$, which transports the distribution from \( p_t \) to \( p_r \).
For example, \( v(x_\tau,\tau) \) can be the marginal velocity in Flow Matching at time step \( \tau \), as shown in Eq.~\eqref{eq:marginalV}, or the drift velocity (analogous to Eq.~\eqref{eq:drift-velocity-field}), as discussed in the next paragraph.
\textbf{DFM} parameterizes the \emph{mean velocity} field $u$ by parameter $\theta$:
$
u^\theta(x_t,t,r) : \mathbb{R}^d \times \Delta \to \mathbb{R}^d
$,
and the associated transport map is therefore:
\begin{equation}
T_{t,r}^\theta(x_t)
:=
x_t + (r-t)\,u^\theta(x_t,t,r).
\label{eq:dfm_transport_map}
\end{equation}
When $X_t\sim p_t$, the model induces the predicted state:
$
\widehat X_r^\theta := T_{t,r}^\theta(X_t), \ \
q_{t,r}^\theta := (T_{t,r}^\theta)_{\#}p_t,
$
where $q_{t,r}^\theta$ is the model distribution at time $r$ obtained by transporting $p_t$ through $T_{t,r}^\theta$.


Thus, since \textbf{DFM} adopts the same probability path construction Eq.~\eqref{eq:bayes_xt} as Flow Matching, the marginal pair $(p_t,p_r)$ lies on the same Flow Matching probability path~\cite{lipman2023flow}. 
In particular, if the learned map \(T_{t,r}^\theta\) Eq.~\eqref{eq:dfm_transport_map} is first-order consistent with the canonical local flow map $\Phi_{t,r}(x_t) = x_t+\int_t^r v(x_\tau,\tau)\,d\tau$ induced by the marginal Flow Matching ODE Eq.~\eqref{eq:ode_marginal}, then
\begin{equation}
\lim_{r\to t}u^\theta(x_t,t,r)=v(x_t,t),
\label{eq:dfm_infinitesimal_limit}
\end{equation}
where \(v(x_t,t)\) is the marginal Flow Matching velocity in Eq.~\eqref{eq:marginalV}. Hence, \textbf{DFM} recovers standard Flow Matching in the infinitesimal-step limit; see Appendix~\S~\ref{app:dfm_infinitesimal_limit} for the details.

\subsection{Drift-Based Supervision and Training}
Unlike Flow Matching~\cite{lipman2023flow,lipman2024flowmatchingguidecode}, which relies on the conditional--marginal gradient equivalence in Theorem~\ref{proof:thm:MequivC_FM}, \textbf{DFM} is trained directly on the marginal distribution transport problem from $p_t$ to $p_r$:
$
q_{t,r}^\theta \rightarrow p_r
$.
For a fixed pair $(t,r)$, we define the Drift field $V_{q_{t,r}^\theta,p_r} : \mathbb{R}^d \to \mathbb{R}^d$ on the target-time space by adopting Eq.\eqref{eq:drift-velocity-field}:
\begin{equation}
V_{q_{t,r}^\theta,p_r}(x)
=
V_{p_r}^{+}(x)-V_{q_{t,r}^\theta}^{-}(x),
\label{eq:dfm_drift_field}
\end{equation}
where, using the same positive kernel construction $k : \mathbb{R}^d \times \mathbb{R}^d \to \mathbb{R}_{+}$ as in Eq.~\eqref{eq:drift-positive-negative-components},
\begin{equation}
V_{p_r}^{+}(x)
:=
\frac{1}{Z_{p_r}(x)}
\int k(x,y)(y-x)\,dp_r(y),
\
V_{q_{t,r}^\theta}^{-}(x)
:=
\frac{1}{Z_{q_{t,r}^\theta}(x)}
\int k(x,y)(y-x)\,dq_{t,r}^\theta(y),
\label{eq:dfm_drift_components}
\end{equation}
with the normalizers
$
Z_{p_r}(x):=\int k(x,y)\,dp_r(y), \ \
Z_{q_{t,r}^\theta}(x):=\int k(x,y)\,dq_{t,r}^\theta(y)
$.
The positive term attracts the predicted samples toward the true marginal $p_r$, while the negative term subtracts the analogous attraction toward the current model distribution $q_{t,r}^\theta$.
Therefore, $V_{q_{t,r}^\theta,p_r}$ pushes the generated distribution $q_{t,r}^\theta$ toward the target distribution $p_r$~\cite{deng2026generative,he2026sinkhorn}.

Using the stop-gradient formulation of the Drift method Eq.\eqref{eq:drift-training-loss}, where $\operatorname{sg}$ is the stop-gradient operator, we define the \textbf{DFM} objective as
\begin{equation}
\mathcal{L}_{\mathrm{DFM}}(\theta)
:=
\frac{1}{2}
\mathbb{E}_{(t,r)\sim\rho,\;X_t\sim p_t}
\left[
\left\|
\widehat X_r^\theta
-
\operatorname{sg}\!\left(
\widehat X_r^\theta
+
V_{q_{t,r}^\theta,p_r}(\widehat X_r^\theta)
\right)
\right\|^2
\right],
\qquad
\widehat X_r^\theta=T_{t,r}^\theta(X_t).
\label{eq:dfm_loss}
\end{equation}
Compared with the one-step Drift loss in Eq.~\eqref{eq:drift-training-loss}, whose expectation is taken over the noise variable $\epsilon$, the \textbf{DFM} objective is indexed by the time pair $(t,r)\sim\rho$ and by the marginal states $X_t\sim p_t$. 
Equivalently, the \textbf{DFM} training problem is defined on the marginal distribution pair $(p_t,p_r)$, rather than only on $(\epsilon, p_{\mathrm{data}})$, i.e., the \textbf{DFM} objective involves $(t,r,X_t,X_r)$ while the Drift loss in Eq.~\eqref{eq:drift-training-loss} only involves $(X_0,X_1)$, where $X_0\sim\epsilon$ and $X_1 \sim p_{\mathrm{data}}$.

The optimization scheme of \textbf{DFM} is based on gradient descent~\cite{bottou2012stochastic}: $\theta \leftarrow \theta - \eta \nabla_\theta \mathcal{L}_{\mathrm{DFM}}(\theta)$, where $\eta$ is the learning rate, and The corresponding gradient of Eq.\eqref{eq:dfm_loss} is
\begin{equation}
\nabla_\theta \mathcal{L}_{\mathrm{DFM}}(\theta)
=
-
\mathbb{E}_{(t,r)\sim\rho,\;X_t\sim p_t}
\left[
J_{T_{t,r}}(\theta,X_t)^\top
V_{q_{t,r}^\theta,p_r}\!\left(T_{t,r}^\theta(X_t)\right)
\right],
\label{eq:dfm_gradient}
\end{equation}
where $J_{T_{t,r}}(\theta,X_t)$ denotes the Jacobian of $T_{t,r}^\theta(X_t)$ with respect to $\theta$.
Hence, the parameter update induced by Eq.~\eqref{eq:dfm_loss} moves the predicted distribution $q_{t,r}^\theta$ toward the true marginal $p_r$ for each sampled pair $(t,r)$.
We refer to Appendix~\S~\ref{app:dfm_gradient_descent} for details and proof.

A crucial practical point is that each time pair $(t,r)$ induces a distinct marginal transport problem between $(p_t,p_r)$, because both marginals depend on the chosen pair $(t,r)$. Figure~\ref{fig:dfm_group_drift} illustrates this effect: even with the same endpoint distributions, different time-pairs $(t,r)$ lead to different target-time marginals $p_r$, and consequently to different model-output distributions $q_{t,r}^{\theta}$, since the same model parameters $\theta$ are evaluated on different source-time marginal inputs $p_t$. Therefore, samples associated with different pairs $(t,r)$ cannot be mixed when estimating the Drift field Eq.~\eqref{eq:dfm_drift_field}. Instead, both the Drift field Eq.~\eqref{eq:dfm_drift_field} and its corresponding loss Eq.~\eqref{eq:dfm_loss} must be evaluated \emph{group-wise}, within each time-pair bin.
\begin{figure*}[htbp]
  \centering
  \setlength{\tabcolsep}{0pt}
  \renewcommand{\arraystretch}{0.95}
  \vspace{-1em}

  \begin{tabular}{ccccc}
    \includegraphics[width=0.198\linewidth]{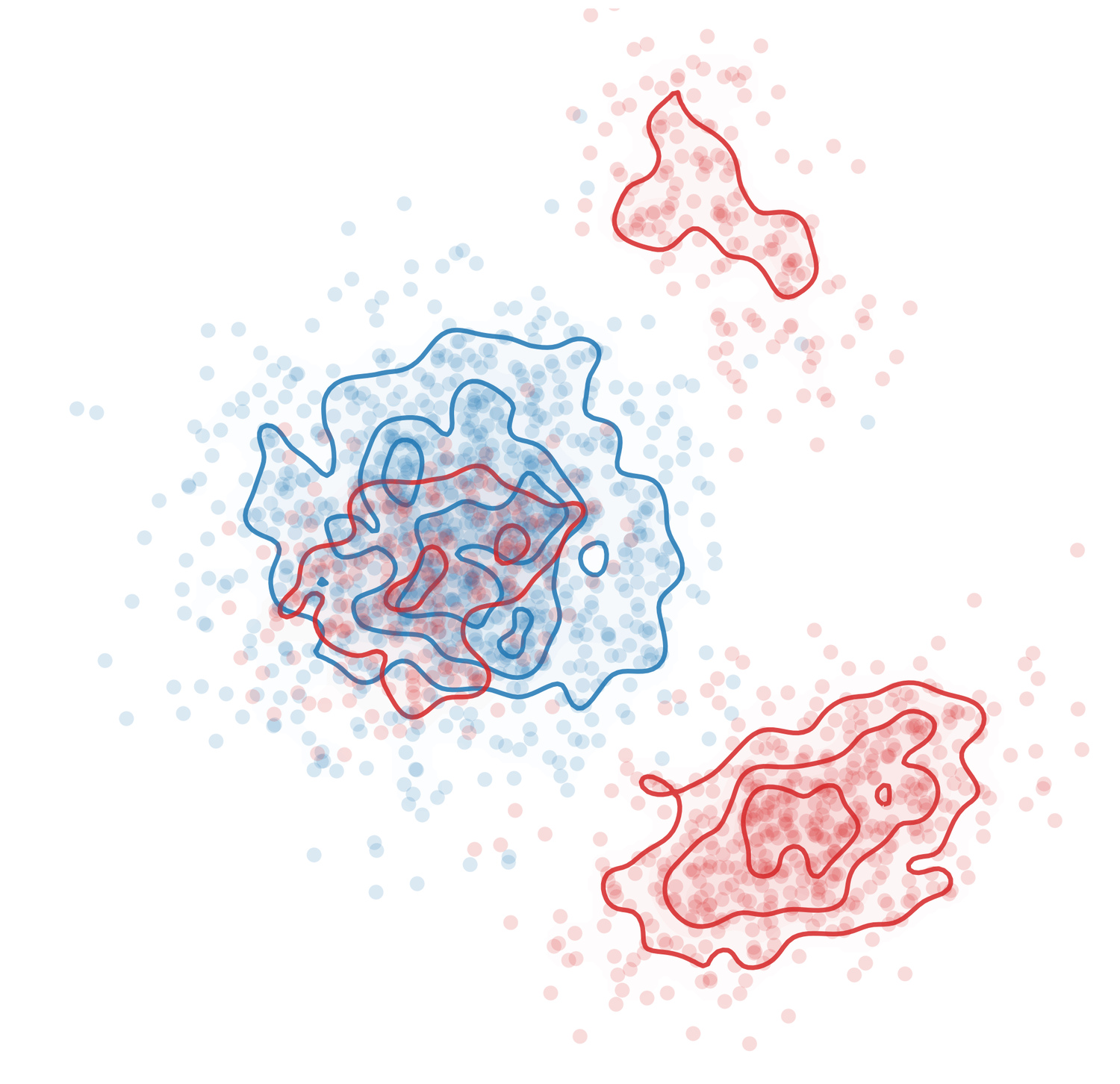} &
    \includegraphics[width=0.198\linewidth]{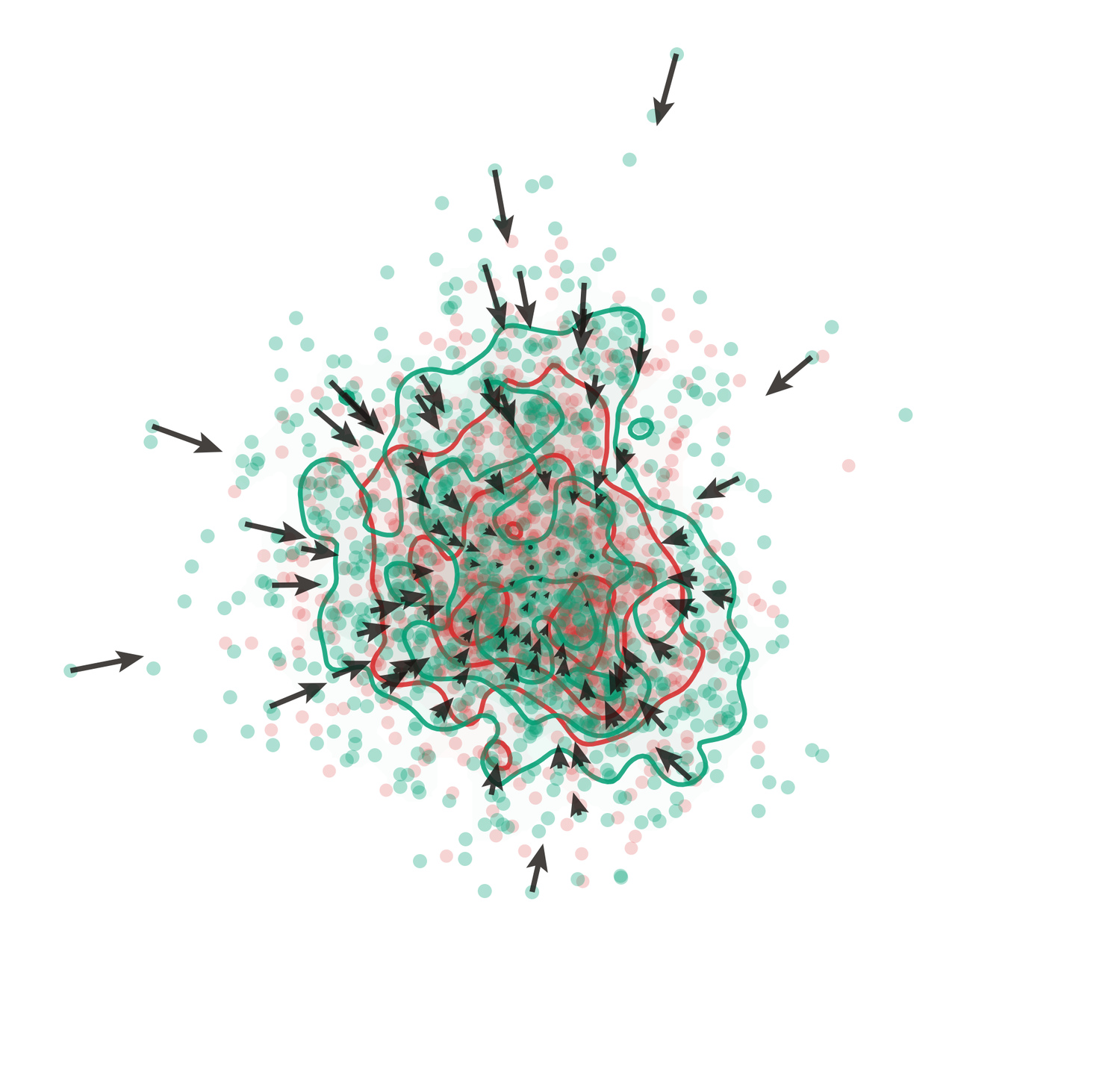} &
    \includegraphics[width=0.198\linewidth]{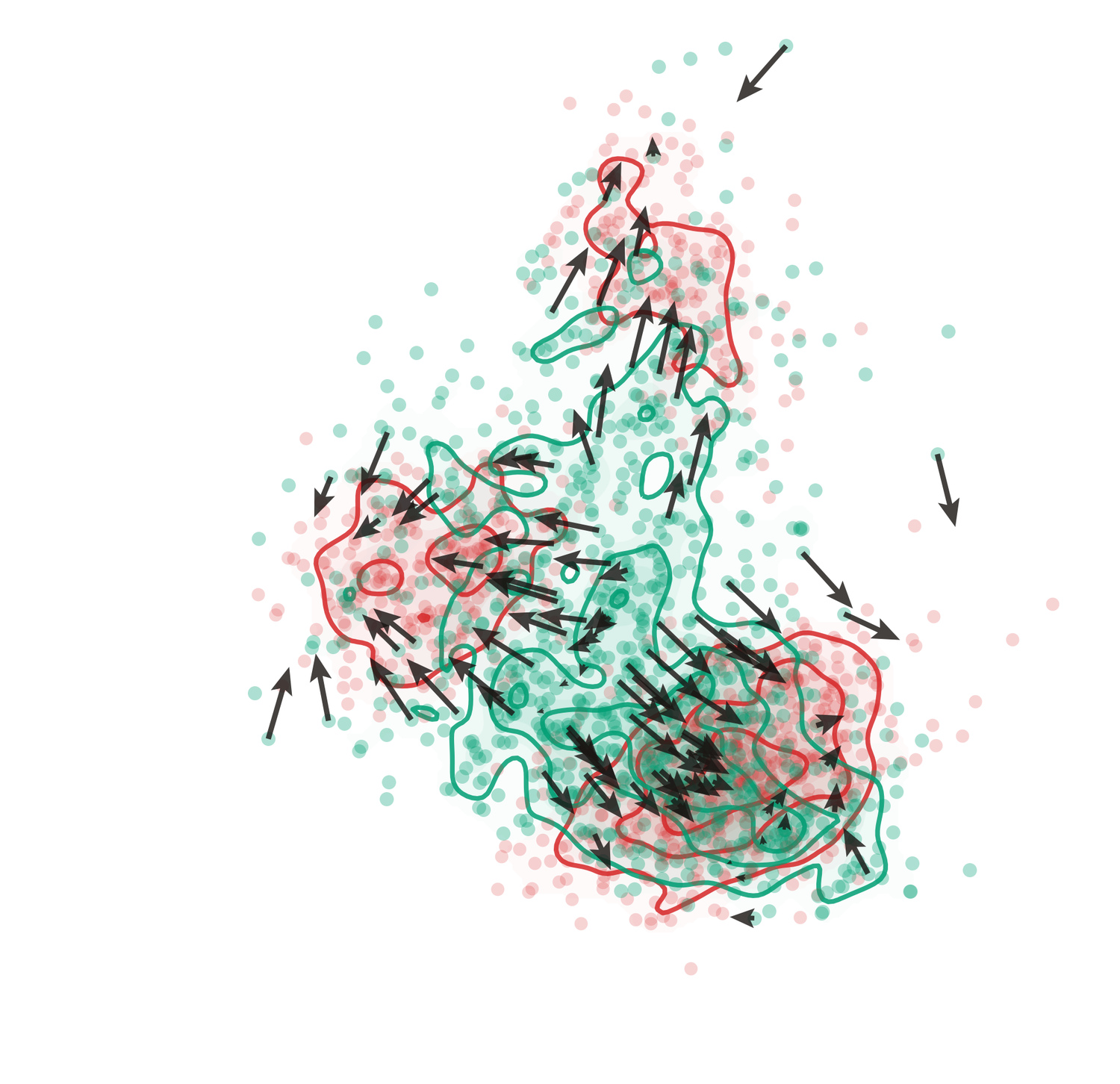} &
    \includegraphics[width=0.198\linewidth]{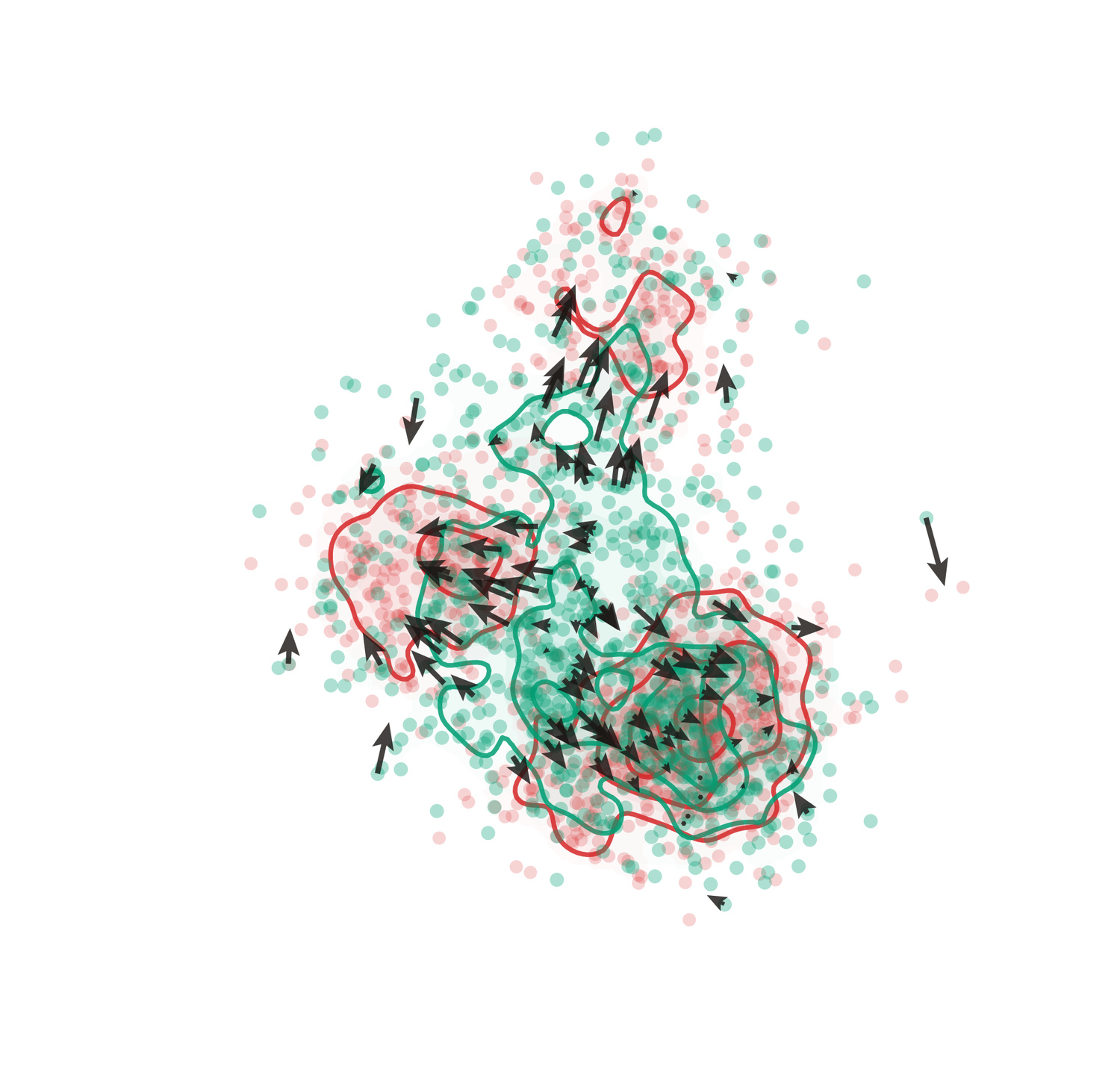} &
    \includegraphics[width=0.198\linewidth]{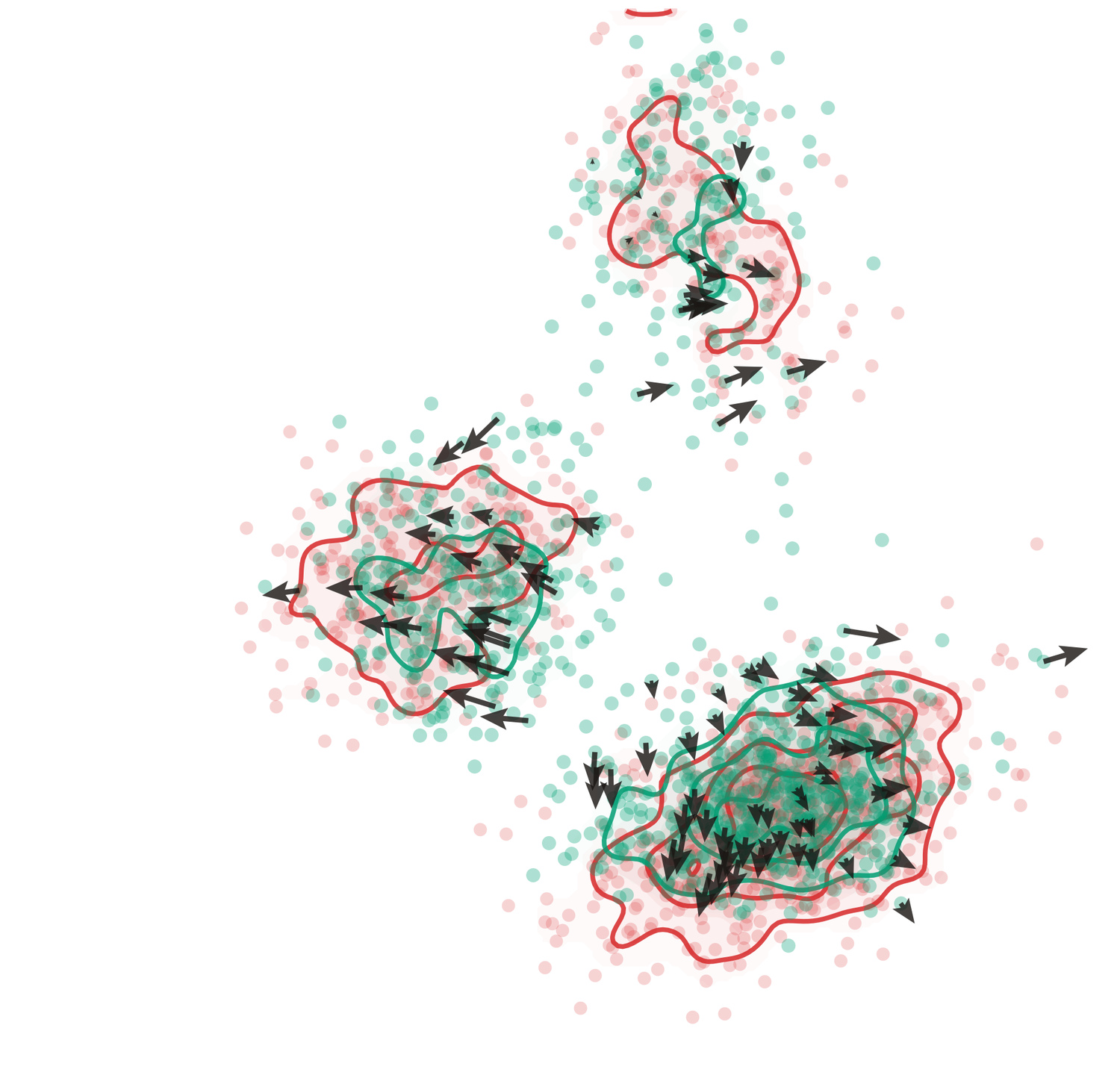} \\[-0.2em]
    {\scriptsize \makecell{Endpoint\\$p_0 \rightarrow p_1$}} &
    {\scriptsize \makecell{Group 1\\$(t,r)=(0.05,0.30)$}} &
    {\scriptsize \makecell{Group 2\\$(t,r)=(0.20,0.80)$}} &
    {\scriptsize \makecell{Group 3\\$(t,r)=(0.35,0.65)$}} &
    {\scriptsize \makecell{Group 4\\$(t,r)=(0.70,0.95)$}}
  \end{tabular}

  \vspace{-0.5em}
  \caption{\textbf{Grouped drift fields for different time-pairs.}
  The first panel shows the endpoint distributions used to construct the marginal path, with source $p_0$ in blue and target $p_1$ in red. The remaining panels visualize four different time-pair groups. In each group, red denotes the target-time marginal $p_r^{(g)}$, green denotes the model output distribution $q_{t,r}^{\theta,(g)}$ (under same model parameters $\theta$), and dark arrows denote the grouped drift field $V_{q_{t,r}^\theta,p_r}^{(g)}$.}
  \label{fig:dfm_group_drift}
  \vspace{-0.5em}
\end{figure*}

Concretely, suppose a minibatch uses $G$ sampled time pairs
$\{(t_g,r_g)\}_{g=1}^G \subset \Delta$.
For group $g$, which samples $n_g$ realizations of the conditional variable $Z=(X_0,X_1)$ shown in Eq.~\eqref{eq:dfm_conditional_pair}, let
$
\{(x_{0,i}^{(g)},x_{1,i}^{(g)})\}_{i=1}^{n_g}
\sim \pi(x_0,x_1)=p_0(x_0)p_1(x_1)
$
be endpoint samples assigned to that time pair $(t_g,r_g)$. We construct samples of time pair $(t_g,r_g)$ by evaluating conditional paths Eq.~\eqref{eq:dfm_conditional_pair}:
$
x_{t,i}^{(g)}
=
\alpha(t_g)x_{0,i}^{(g)}+\beta(t_g)x_{1,i}^{(g)}, \
x_{r,i}^{(g)}
=
\alpha(r_g)x_{0,i}^{(g)}+\beta(r_g)x_{1,i}^{(g)}
$.
The prediction of \textbf{DFM} in group $g$ parametrized by $\theta$ is then produced by Eq.\eqref{eq:dfm_transport_map}:
$
\widehat x_{r,i}^{(g)}
=
x_{t,i}^{(g)}
+
(r_g-t_g)\,u^\theta(x_{t,i}^{(g)},t_g,r_g)
$.
Let samples $X_t^{(g)}=\{x_{t,i}^{(g)}\}_{i=1}^{n_g}$, $X_r^{(g)}=\{x_{r,i}^{(g)}\}_{i=1}^{n_g}$, and $\widehat X_r^{(g)}=\{\widehat x_{r,i}^{(g)}\}_{i=1}^{n_g}$, which therefore define the empirical measures
$
p_t^{(g)}
=
\frac{1}{n_g}\sum_{i=1}^{n_g}\delta_{x_{t,i}^{(g)}}
$,
$
p_r^{(g)}
=
\frac{1}{n_g}\sum_{i=1}^{n_g}\delta_{x_{r,i}^{(g)}}
$,
$
q_{t,r}^{\theta,(g)}
=
\frac{1}{n_g}\sum_{i=1}^{n_g}\delta_{\widehat x_{r,i}^{(g)}}
$.
The discrete Drift field in group $g$, $V_{q_{t,r}^\theta,p_r}^{(g)}$, is then computed only within group $g$:
\begin{equation}
\begin{aligned}
&V_{q_{t,r}^\theta,p_r}^{(g)}(\widehat x_{r,i}^{(g)})
=
\sum_{j=1}^{n_g}
P_{\widehat X_r X_r}^{(g)}[i,j]\,
x_{r,j}^{(g)}
-
\sum_{j=1}^{n_g}
P_{\widehat X_r \widehat X_r}^{(g)}[i,j]\,
\widehat x_{r,j}^{(g)}, \\
\text{where} \quad
&P_{\widehat X_r X_r}^{(g)}[i,j]
:=
\frac{
k(\widehat x_{r,i}^{(g)},x_{r,j}^{(g)})
}{
\sum_{\ell=1}^{n_g}
k(\widehat x_{r,i}^{(g)},x_{r,\ell}^{(g)})
},
\qquad
P_{\widehat X_r \widehat X_r}^{(g)}[i,j]
:=
\frac{
k(\widehat x_{r,i}^{(g)},\widehat x_{r,j}^{(g)})
}{
\sum_{\ell=1}^{n_g}
k(\widehat x_{r,i}^{(g)},\widehat x_{r,\ell}^{(g)})
}.
\end{aligned}
\label{eq:dfm_group_drift}
\end{equation}
The group-wise loss of group $g$, which corresponds to time pair $(t_g,r_g)$, is
\begin{equation}
\mathcal{L}_{\mathrm{DFM}}^{(g)}(\theta)
=
\frac{1}{2n_g}
\sum_{i=1}^{n_g}
\left\|
\widehat x_{r,i}^{(g)}
-
\operatorname{sg}\!\left(
\widehat x_{r,i}^{(g)}
+
V_{q_{t,r}^\theta,p_r}^{(g)}(\widehat x_{r,i}^{(g)})
\right)
\right\|^2,
\label{eq:dfm_group_loss}
\end{equation}
and the minibatch objective, which is equivalent to and derived from Eq.\eqref{eq:dfm_loss}, is therefore
\begin{equation}
\mathcal{L}_{\mathrm{DFM}}(\theta)
=
\mathbb{E}_{(t,r)\sim\rho}
\big[
\mathcal{L}_{\mathrm{DFM}}^{(t,r)}(\theta)
\big]
=
\mathbb{E}_{(t,r)\sim\rho}
\big[
\mathcal{L}_{\mathrm{DFM}}^{(g)}(\theta)
\big]
=
\frac{1}{G}\sum_{g=1}^G \mathcal{L}_{\mathrm{DFM}}^{(g)}(\theta).
\label{eq:dfm_batch_loss}
\end{equation}
Notably, the loss in Eq.~\eqref{eq:dfm_group_loss} is defined over the empirical marginal distribution pair $(p_t^{(g)}, p_r^{(g)})$ within each group $g$. It does \emph{not} enforce a pointwise regression of $\widehat x_{r,i}^{(g)}$ toward its paired sample $x_{r,i}^{(g)}$. Instead, this formulation enables \textbf{DFM} to directly learn transport between the marginal distributions $(p_t^{(g)}, p_r^{(g)})$.
The overall objective in Eq.~\eqref{eq:dfm_batch_loss} is obtained by taking the expectation of the group-wise loss in Eq.~\eqref{eq:dfm_group_loss} over the set of time pairs $\{(t_g,r_g)\}_{g=1}^G$. Consequently, Eq.~\eqref{eq:dfm_batch_loss} serves as the realization of the theoretical objective in Eq.~\eqref{eq:dfm_loss}.

\subsection{Inference and Extensions}
\vspace{-0.5em}

At inference time, starting from $x_{t_0}\sim p_0$ and using a time grid
$
0=t_0<t_1<\cdots<t_N=1
$,
we sample by iterating the learned transport:
$
x_{t_{m+1}}
=
x_{t_m}
+
(t_{m+1}-t_m)\,
u^\theta(x_{t_m},t_m,t_{m+1}),
\
m=0,\dots,N-1
$.
For small step sizes (i.e $\lim_{r\to t}$ in Eq.\eqref{eq:dfm_infinitesimal_limit}), it approximates the marginal Flow Matching ODE in Eq.~\eqref{eq:ode_marginal}, see Appendix~\S~\ref{app:dfm_infinitesimal_limit} for the details; for larger step sizes, it exploits the transport learned by \textbf{DFM} to transport from $p_t$ to $p_r$.
The same transport parameterization also supports conditional generation; we defer the formulation and examples to Appendix~\S~\ref{app:conditionalG}.
Training and inference pseudocode are summarized in Appendix~\S~\ref{app:dfm_pseudocode}.

\begin{figure*}[htbp]
  \centering
  \setlength{\tabcolsep}{0pt}
  \renewcommand{\arraystretch}{0.95}
\vspace{-1em}
  \begin{tabular}{ccccccc}
    \includegraphics[width=0.142\linewidth]{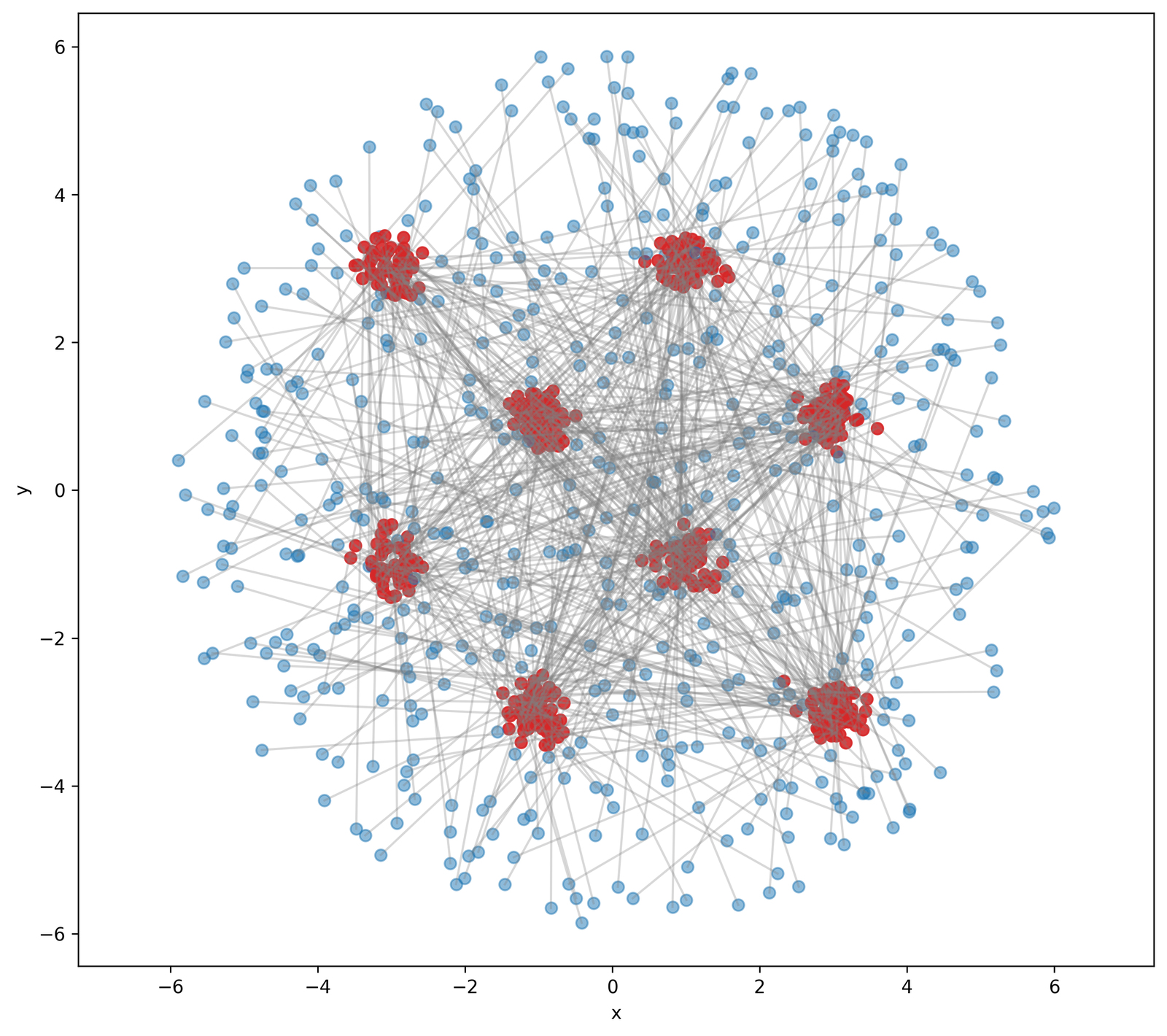} &
    \includegraphics[width=0.142\linewidth]{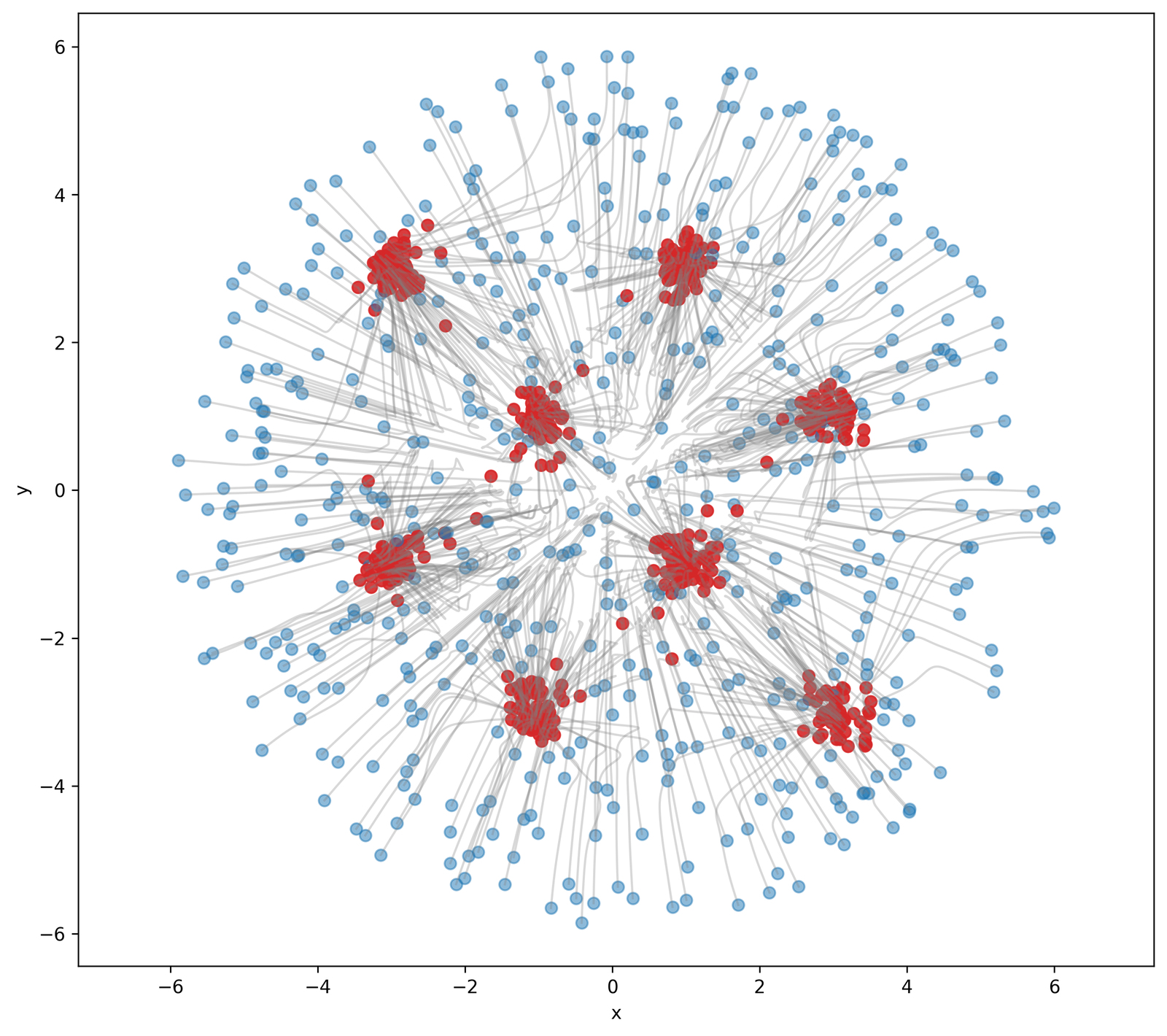} &
    \includegraphics[width=0.142\linewidth]{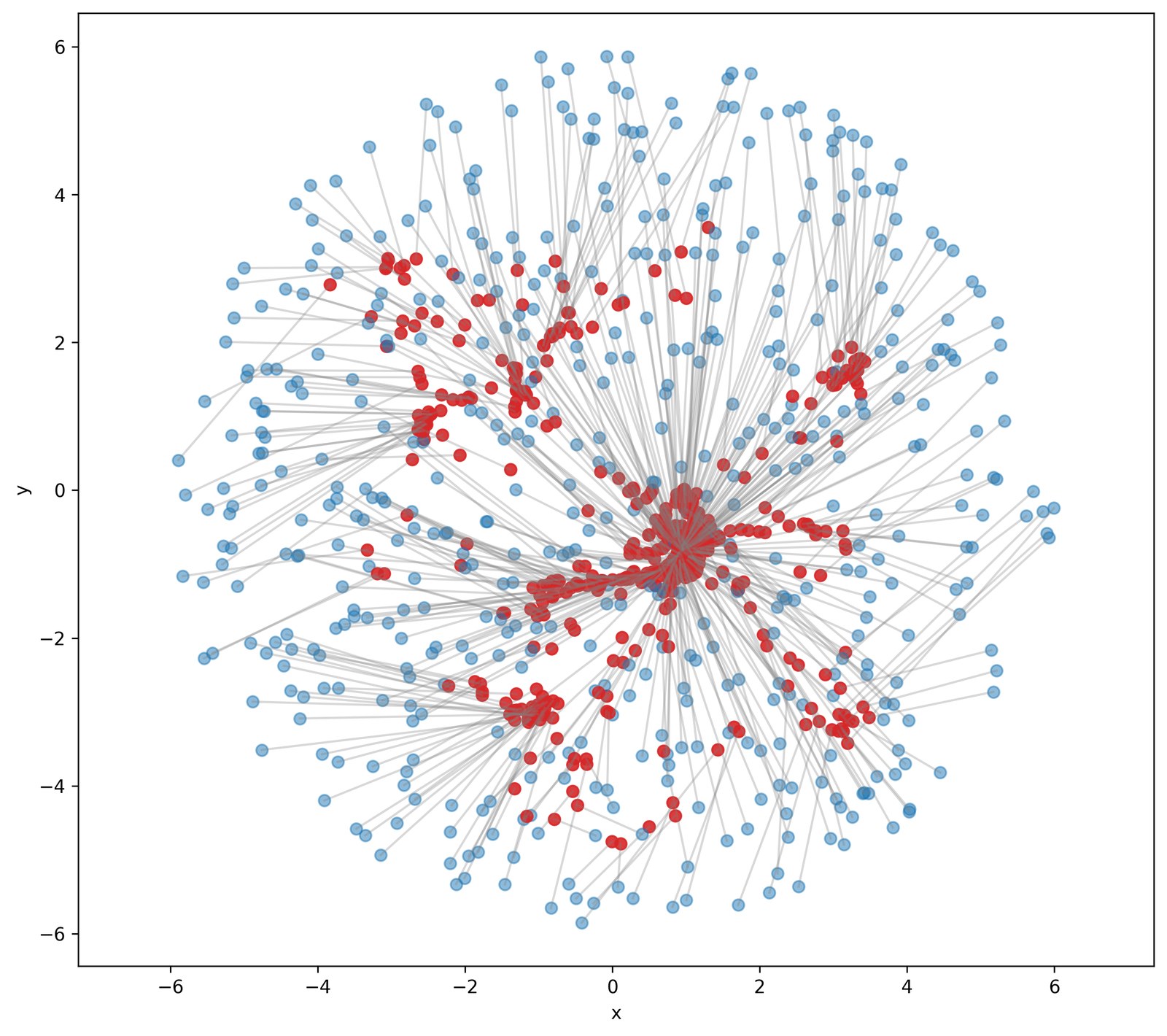} &
    \includegraphics[width=0.142\linewidth]{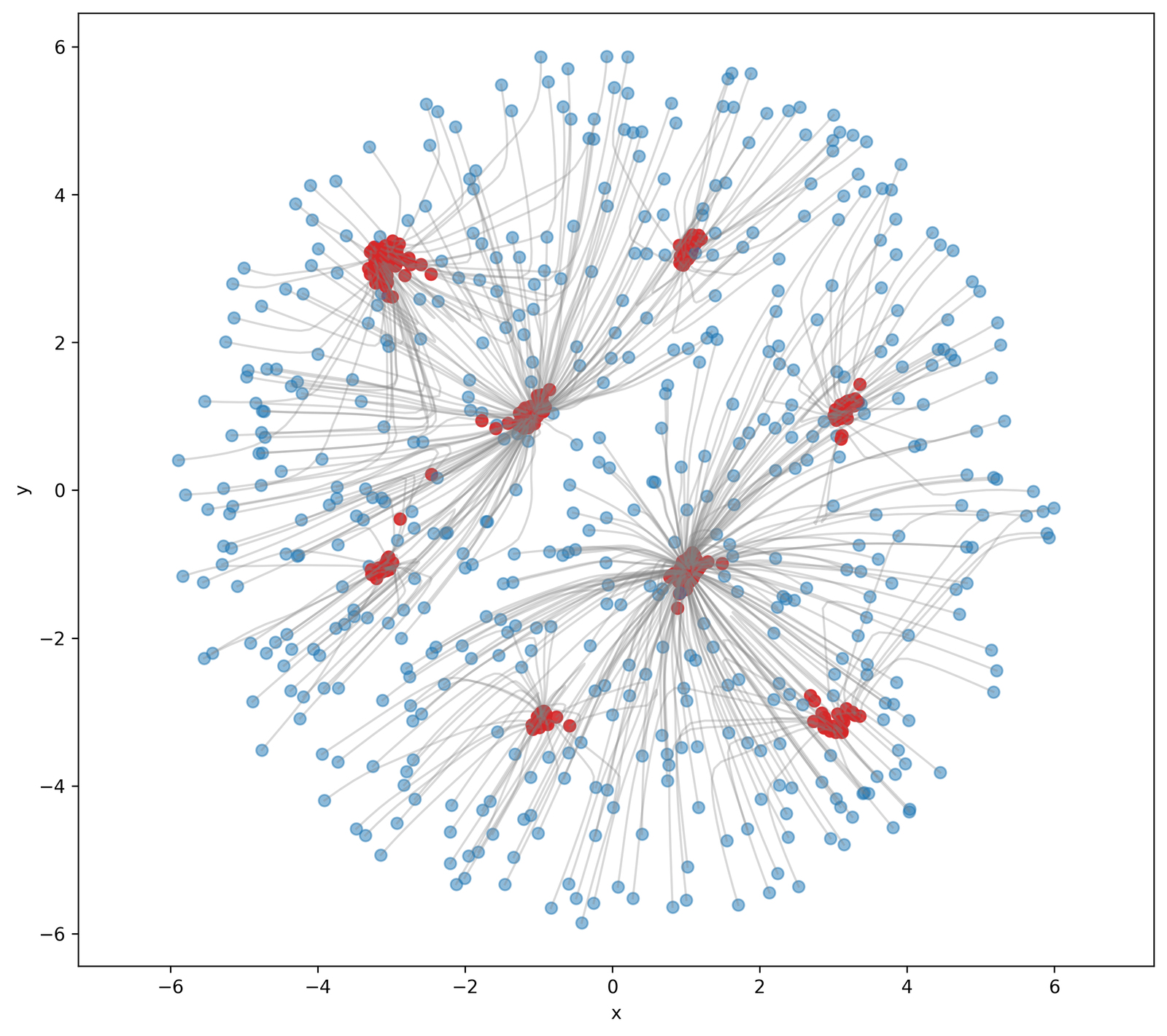} &
    \includegraphics[width=0.142\linewidth]{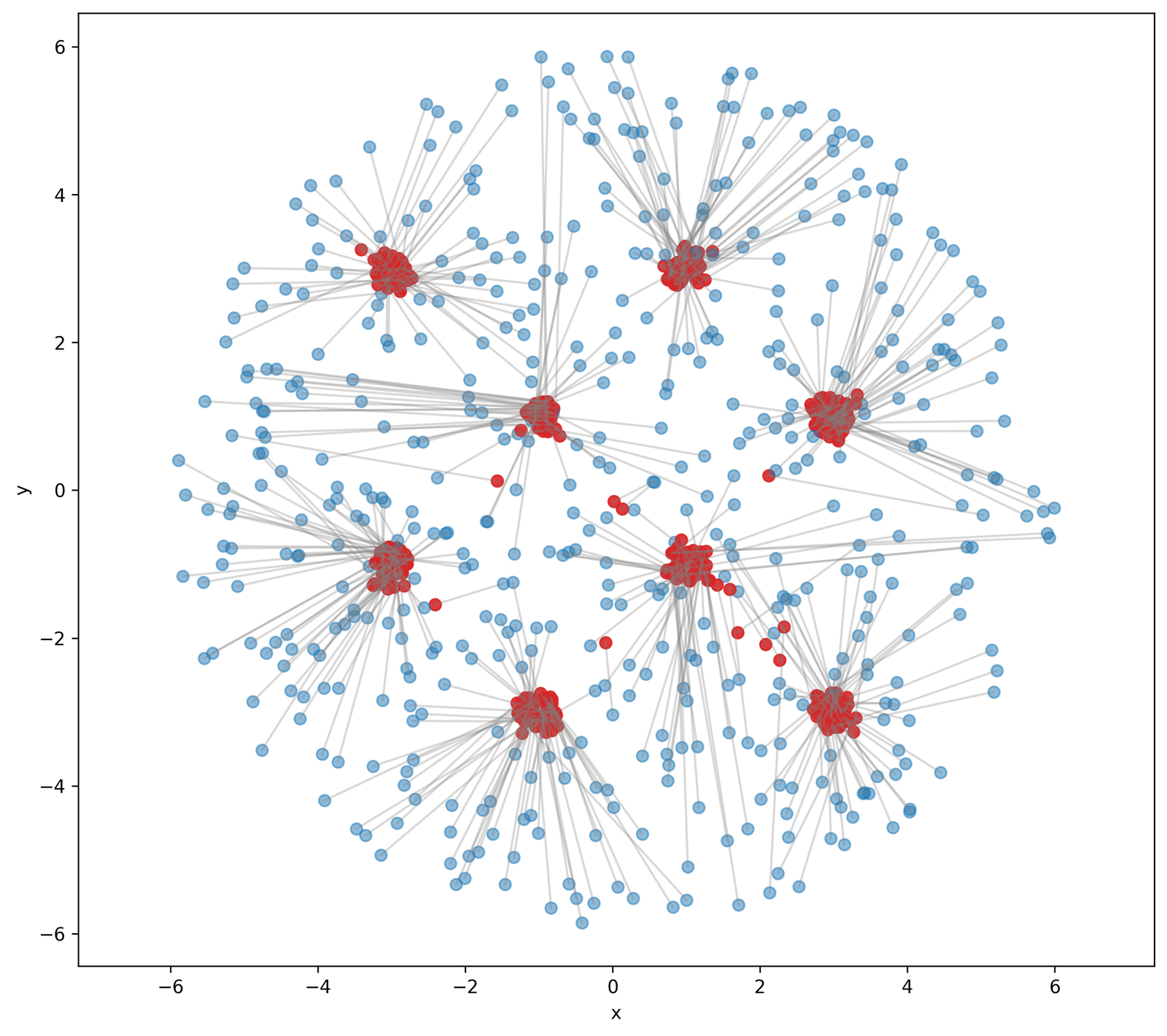} &
    \includegraphics[width=0.142\linewidth]{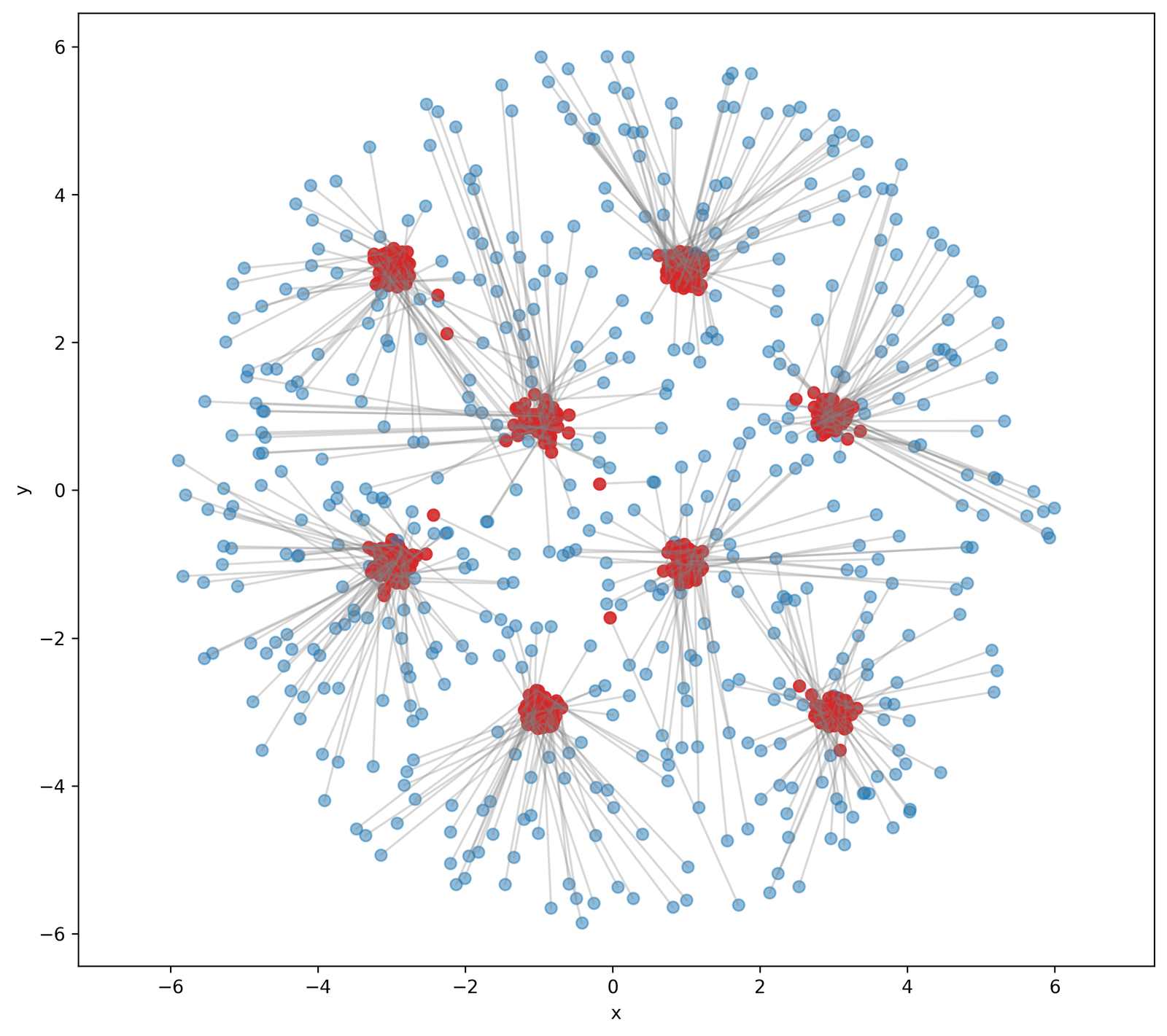} &
    \includegraphics[width=0.142\linewidth]{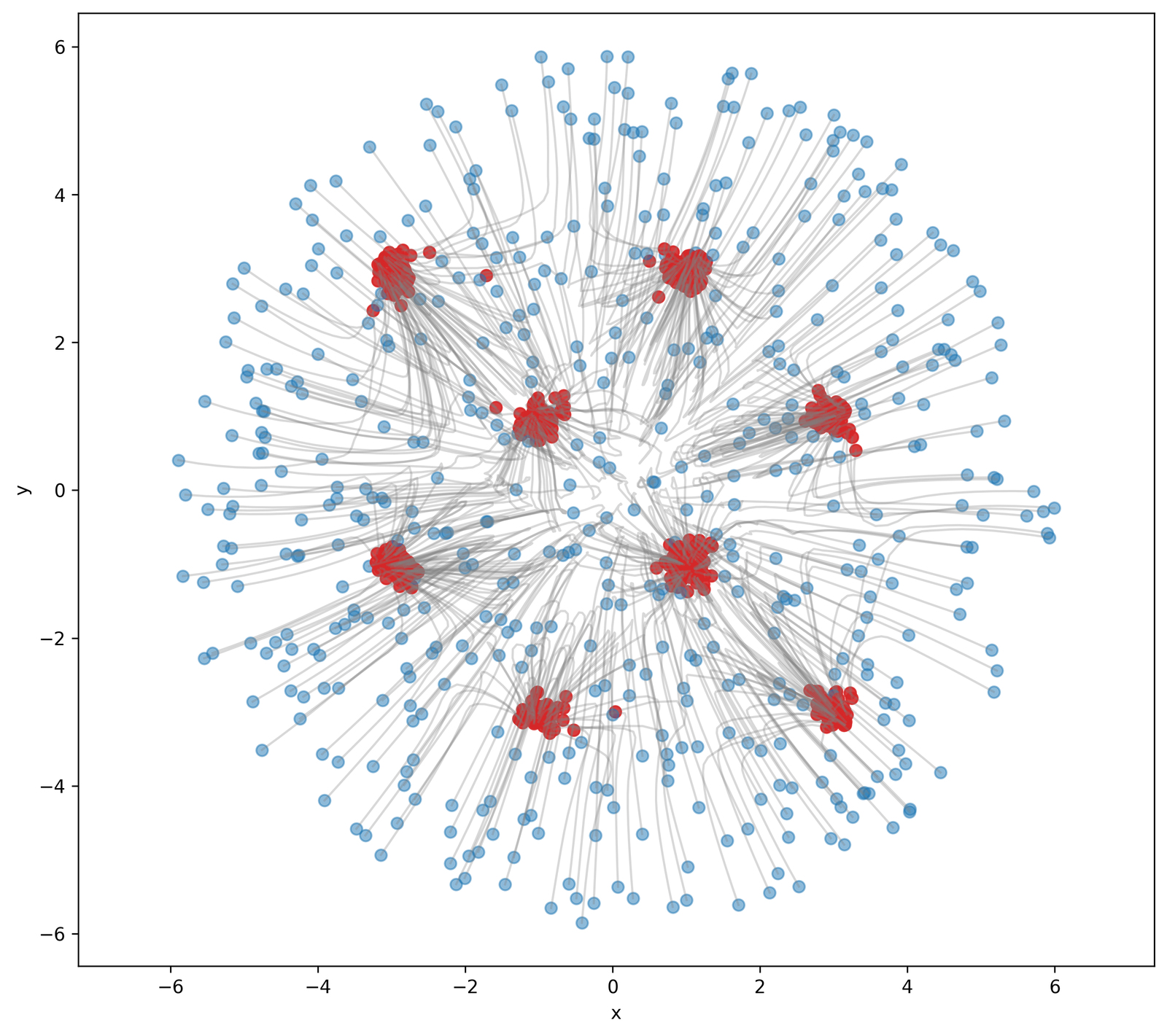} \\[-0.2em]

    \includegraphics[width=0.142\linewidth]{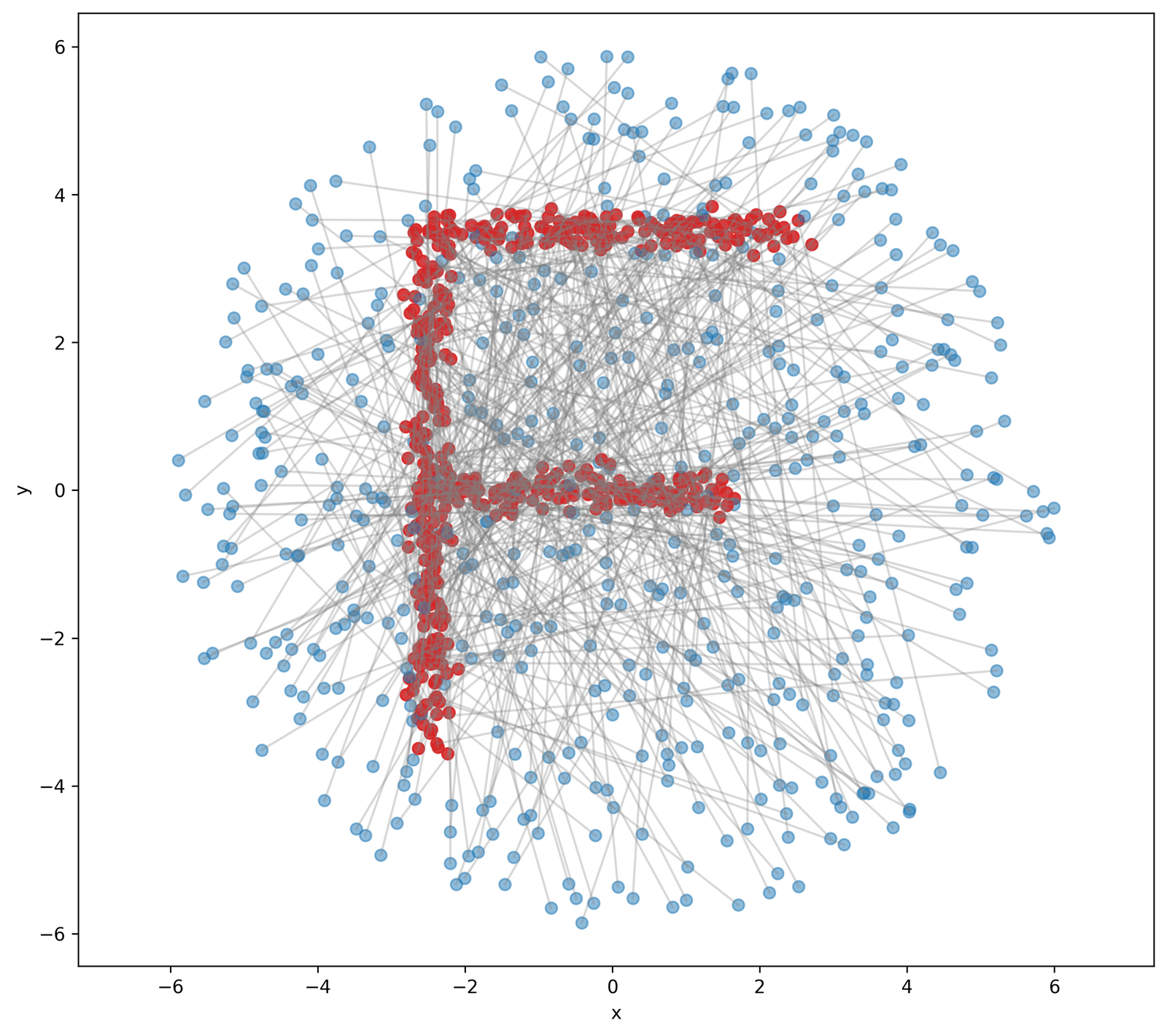} &
    \includegraphics[width=0.142\linewidth]{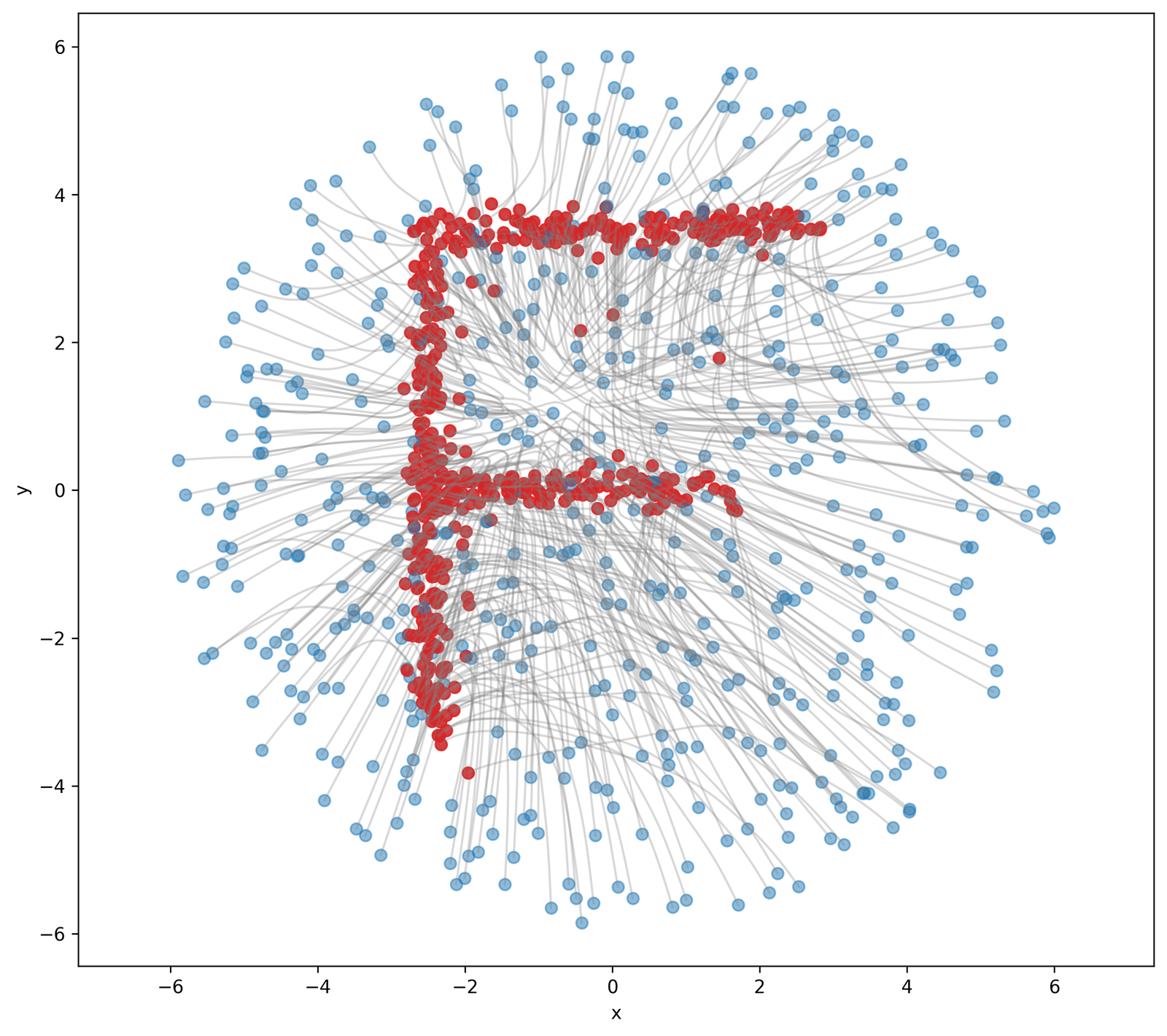} &
    \includegraphics[width=0.142\linewidth]{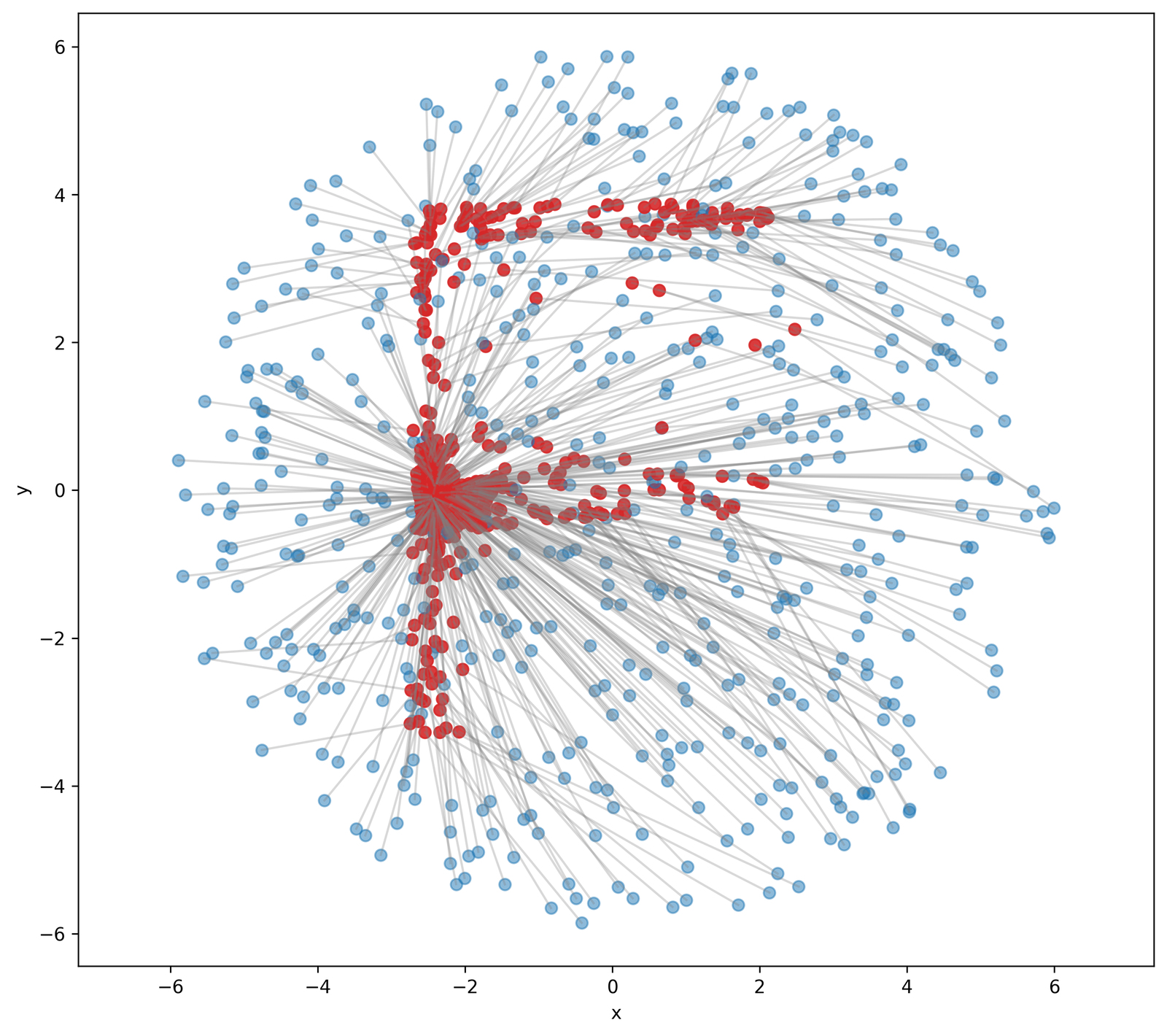} &
    \includegraphics[width=0.142\linewidth]{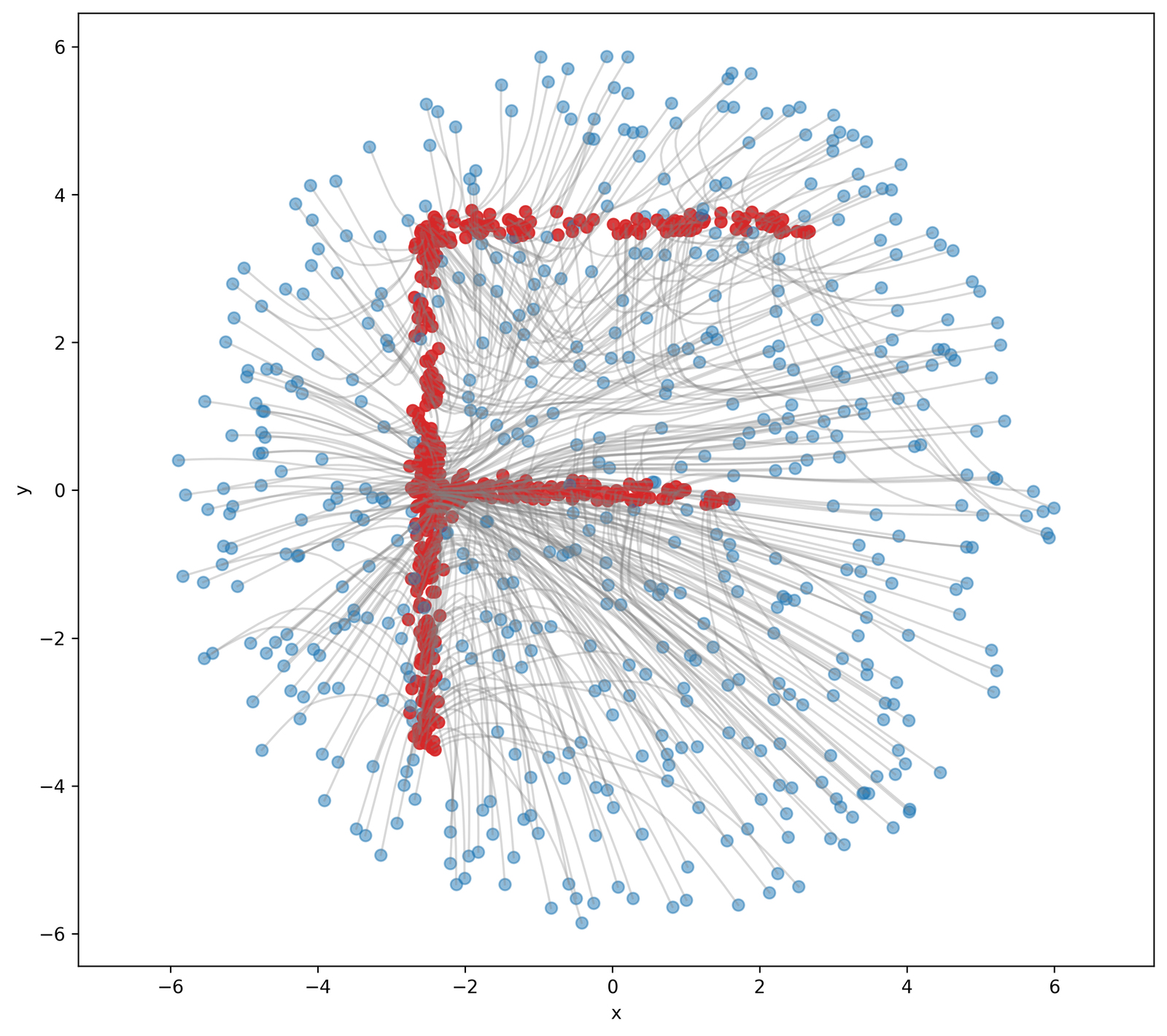} &
    \includegraphics[width=0.142\linewidth]{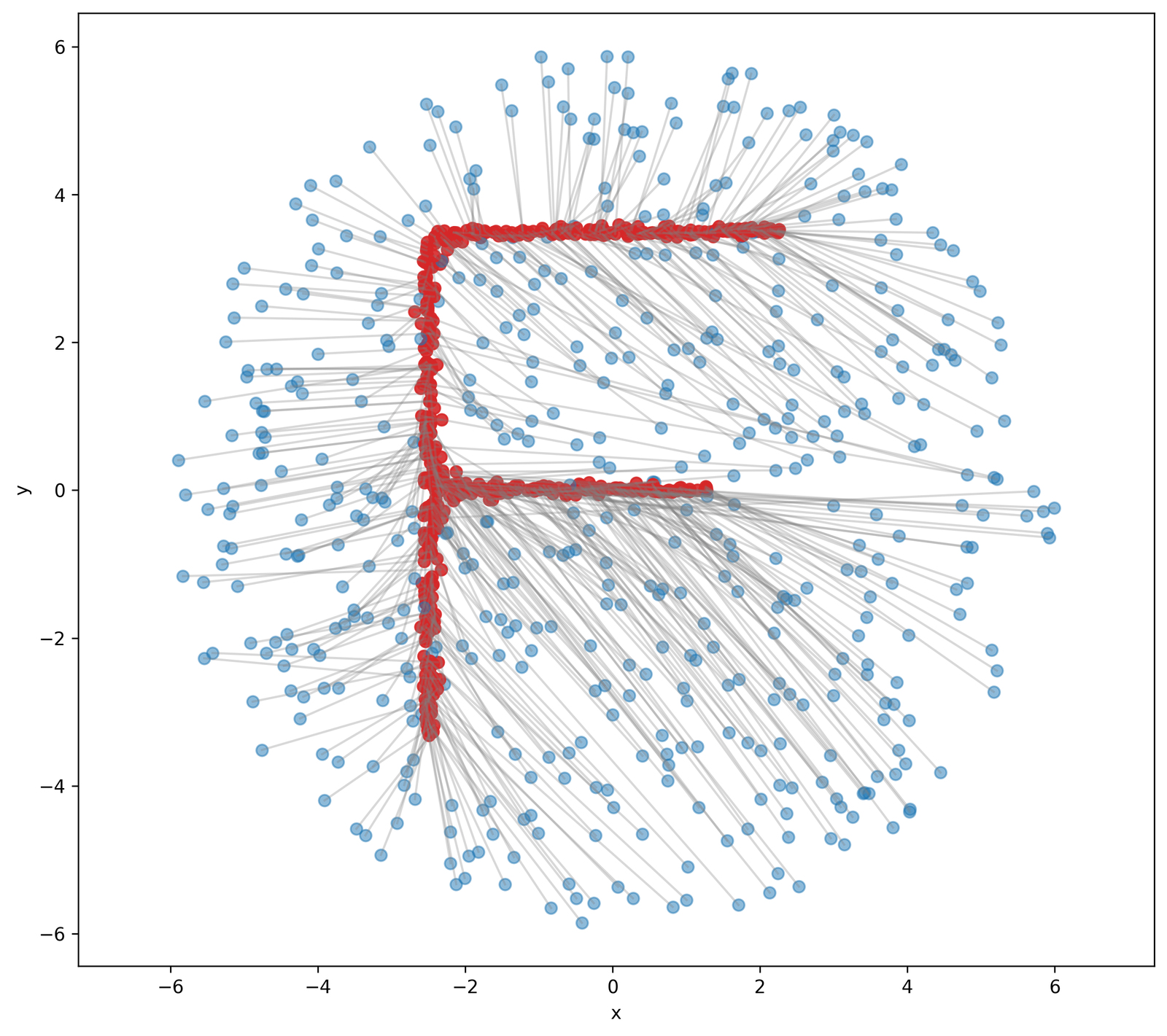} &
    \includegraphics[width=0.142\linewidth]{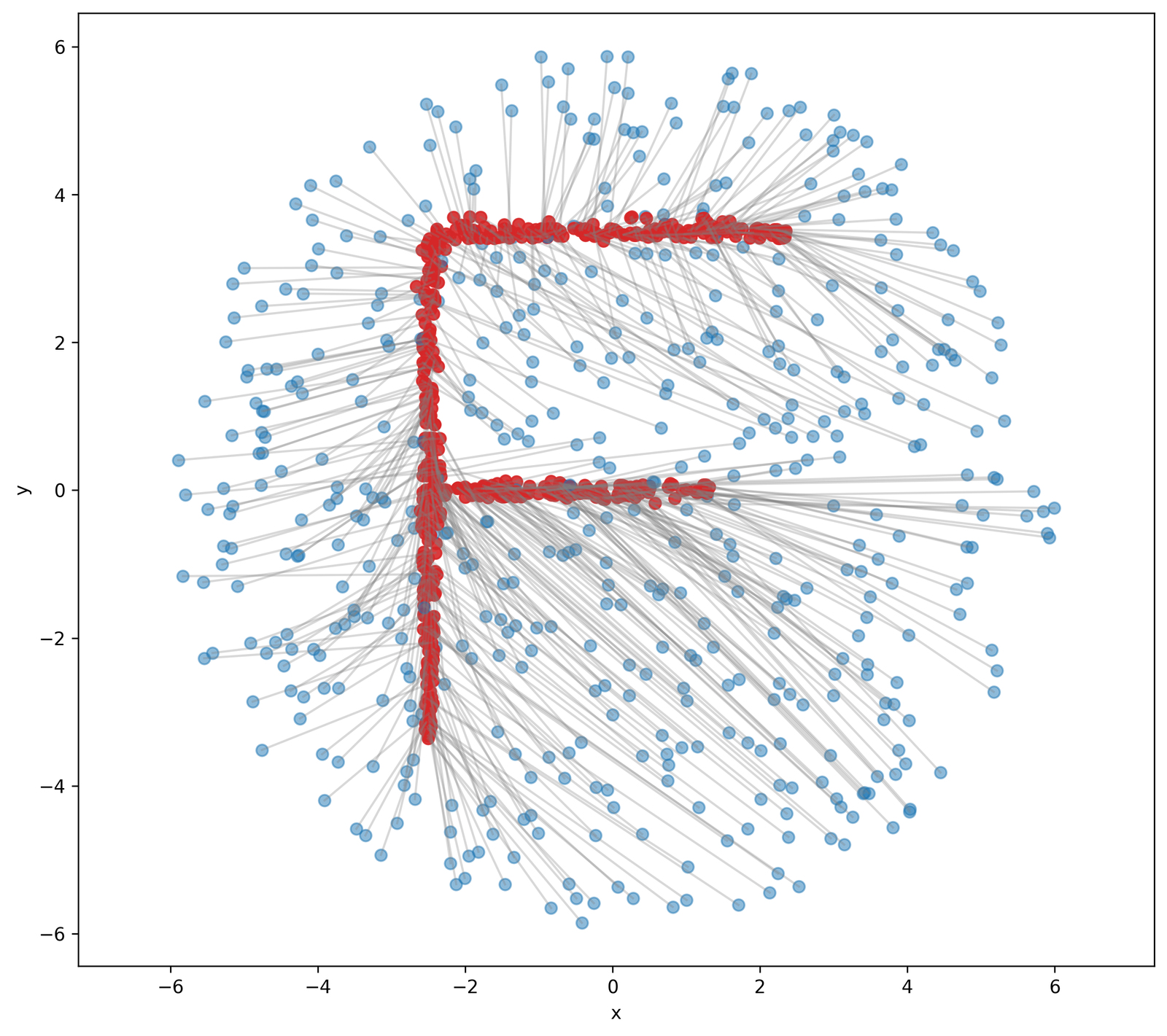} &
    \includegraphics[width=0.142\linewidth]{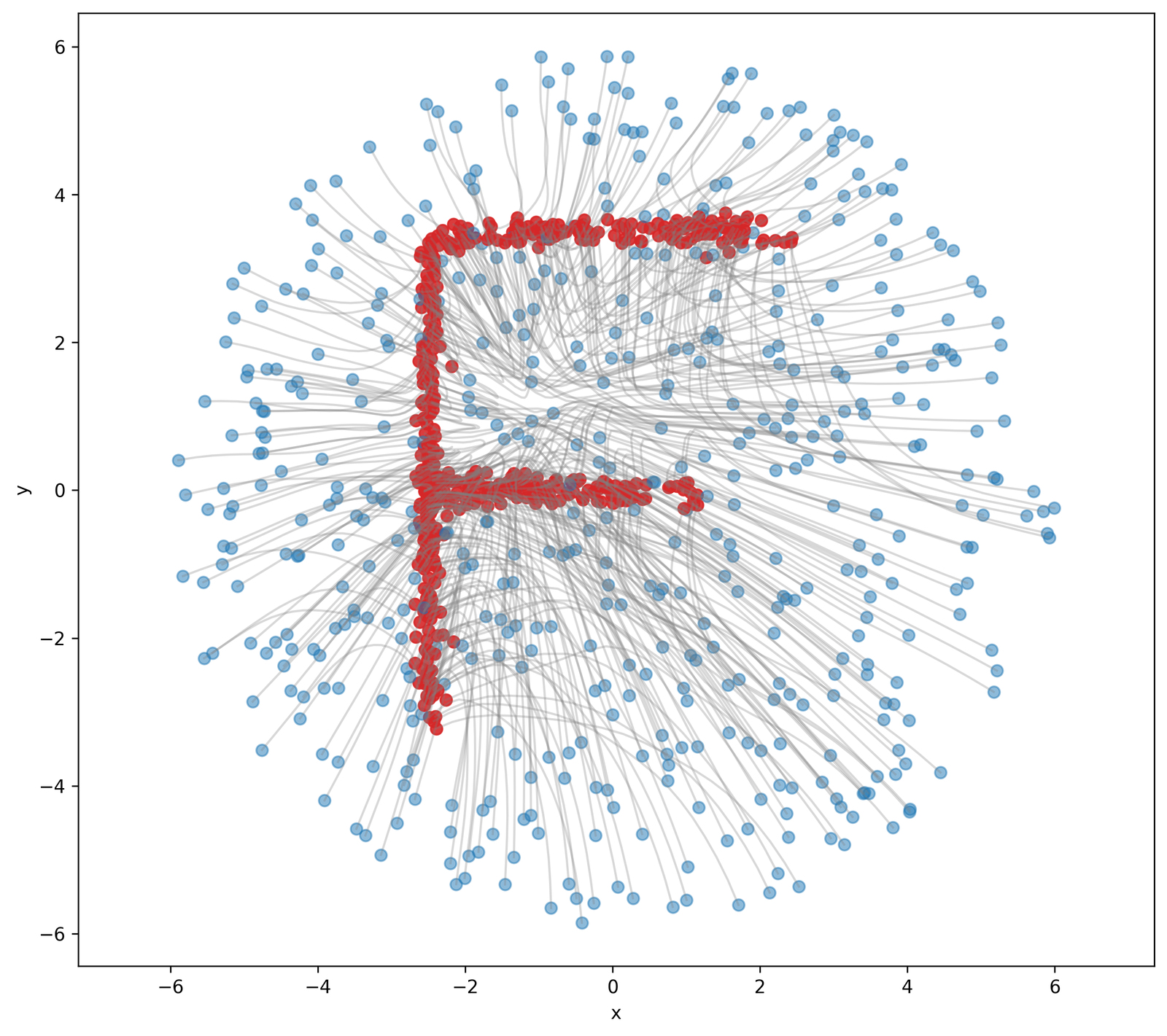} \\[-0.2em]

    \includegraphics[width=0.142\linewidth]{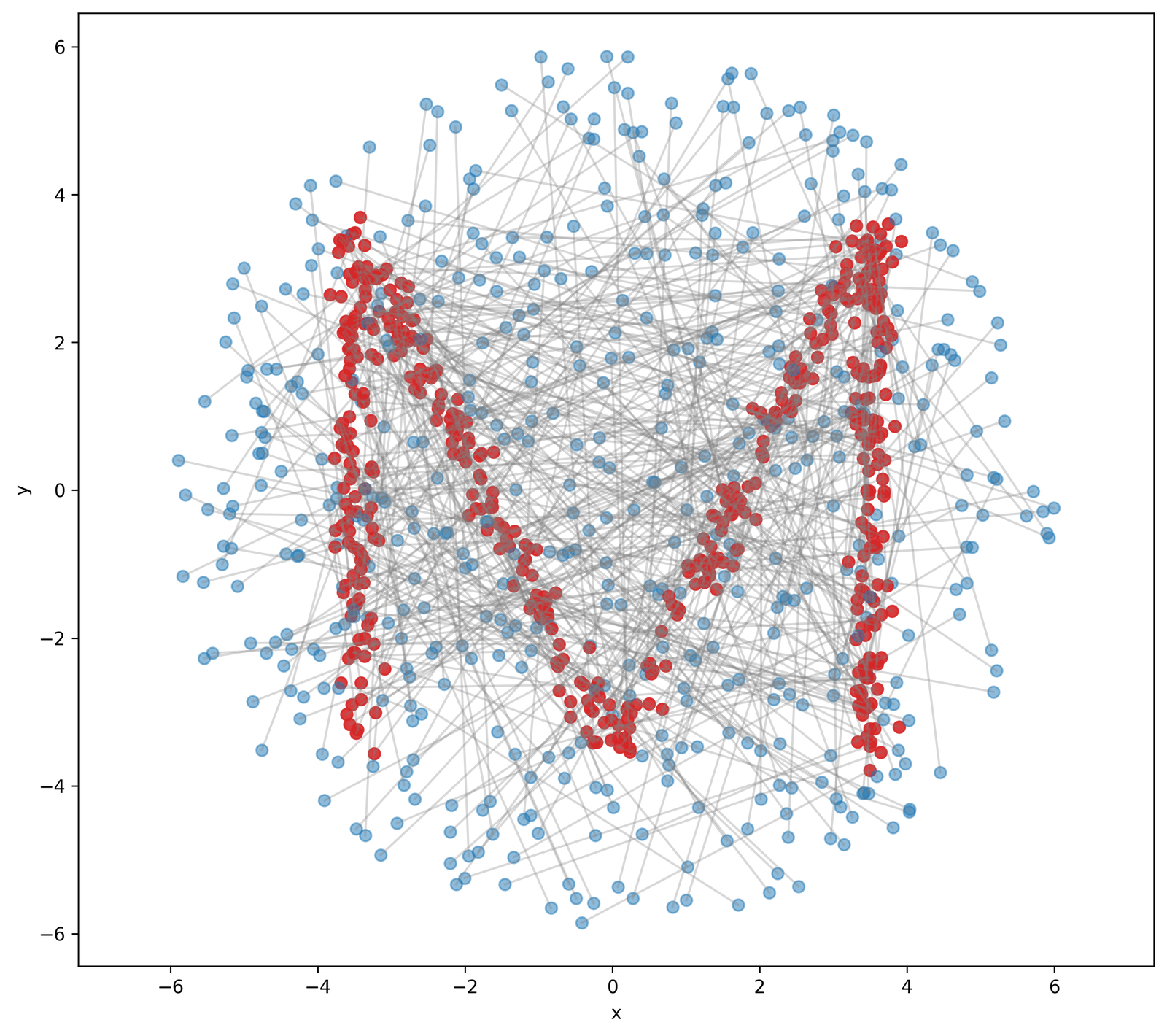} &
    \includegraphics[width=0.142\linewidth]{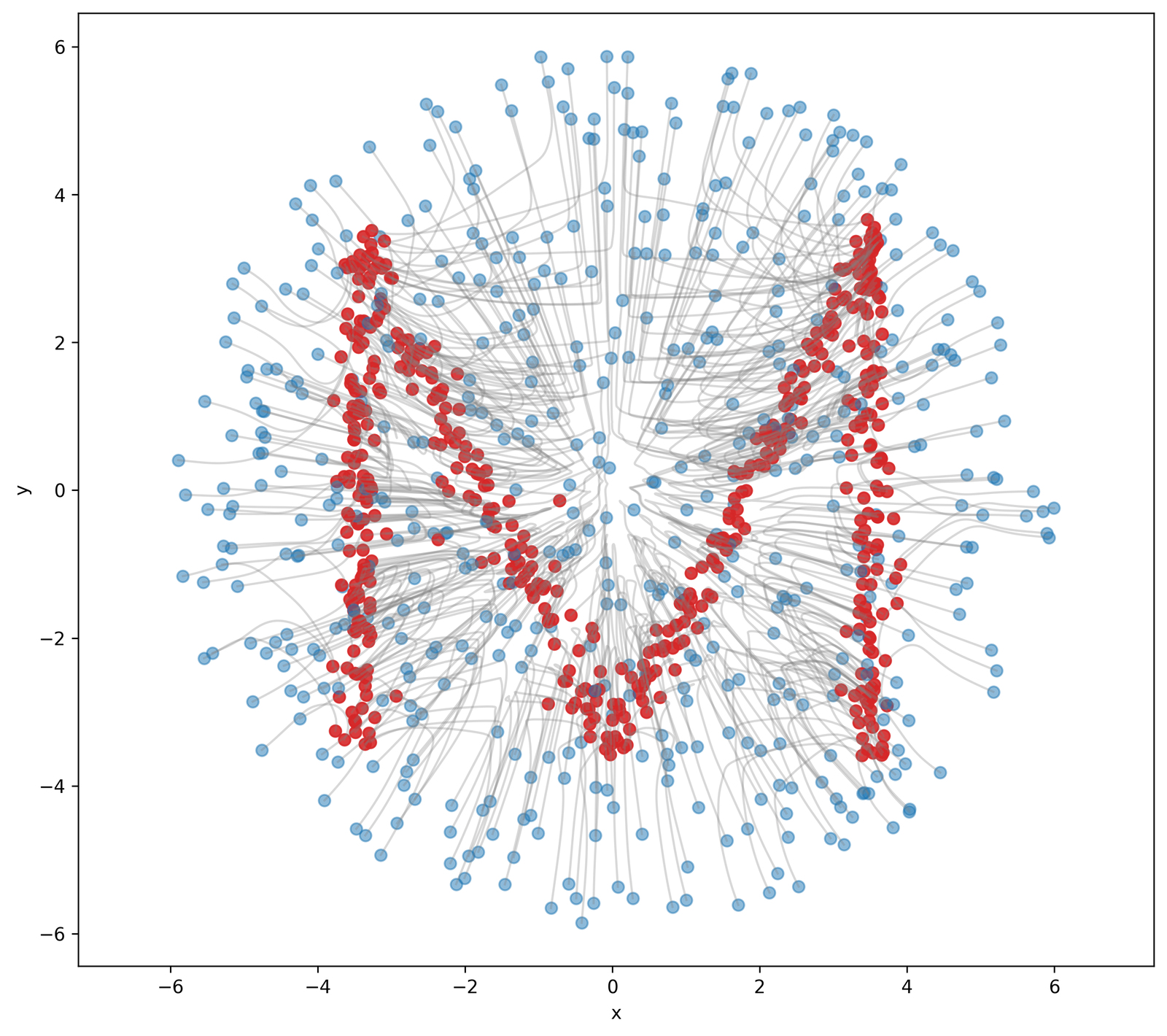} &
    \includegraphics[width=0.142\linewidth]{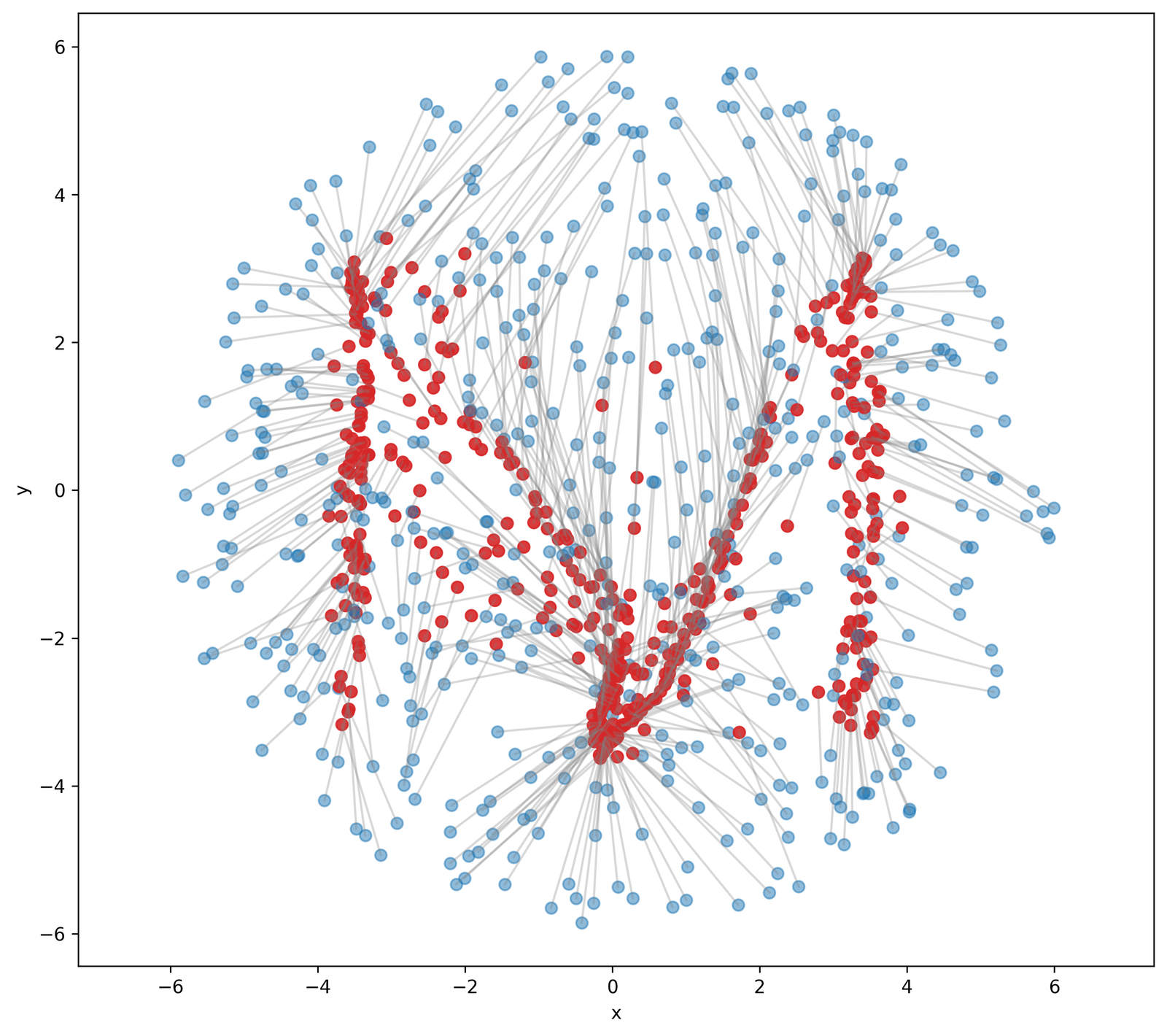} &
    \includegraphics[width=0.142\linewidth]{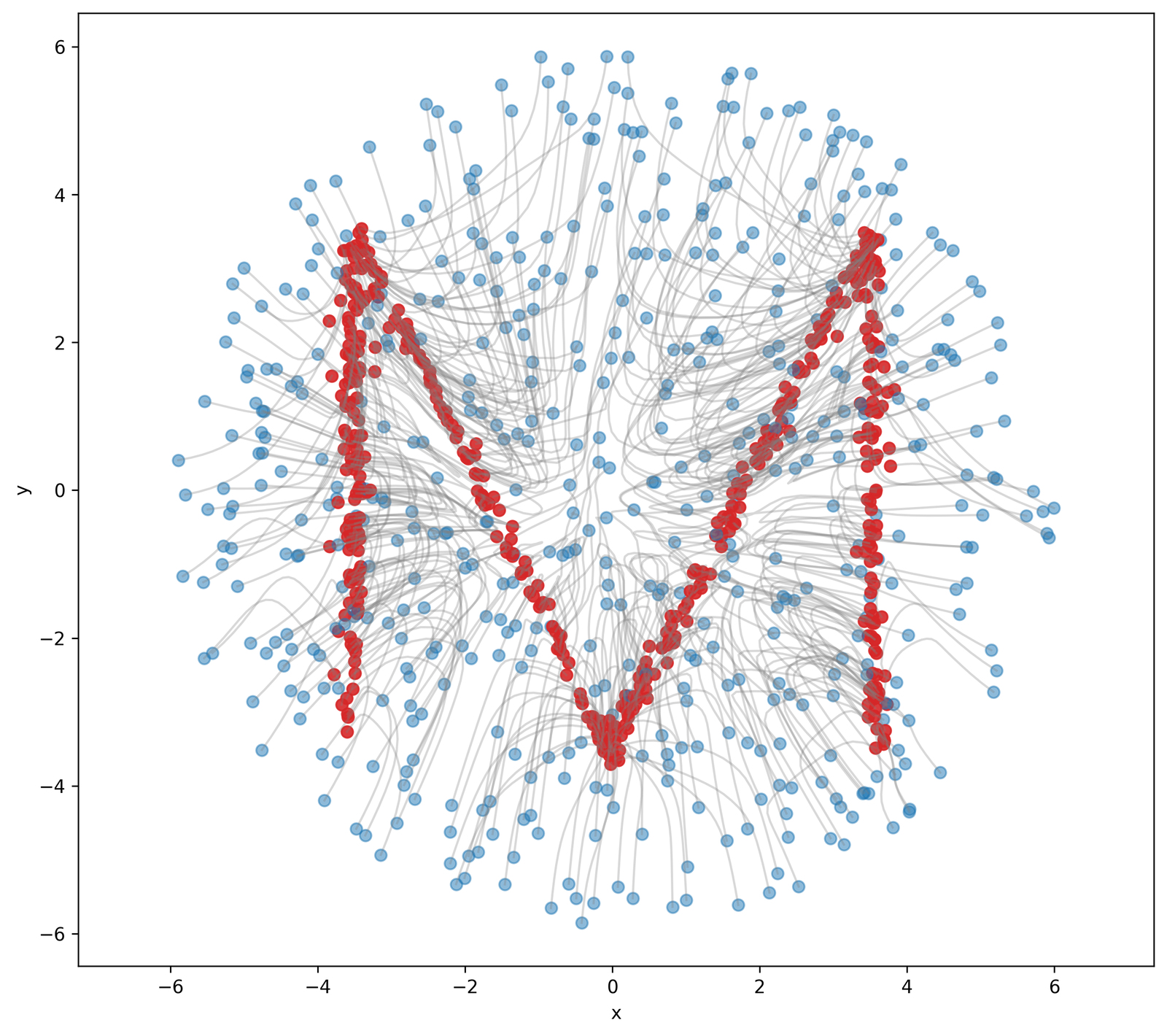} &
    \includegraphics[width=0.142\linewidth]{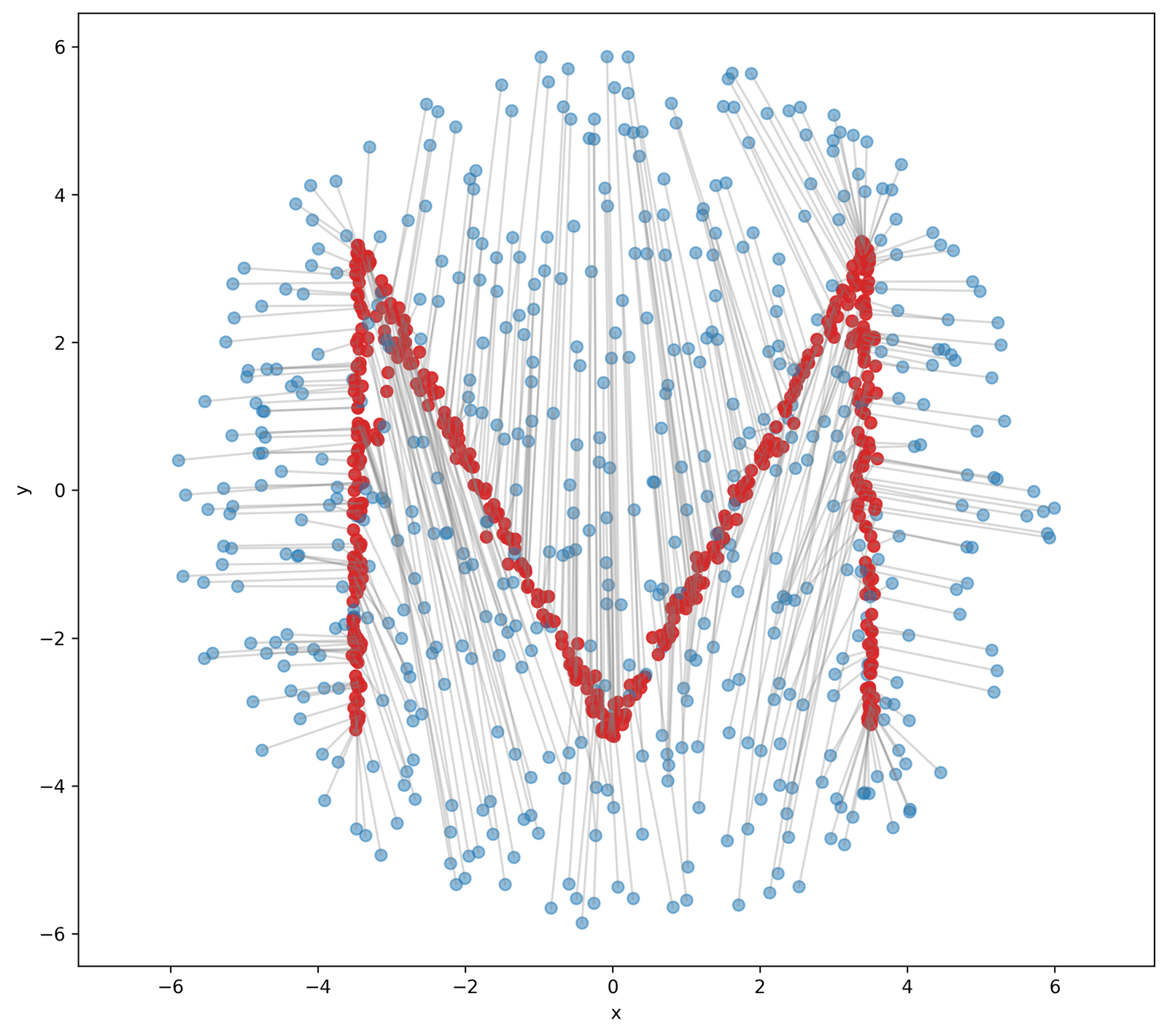} &
    \includegraphics[width=0.142\linewidth]{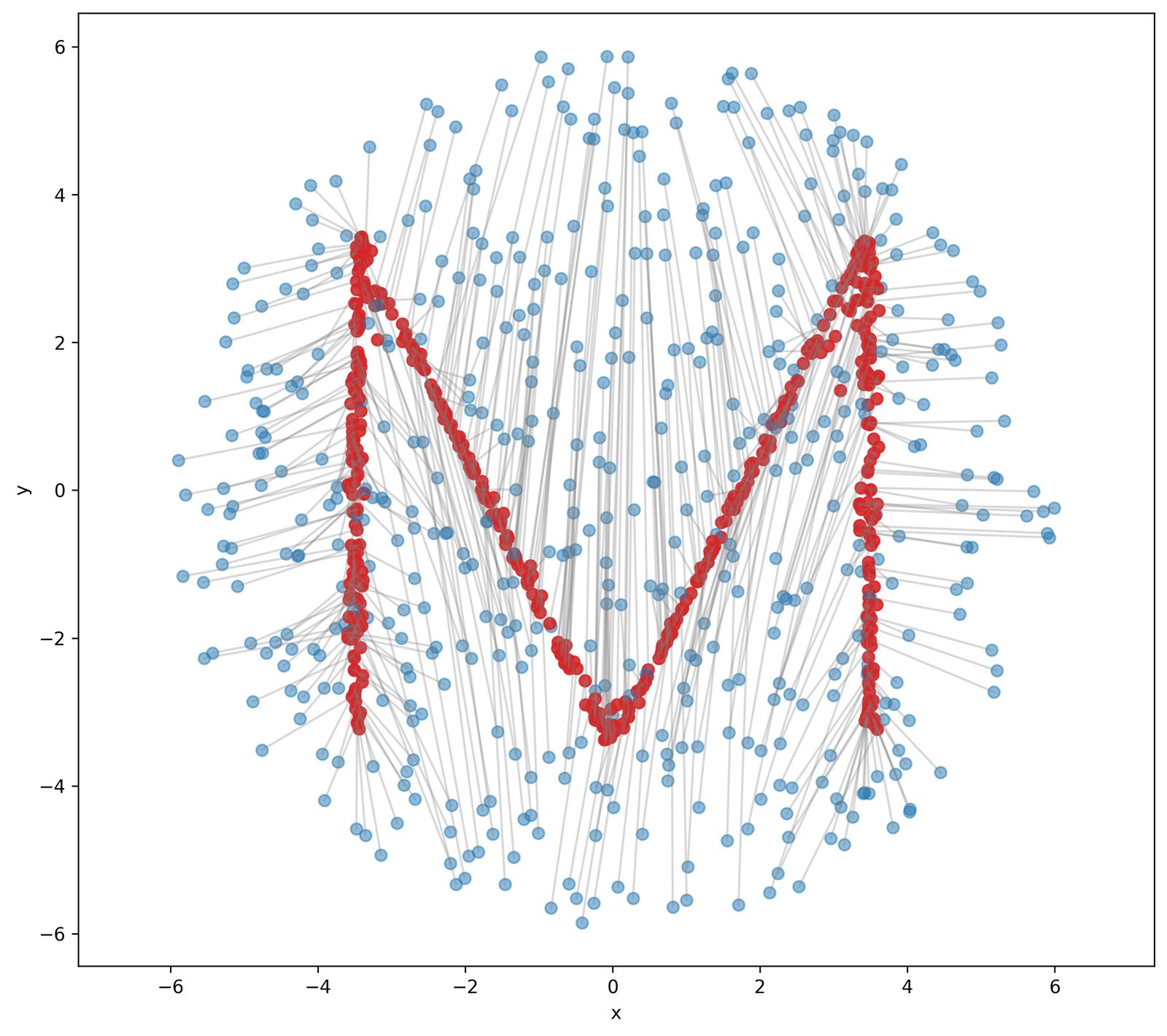} &
    \includegraphics[width=0.142\linewidth]{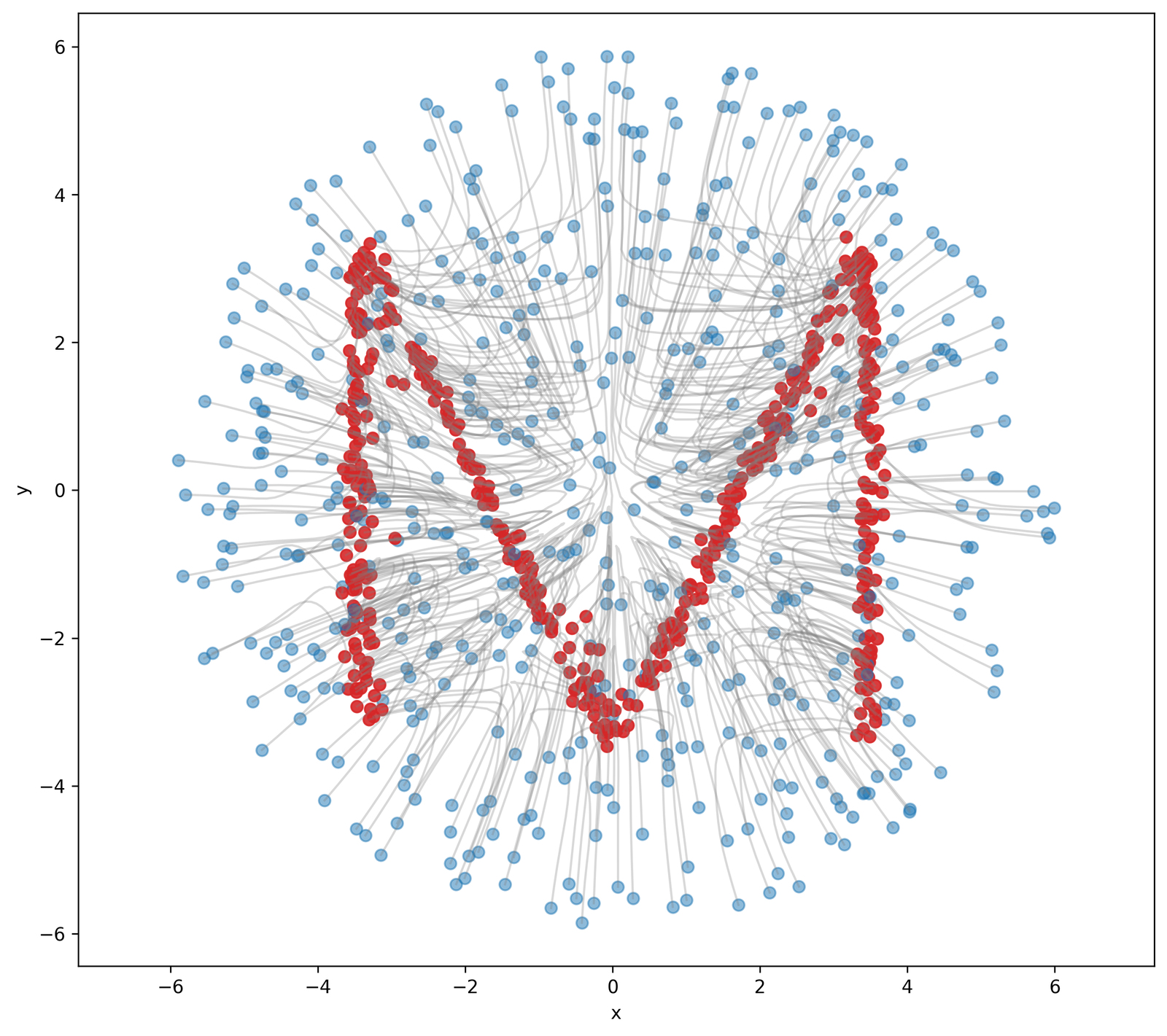} \\[-0.2em]

    \includegraphics[width=0.142\linewidth]{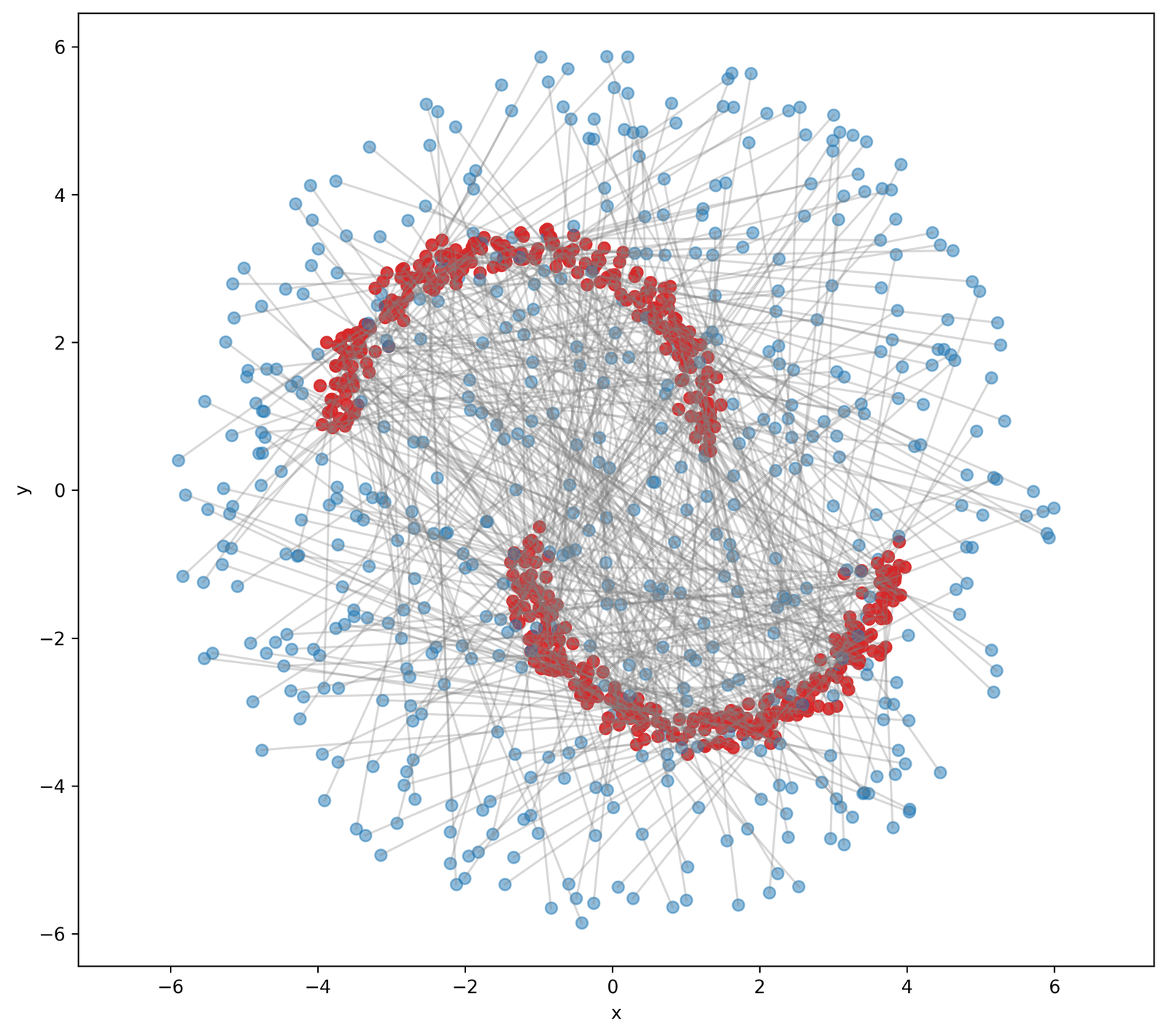} &
    \includegraphics[width=0.142\linewidth]{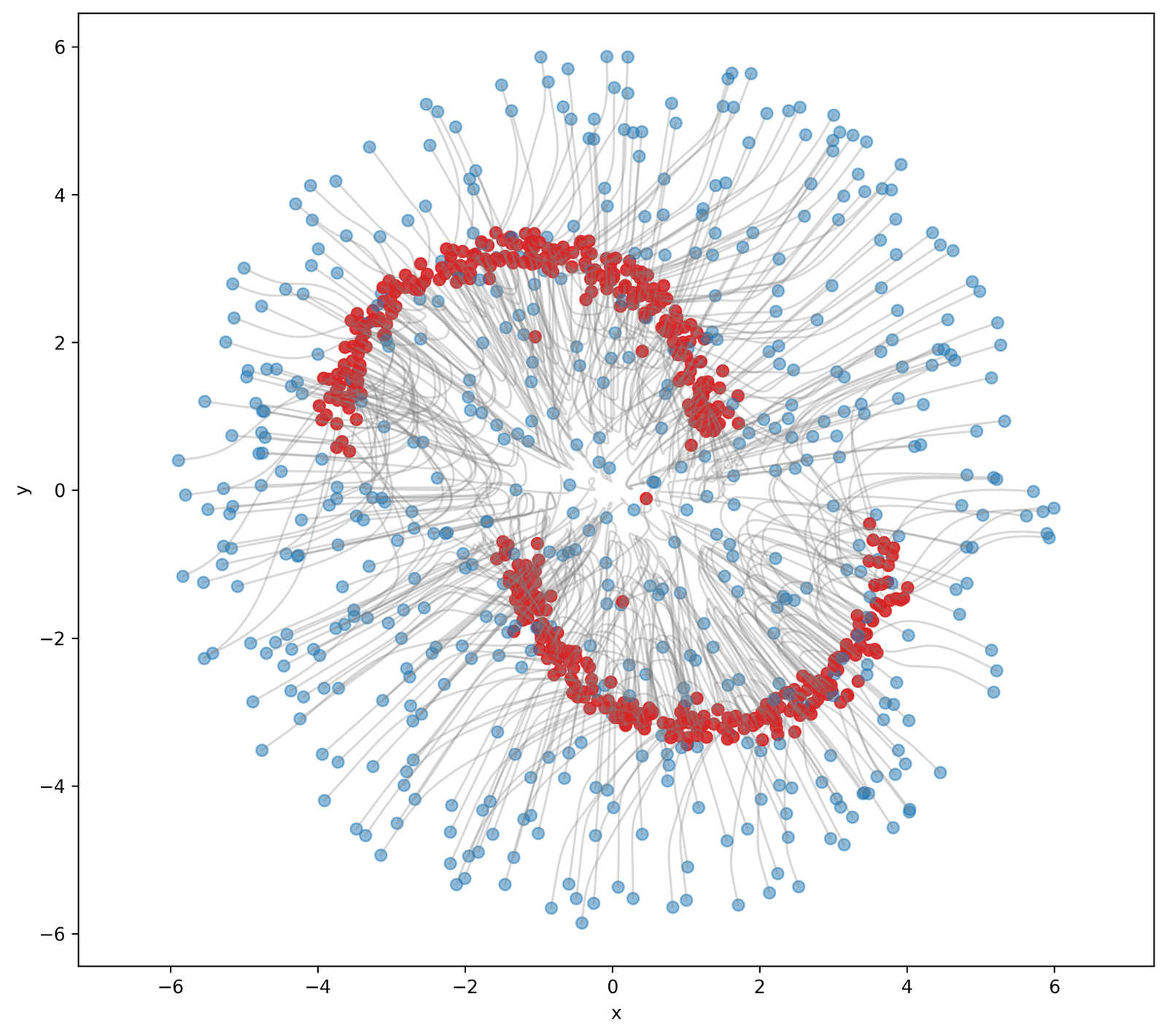} &
    \includegraphics[width=0.142\linewidth]{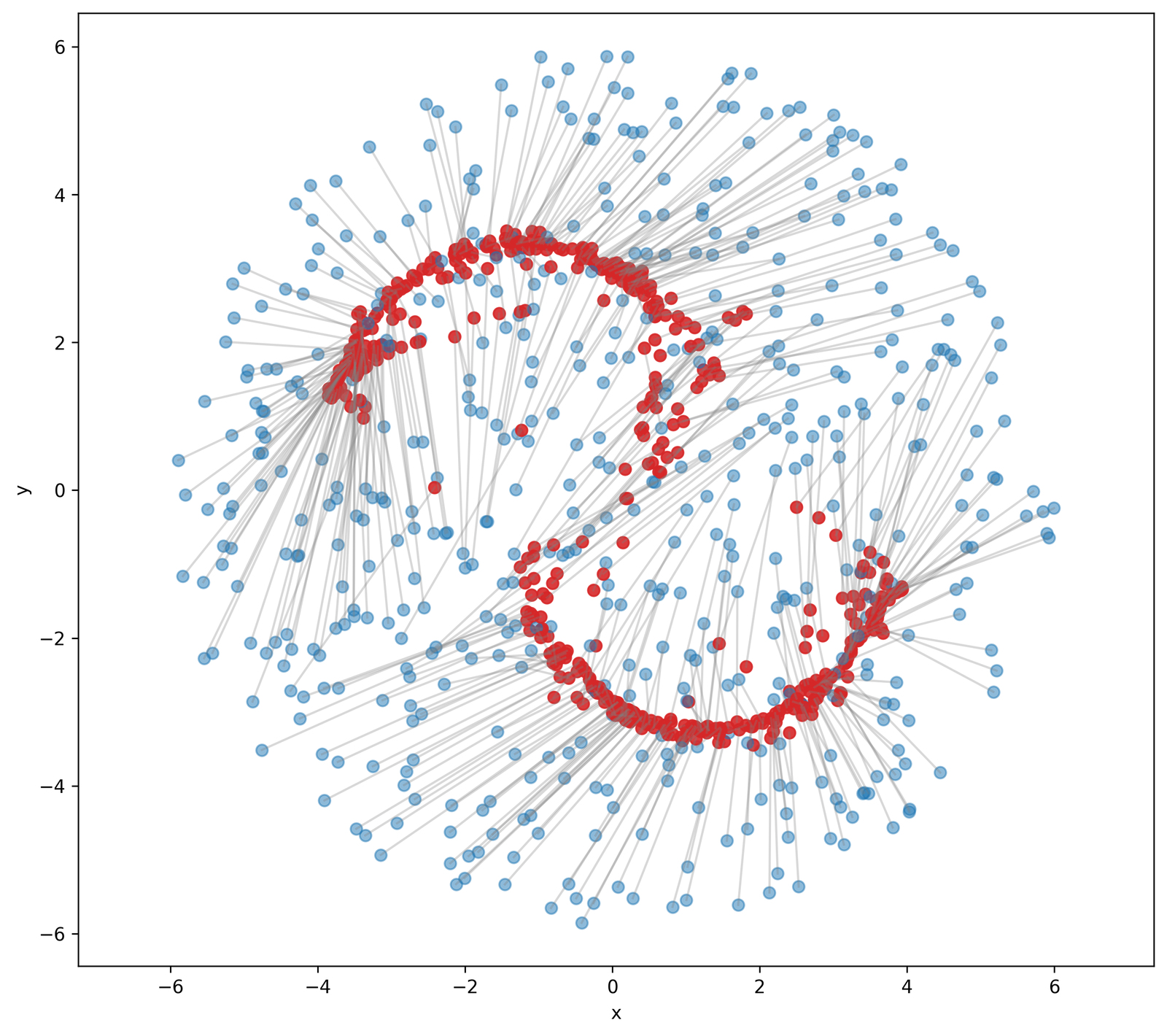} &
    \includegraphics[width=0.142\linewidth]{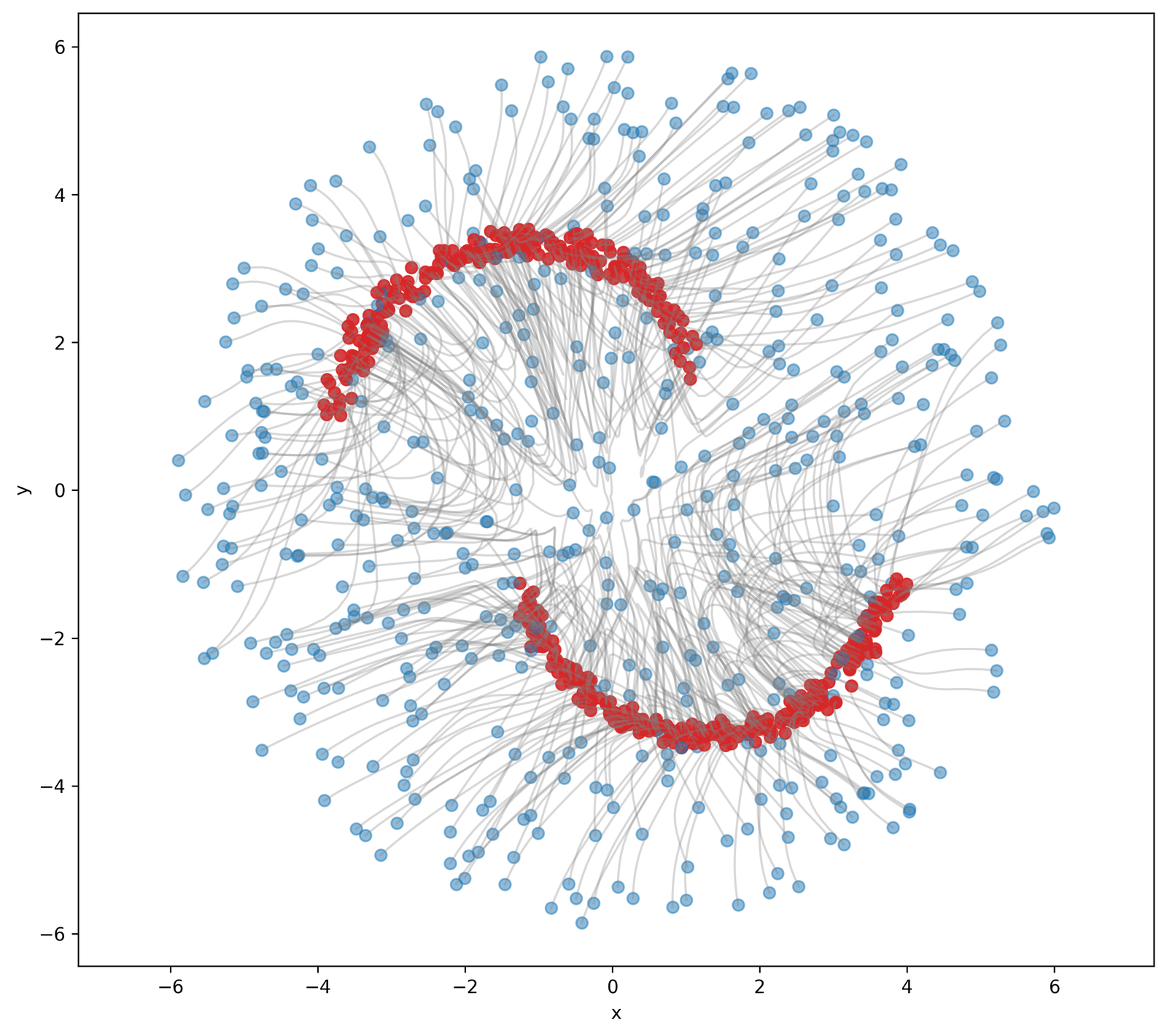} &
    \includegraphics[width=0.142\linewidth]{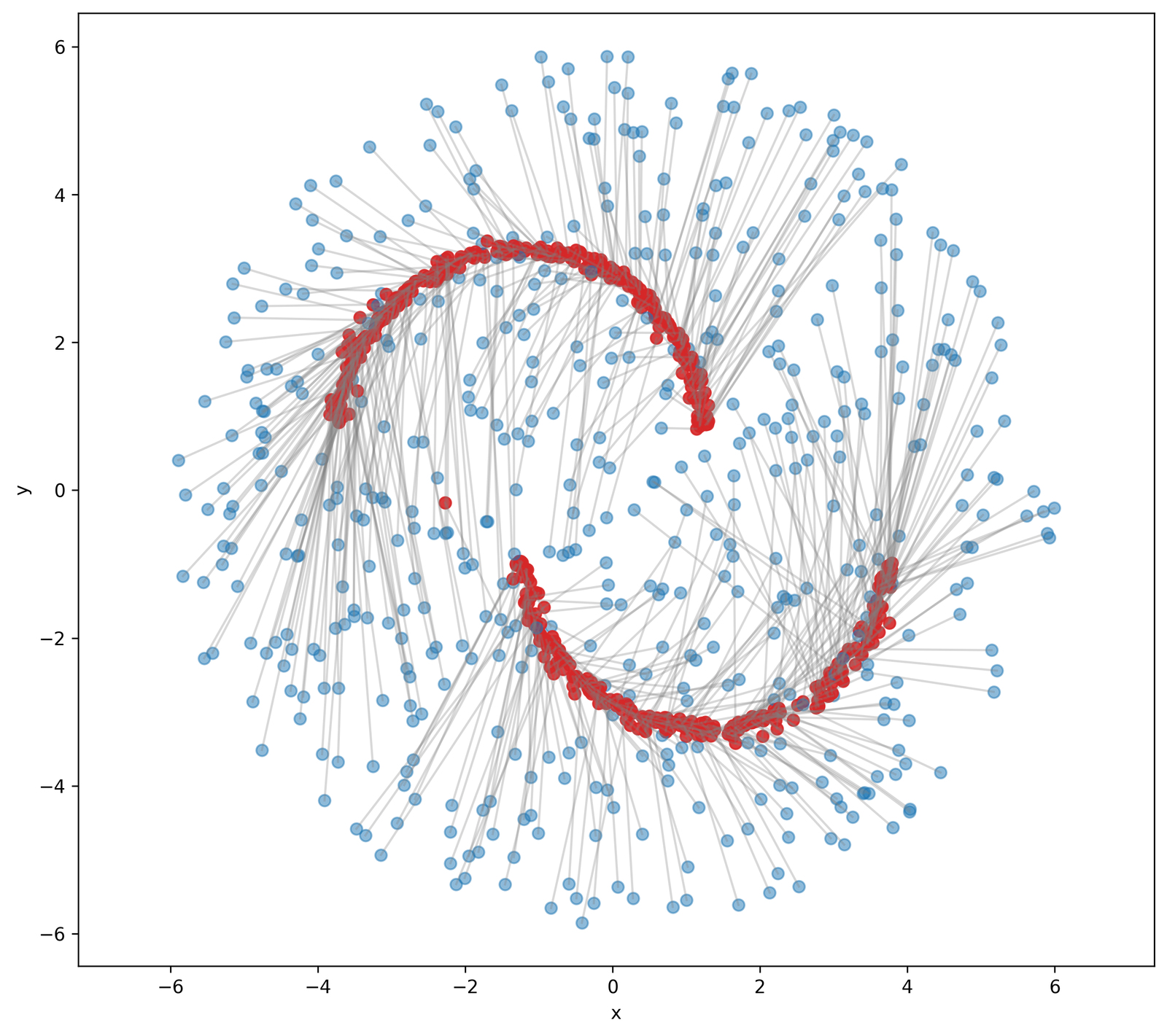} &
    \includegraphics[width=0.142\linewidth]{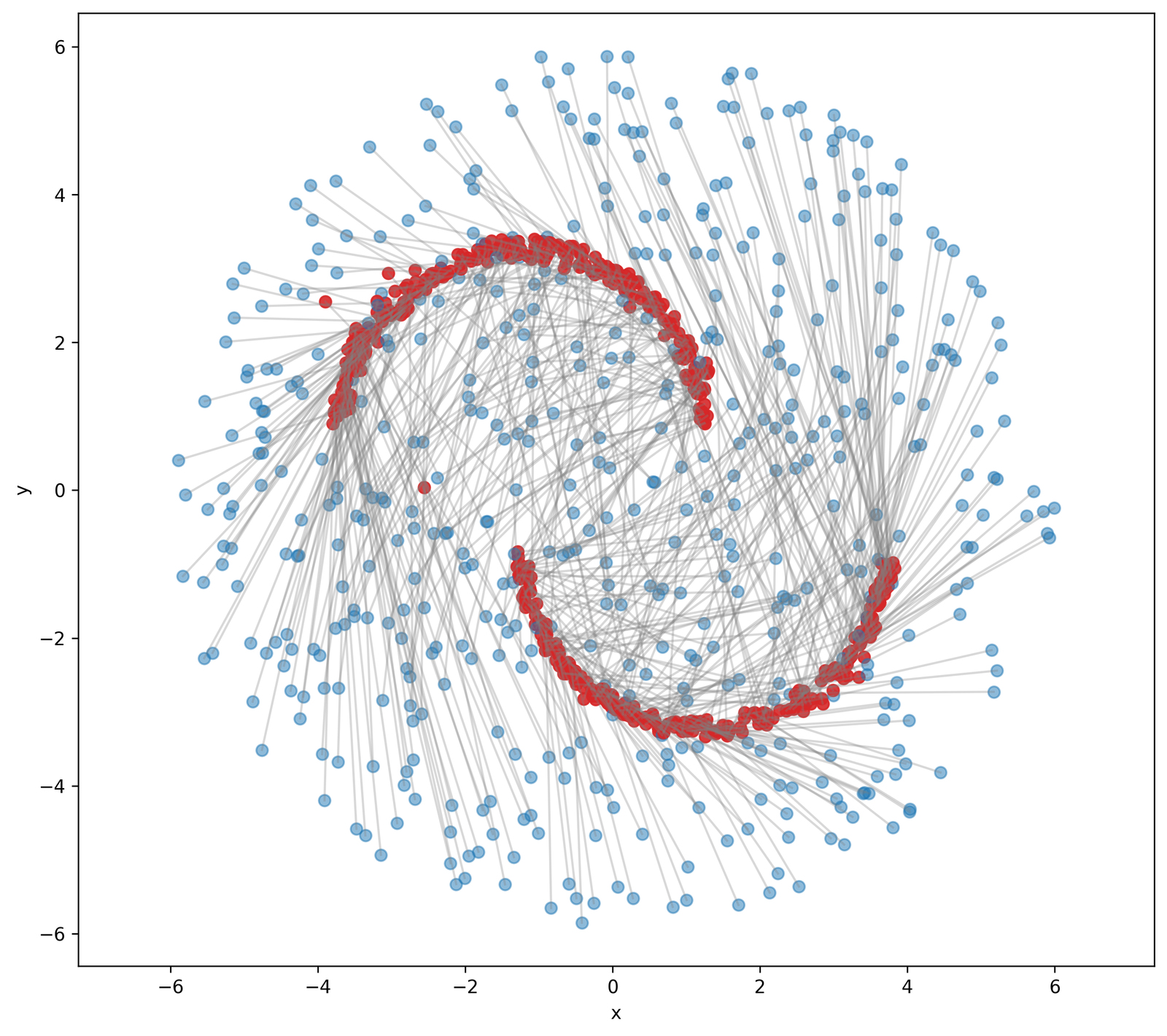} &
    \includegraphics[width=0.142\linewidth]{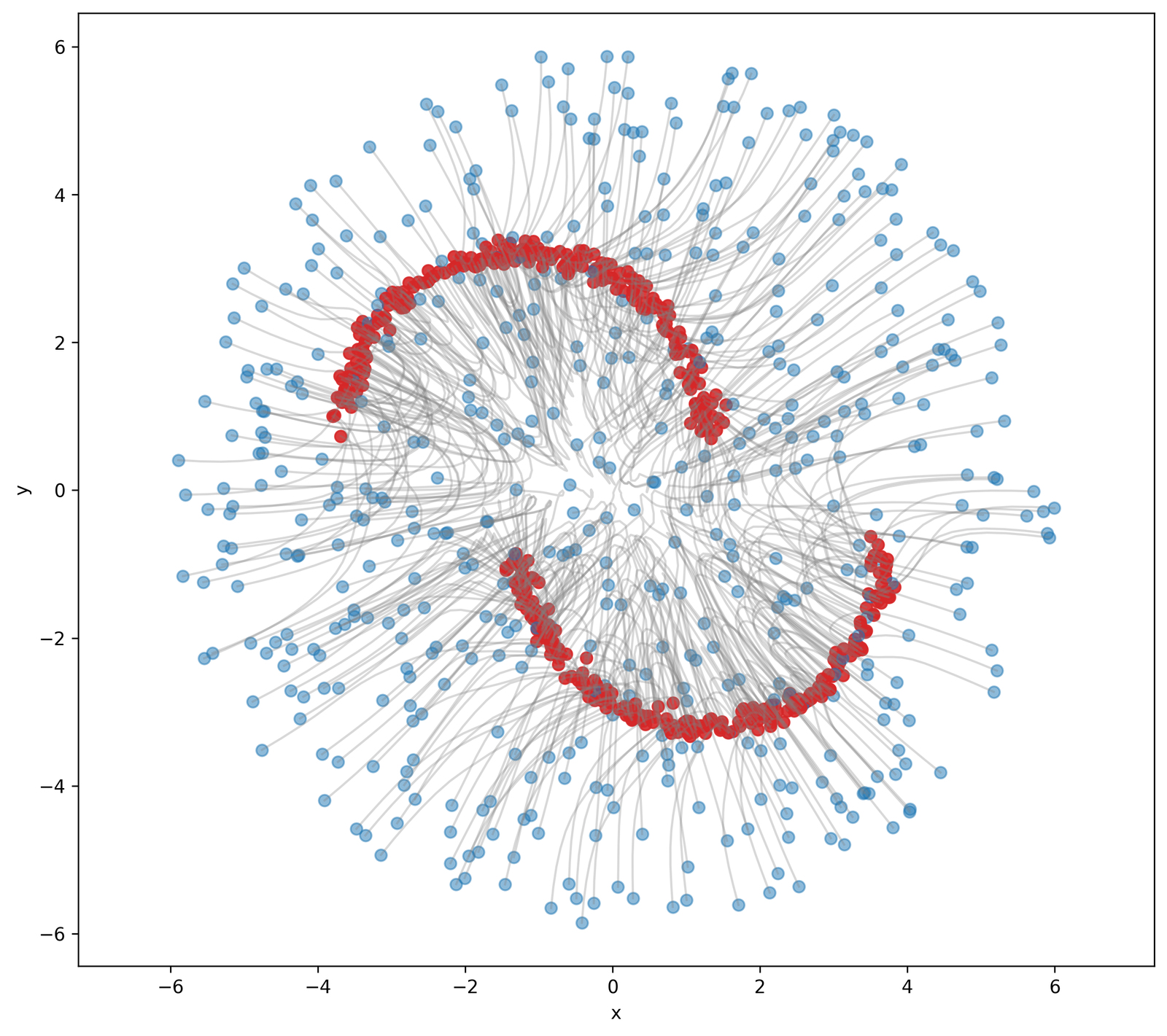} \\[0.25em]

    {\scriptsize \makecell{Ground Truth\\ \strut}} &
    {\scriptsize \makecell{Flow Matching\\Step 50\strut}} &
    {\scriptsize \makecell{Mean Flow\\Step 1\strut}} &
    {\scriptsize \makecell{Mean Flow\\Step 20\strut}} &
    {\scriptsize \makecell{Drift Model\\Step 1\strut}} &
    {\scriptsize \makecell{\textbf{DFM}\\Step 1\strut}} &
    {\scriptsize \makecell{\textbf{DFM}\\Step 20\strut}}
  \end{tabular}

  \vspace{-0.5em}
  \caption{\textbf{Generation Trajectory Visualization on 2D Synthetic Datasets.}
  Each row corresponds to one dataset, and each column corresponds to a different generation method.}
  \label{fig:viz_2d_synthetic}
  \vspace{-1em}
\end{figure*}

\section{Experiments} \label{sec:experiment}


We evaluate DFM across diverse settings, including synthetic data visualization, conditional generation on MNIST and FFHQ, large-scale ImageNet-1k synthesis, and robotic control. Across these tasks, DFM demonstrates strong one-step generation performance while consistently benefiting from the additional number of function evaluations (NFE), validating its flexibility, generation quality, and test-time scaling behavior.

\textbf{Synthetic Data and Visualization.}
Synthetic data experiments are conducted to visualize the generation trajectories and to provide an intuitive demonstration of the effectiveness of methods~\cite{zhang2025hierarchical,ma2025learningstraightflowsvariational,ma2026transition}. We simulate and visualize the generation trajectories of different methods on four two-dimensional benchmark datasets: the alphabet ``F'', the alphabet ``M'', the two-moons dataset, and the checkerboard dataset, as shown in Figure~\ref{fig:viz_2d_synthetic}. In these experiments, the source distribution, shown in blue, is a circular distribution, whereas the target distribution, shown in red, corresponds respectively to the shape of the letter ``F'', the shape of the letter ``M'', the upper--lower moons, and the checkerboard grid.
The same 3-layer MLP~\cite{rumelhart1986learning} is consistent for all methods.
The visualization results are consistent with the discussion in the previous section. Specifically, our method aims to enable generation with arbitrary step sizes and an arbitrary number of inference steps by interpolating between the Drift Model and Flow Matching. Notably, even one-step generation produces promising results, while increasing NFE further improves sample quality and distribution coverage.
\definecolor{lightmethodgray}{gray}{0.6}

\begin{table*}[t]
\centering
\scriptsize
\setlength{\tabcolsep}{2pt}
\renewcommand{\arraystretch}{1}

\begin{minipage}[t]{0.48\textwidth}
\vspace{0pt}
\centering
\begin{adjustbox}{width=\linewidth,center}
\begin{tabular}{l c c c c}
\Xhline{4\arrayrulewidth}
\multirow{2}{*}{\textbf{Method$^{\text{NFE}}$}} 
& \multicolumn{2}{c}{\textbf{MNIST}} 
& \multicolumn{2}{c}{\textbf{FFHQ}} \\
\cline{2-5}
& \textbf{EMD $\downarrow$} 
& \textbf{Accuracy $\uparrow$} 
& \textbf{EMD $\downarrow$} 
& \textbf{FID $\downarrow$} \\
\hline\hline

Flow Matching$^{50}$~\cite{lipman2023flow} 
& 37.2 & 100.0\% & 198.4 & 77.4 \\

Mean Flow$^{1}$~\cite{geng2025mean} 
& 68.6 & 96.7\% & 258.7 & 131.2 \\

Mean Flow$^{2}$~\cite{geng2025mean} 
& 48.1 & 99.1\% & 211.7 & 85.4 \\

Mean Flow$^{5}$~\cite{geng2025mean} 
& 37.5 & 100.0\% & 204.2 & 82.1 \\

Drift Model$^{1}$~\cite{deng2026generative} 
& 37.3 & 100.0\% & 225.5 & 116.2 \\

\rowcolor{cvprblue!15}
DFM$^{1}$
& 37.3 & 100.0\% & 225.5 & 116.2 \\

\rowcolor{cvprblue!15}
DFM$^{2}$
& \textbf{37.2} & \textbf{100.0\%} & 215.7 & 80.4 \\

\rowcolor{cvprblue!15}
DFM$^{5}$
& \textbf{37.2} & \textbf{100.0\%} & \textbf{196.2} & \textbf{75.9} \\

\Xhline{4\arrayrulewidth}
\end{tabular}
\end{adjustbox}

\vspace{0.4em}
\caption{
Quantitative comparison of different generation methods on MNIST and FFHQ datasets.
}
\label{tab:mnist_ffhq_comparison}

\vspace{0.8em}
\caption{
Quantitative Comparison with Different Generation Methods on ImageNet $256 \times 256$ Dataset.
}
\label{tab:image256}

\end{minipage}
\hfill
\begin{minipage}[t]{0.50\textwidth}
\vspace{0pt}
\centering
\begin{adjustbox}{width=\linewidth,center}
\begin{tabular}{lcccc}
\Xhline{4\arrayrulewidth}
\textbf{Method} & \textbf{\# Params.} & \textbf{NFE} & \textbf{FID $\downarrow$} & \textbf{IS $\uparrow$} \\
\hline\hline

\textcolor{lightmethodgray}{DiT-XL/2~\cite{esser2024scalingrectifiedflowtransformers}}
& \textcolor{lightmethodgray}{675M} & \textcolor{lightmethodgray}{$250\times2$} & \textcolor{lightmethodgray}{2.27} & \textcolor{lightmethodgray}{278.2} \\

\textcolor{lightmethodgray}{SiT-XL/2~\cite{ma2024sit}}
& \textcolor{lightmethodgray}{675M} & \textcolor{lightmethodgray}{$250\times2$} & \textcolor{lightmethodgray}{2.06} & \textcolor{lightmethodgray}{270.3} \\

\textcolor{lightmethodgray}{SiT-XL/2+REPA~\cite{yu2024representation}}
& \textcolor{lightmethodgray}{675M} & \textcolor{lightmethodgray}{$250\times2$} & \textcolor{lightmethodgray}{1.42} & \textcolor{lightmethodgray}{305.7} \\

\textcolor{lightmethodgray}{LightningDiT-XL/2~\cite{yao2025reconstruction}}
& \textcolor{lightmethodgray}{675M} & \textcolor{lightmethodgray}{$250\times2$} & \textcolor{lightmethodgray}{1.35} & \textcolor{lightmethodgray}{295.3} \\

\textcolor{lightmethodgray}{RAE+DiT$^{\mathrm{DH}}$-XL/2~\cite{zheng2025diffusion}}
& \textcolor{lightmethodgray}{839M} & \textcolor{lightmethodgray}{$50\times2$} & \textbf{\textcolor{lightmethodgray}{1.13}} & \textcolor{lightmethodgray}{262.6} \\

\hline

iCT-XL/2~\cite{song2024improved} 
& 675M & 1 & 34.24 & -- \\

Shortcut-XL/2~\cite{frans2025one} 
& 675M & 1 & 10.60 & -- \\

MeanFlow-XL/2~\cite{geng2025mean} 
& 676M & 1 & 3.43 & -- \\

AdvFlow-XL/2~\cite{lin2025adversarial} 
& 673M & 1 & 2.38 & 284.2 \\

iMeanFlow-XL/2~\cite{geng2025improved} 
& 610M & 1 & 1.72 & 282.0 \\

S-VFM-XL/2~\cite{ma2025learningstraightflowsvariational} 
& 677M & 1 & 3.31 & -- \\

Drifting Model, L/2~\cite{deng2026generative} 
& 463M & 1 & 1.54 & 258.9 \\

\rowcolor{cvprblue!15}
DFM, L/2 
& 463M & 1 & \textbf{1.52} & 259.4 \\

\hline

iCT-XL/2~\cite{song2024improved} 
& 675M & 2 & 20.30 & -- \\

iMM-XL/2~\cite{zhou2025inductive} 
& 675M & $1\times2$ & 7.77 & -- \\

MeanFlow-XL/2~\cite{geng2025mean} 
& 676M & 2 & 2.93 & -- \\

AdvFlow-XL/2~\cite{lin2025adversarial} 
& 673M & 2 & 2.11 & 288.7 \\

iMeanFlow-XL/2~\cite{geng2025improved} 
& 610M & 2 & 1.54 & 285.1 \\

S-VFM-XL/2~\cite{ma2025learningstraightflowsvariational} 
& 677M & 2 & 2.86 & -- \\

\rowcolor{cvprblue!15}
DFM, L/2 
& 463M & 2 & \textbf{1.45} & 268.3 \\

\hline

\rowcolor{cvprblue!15}
DFM, L/2 
& 463M & 5 & 1.34 & 279.8 \\

\hline

\rowcolor{cvprblue!15}
DFM, L/2 
& 463M & 10 & 1.31 & 287.4 \\

\Xhline{4\arrayrulewidth}
\end{tabular}
\end{adjustbox}
\end{minipage}

\vspace{-1em}
\end{table*}

\textbf{Evaluation for Conditional Generation (MNIST).}
\begin{figure*}[htbp]
  \centering
  \setlength{\tabcolsep}{0pt}
  \renewcommand{\arraystretch}{0.95}

  \begin{tabular}{ccc}
    \begin{minipage}{0.33\linewidth}
      \centering
      \includegraphics[width=0.35\linewidth]{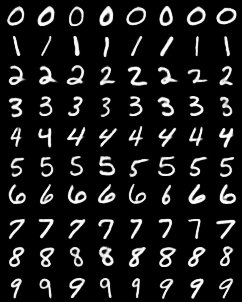}
      \hfill
      \includegraphics[width=0.62\linewidth]{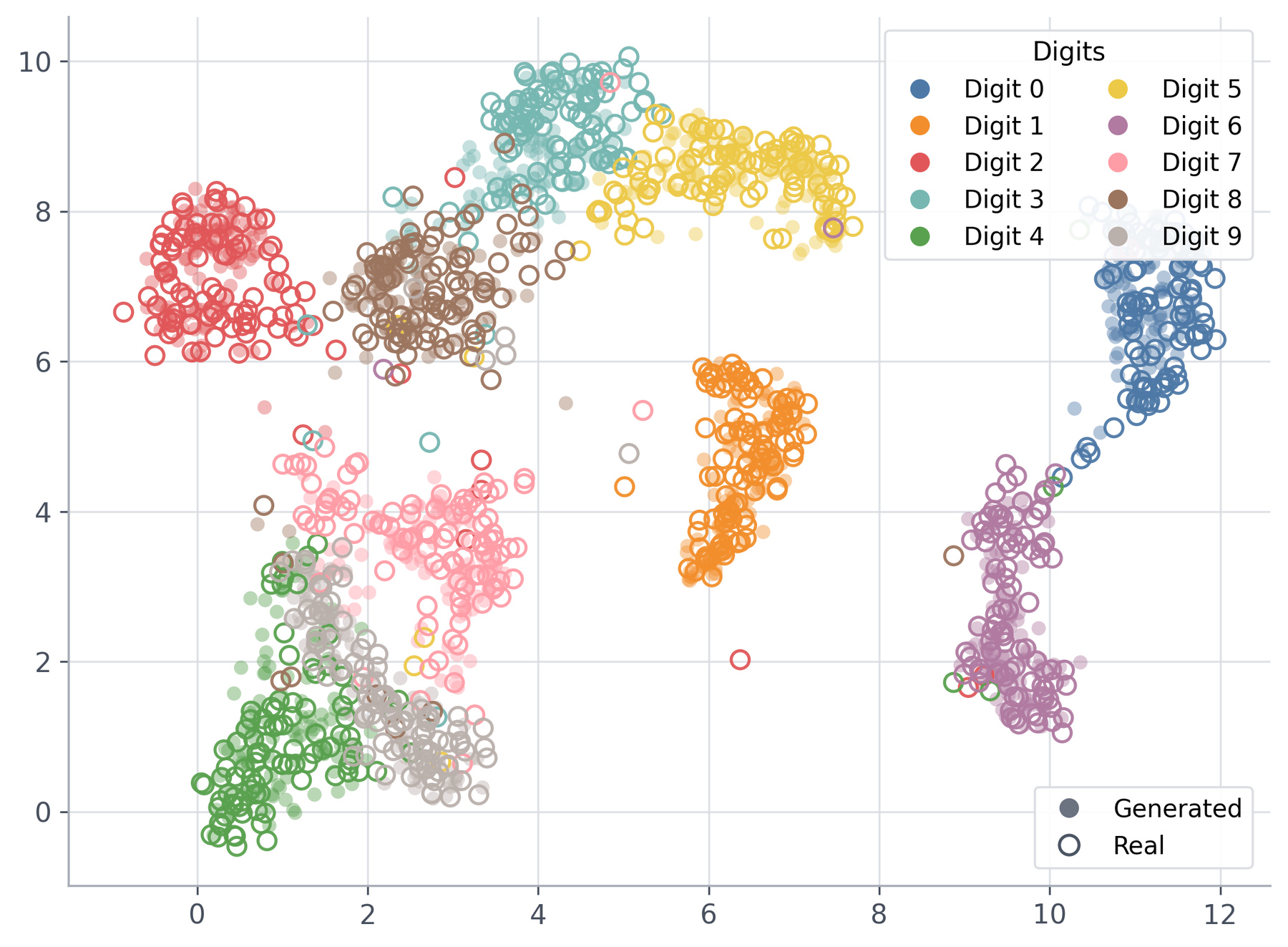}
    \end{minipage}
    &
    \begin{minipage}{0.33\linewidth}
      \centering
      \includegraphics[width=0.35\linewidth]{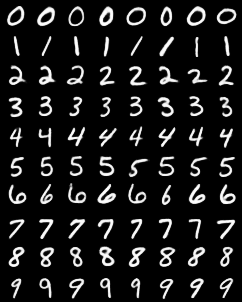}
      \hfill
      \includegraphics[width=0.62\linewidth]{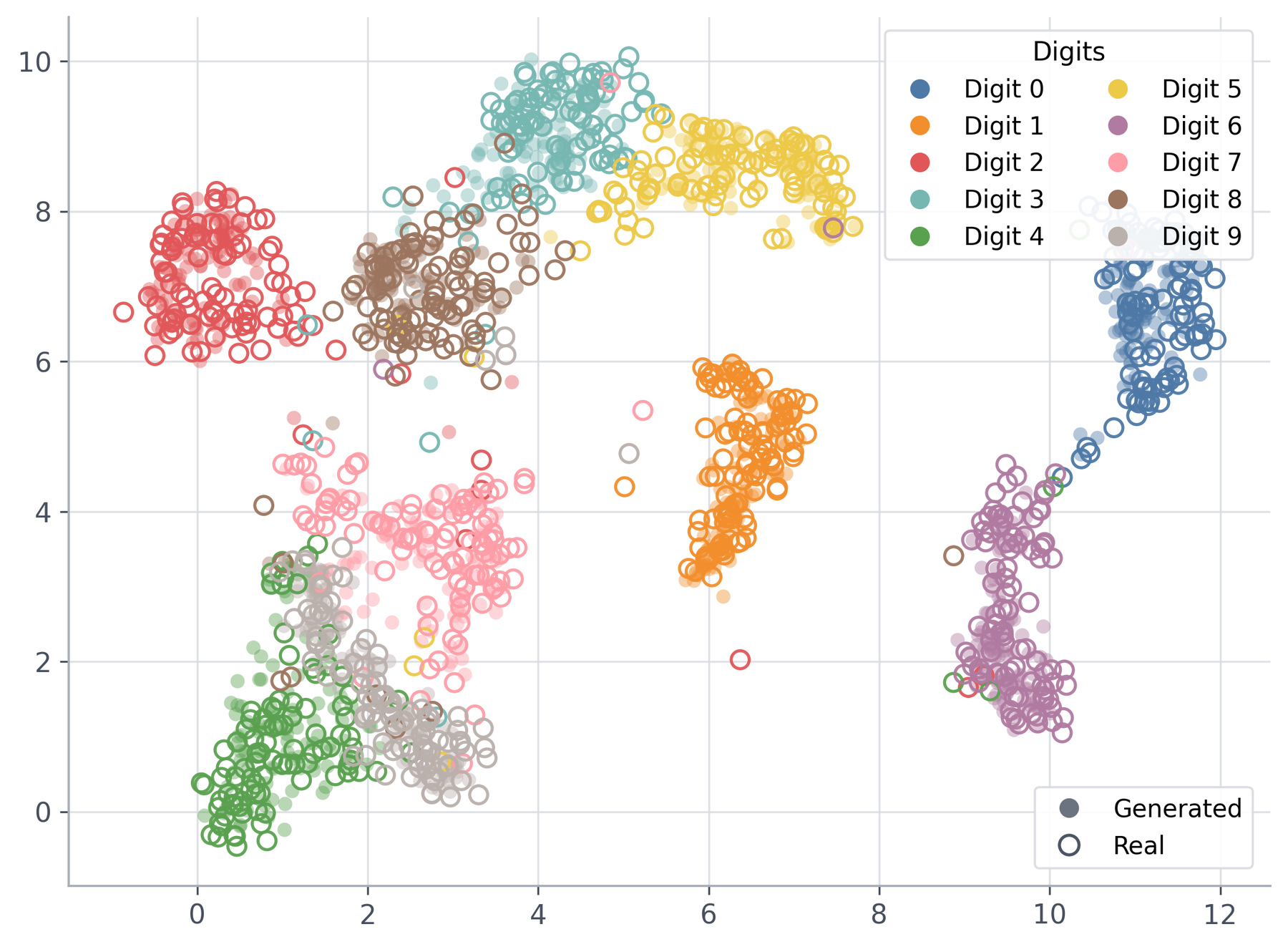}
    \end{minipage}
    &
    \begin{minipage}{0.33\linewidth}
      \centering
      \includegraphics[width=0.35\linewidth]{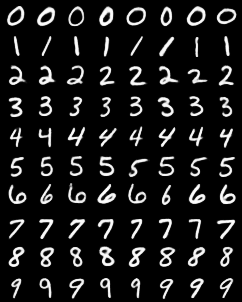}
      \hfill
      \includegraphics[width=0.62\linewidth]{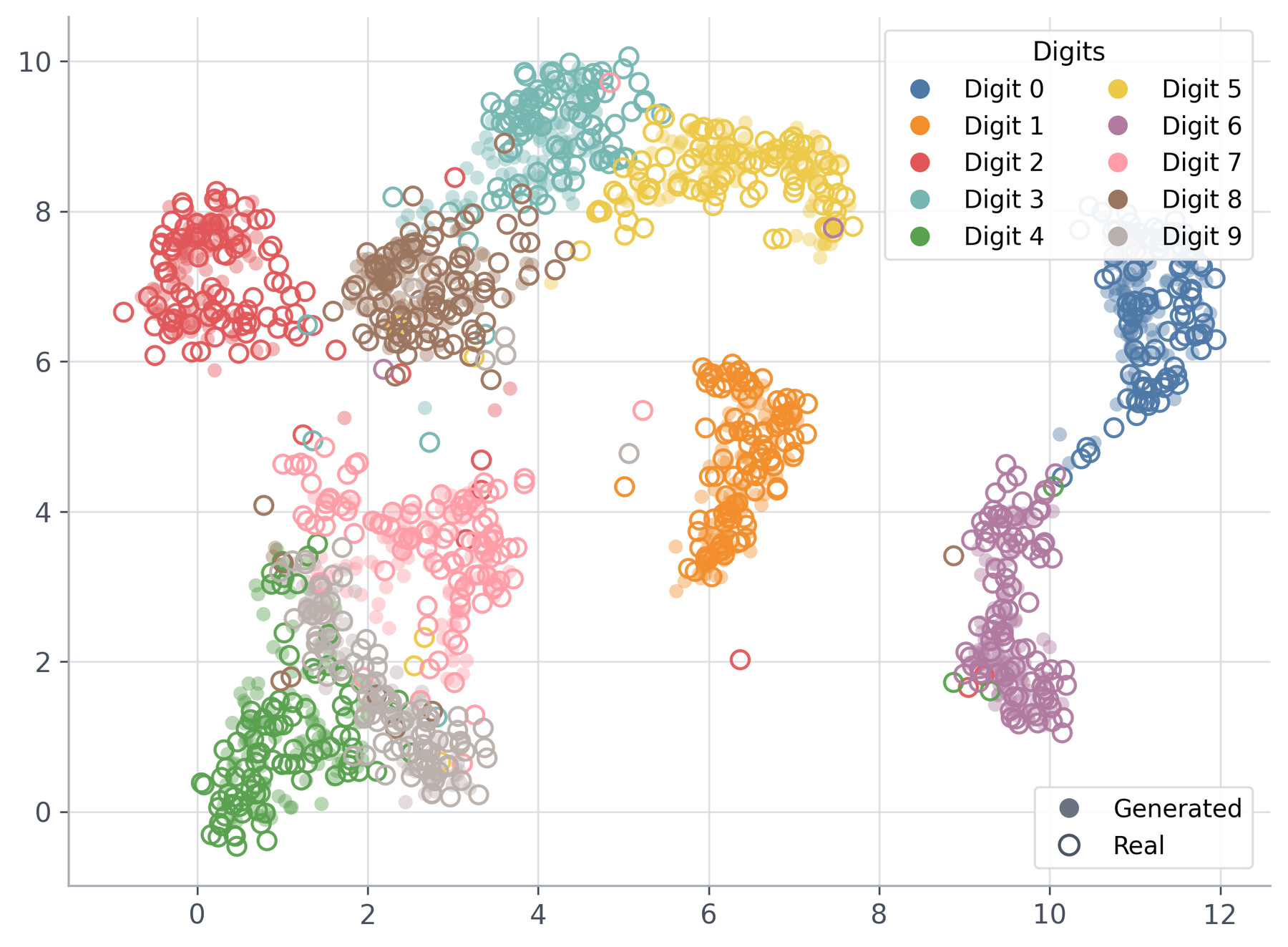}
    \end{minipage}
    \\[0.25em]

    {\scriptsize NFE = 1 (Drift Model~\cite{deng2026generative,he2026sinkhorn})\strut} &
    {\scriptsize NFE = 2\strut} &
    {\scriptsize NFE = 10\strut}
  \end{tabular}

  \vspace{-0.5em}
  \caption{\textbf{MNIST Generation and Latent Space Visualization under Different NFE.}
  Each column corresponds to a different inference step. Within each column, the left image shows the class grid and the right image shows the UMAP visualization~\cite{mcinnes2018umap}.}
  \label{fig:mnist_inference_steps}
  \vspace{-1em}
\end{figure*}

We evaluate class-conditional generation quality on the MNIST dataset~\cite{lecun2002gradient} and report the average per-class EMD, i.e., the squared 2-Wasserstein distance $W_2^2$, in latent space, together with class-conditional generation accuracy across different numbers of inference steps in Table~\ref{tab:mnist_ffhq_comparison}.
The 16-dimensional encoder~\cite{kingma2013auto} and decoder~\cite{kingma2013auto} are first trained on this dataset, after which the generation task is performed in the learned latent space.
Each generator is implemented as the same 3-layer MLP~\cite{rumelhart1986learning} and trained under the same settings.
Accuracy is computed using a classifier trained on the training split, while EMD is computed between the test split and the generated samples.
Generation results under different NFE, $[1,2,10]$, are shown in Figure~\ref{fig:mnist_inference_steps}, including class-wise generated digit grids and UMAP visualizations~\cite{mcinnes2018umap} of the latent representations.

\begin{figure*}[htbp]
  \centering
  \setlength{\tabcolsep}{0pt}
  \renewcommand{\arraystretch}{0.95}
\vspace{-1em}
  \begin{tabular}{cccc}
    \includegraphics[width=0.24\linewidth]{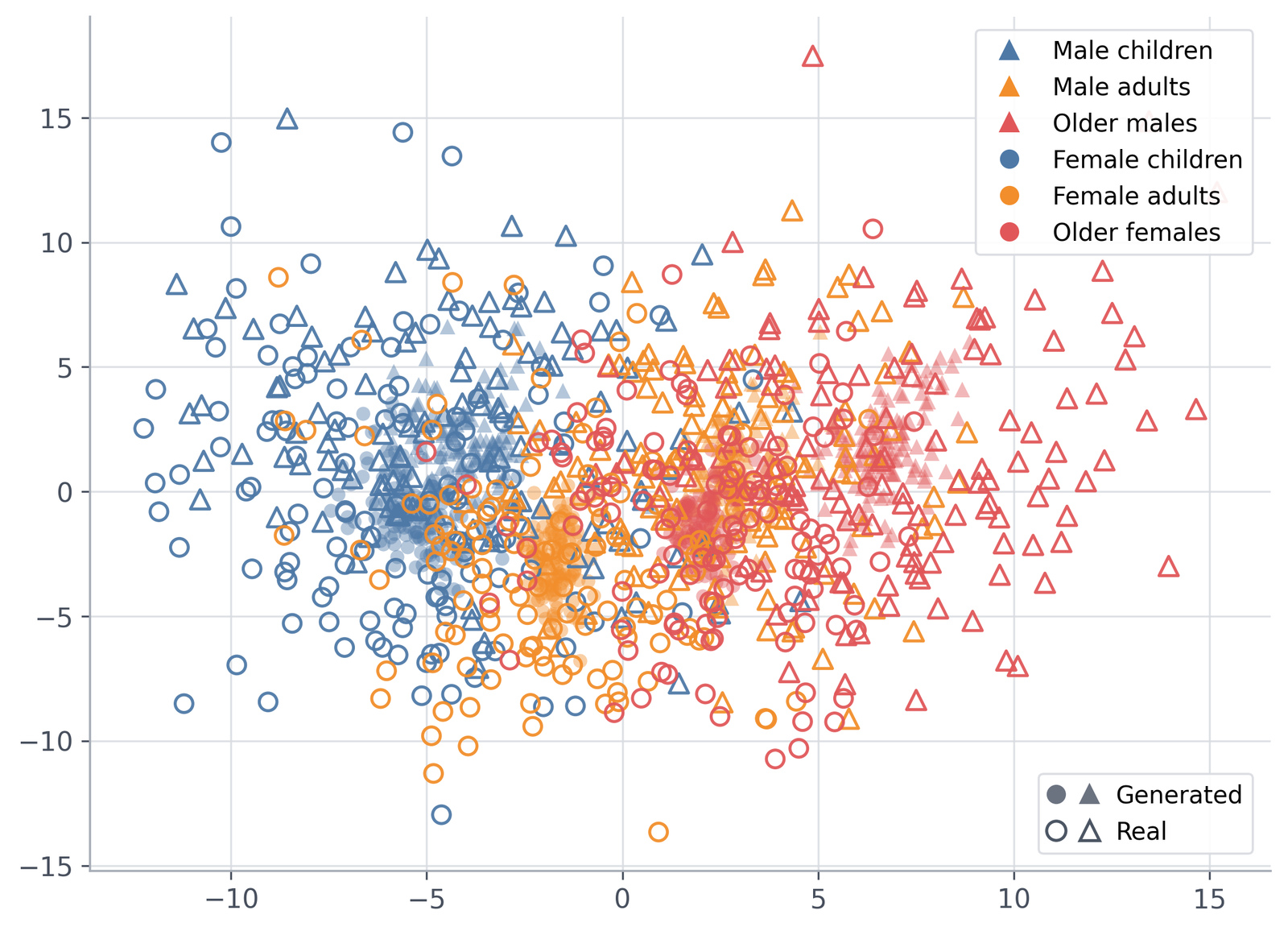} &
    \includegraphics[width=0.24\linewidth]{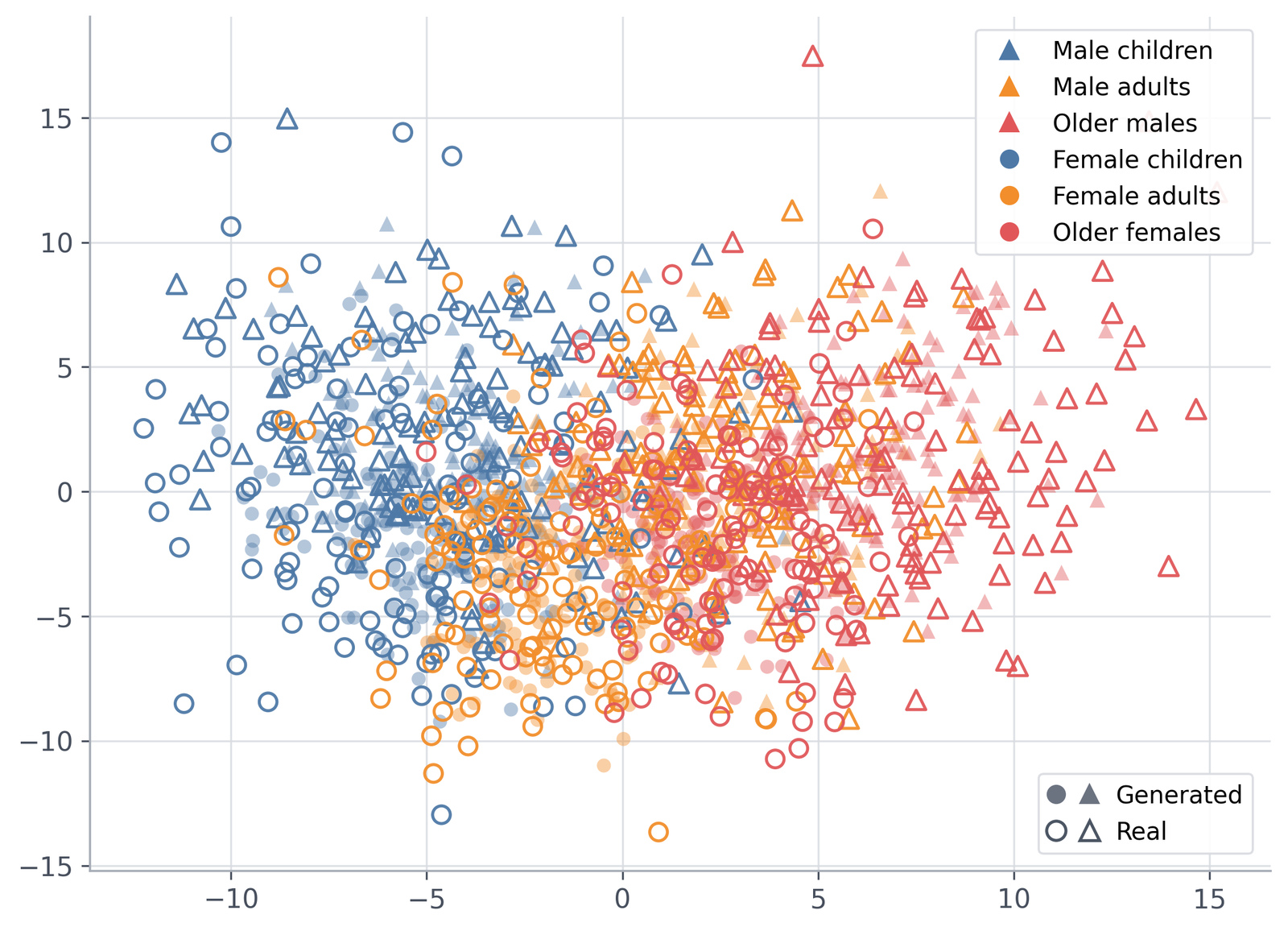} &
    \includegraphics[width=0.24\linewidth]{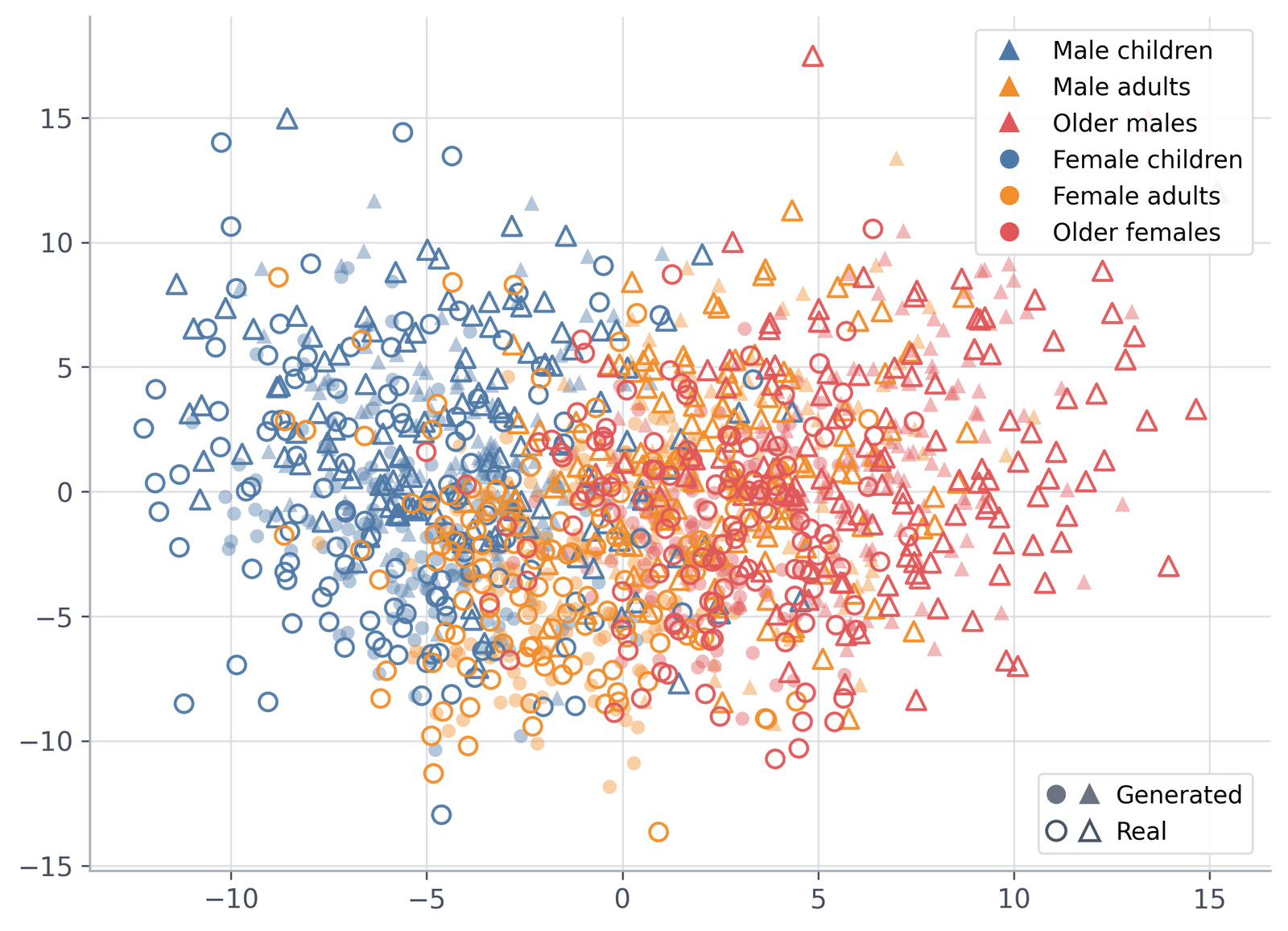} &
    \includegraphics[width=0.24\linewidth]{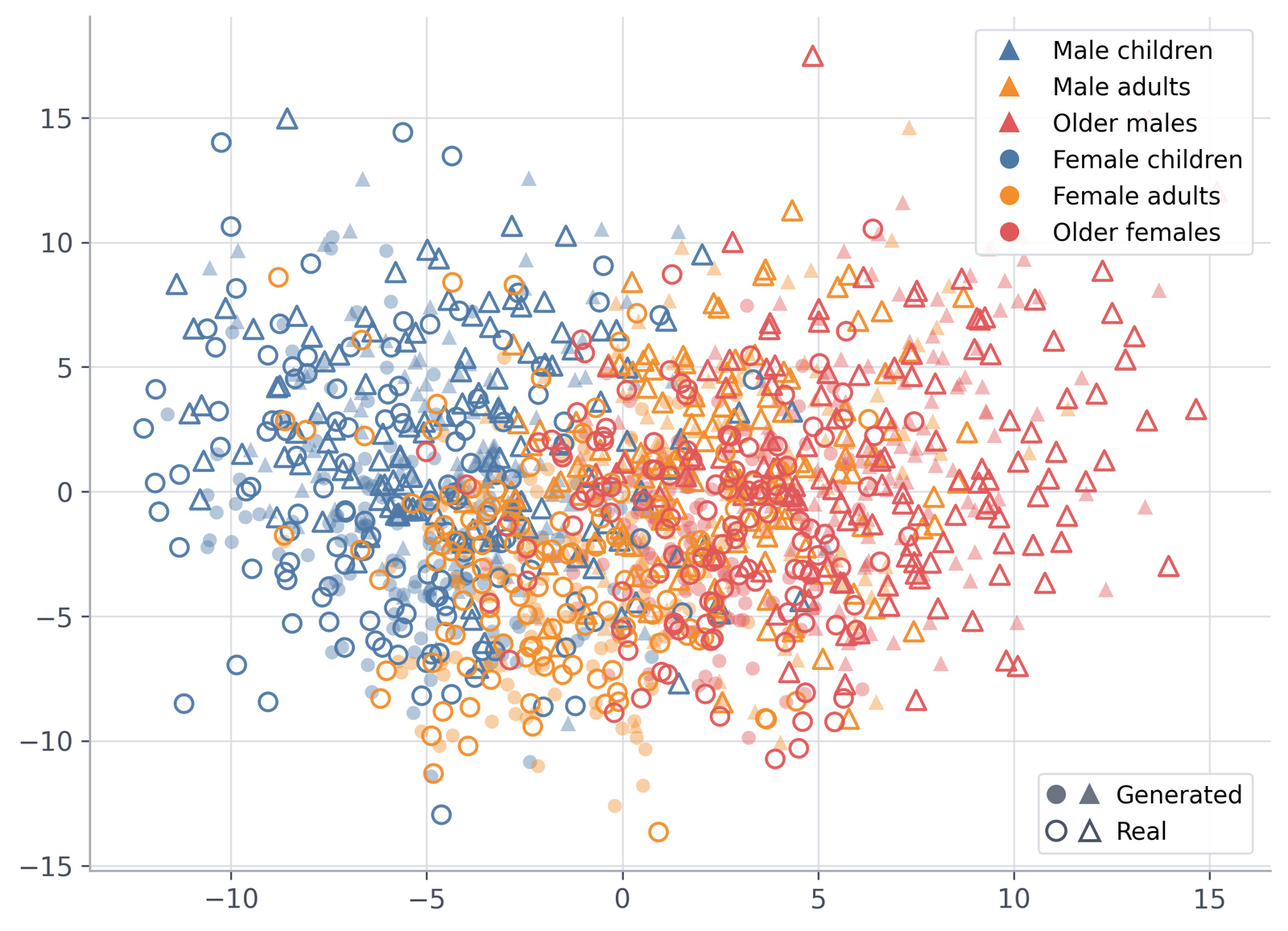} \\[-0.2em]

    \includegraphics[width=0.24\linewidth]{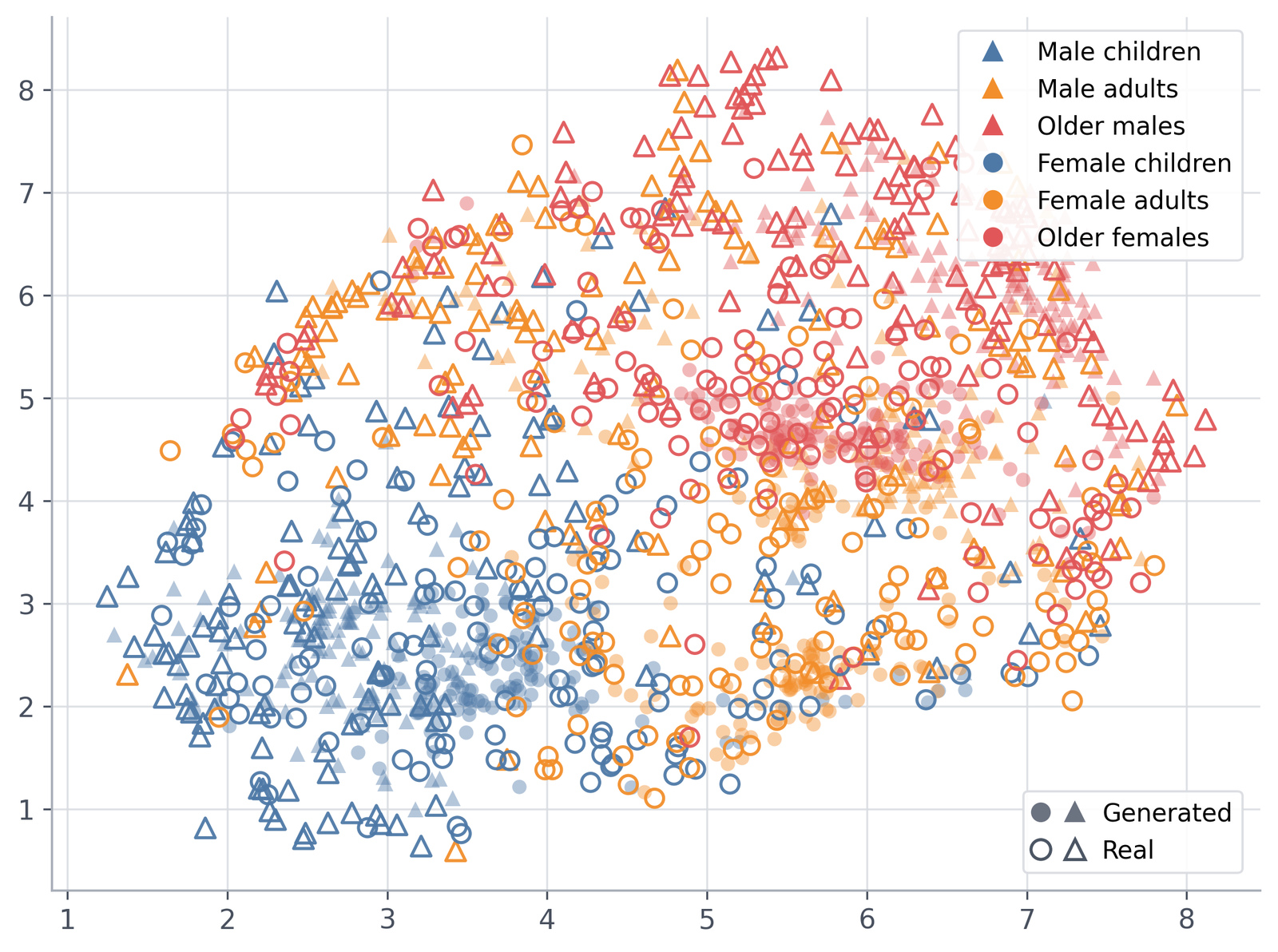} &
    \includegraphics[width=0.24\linewidth]{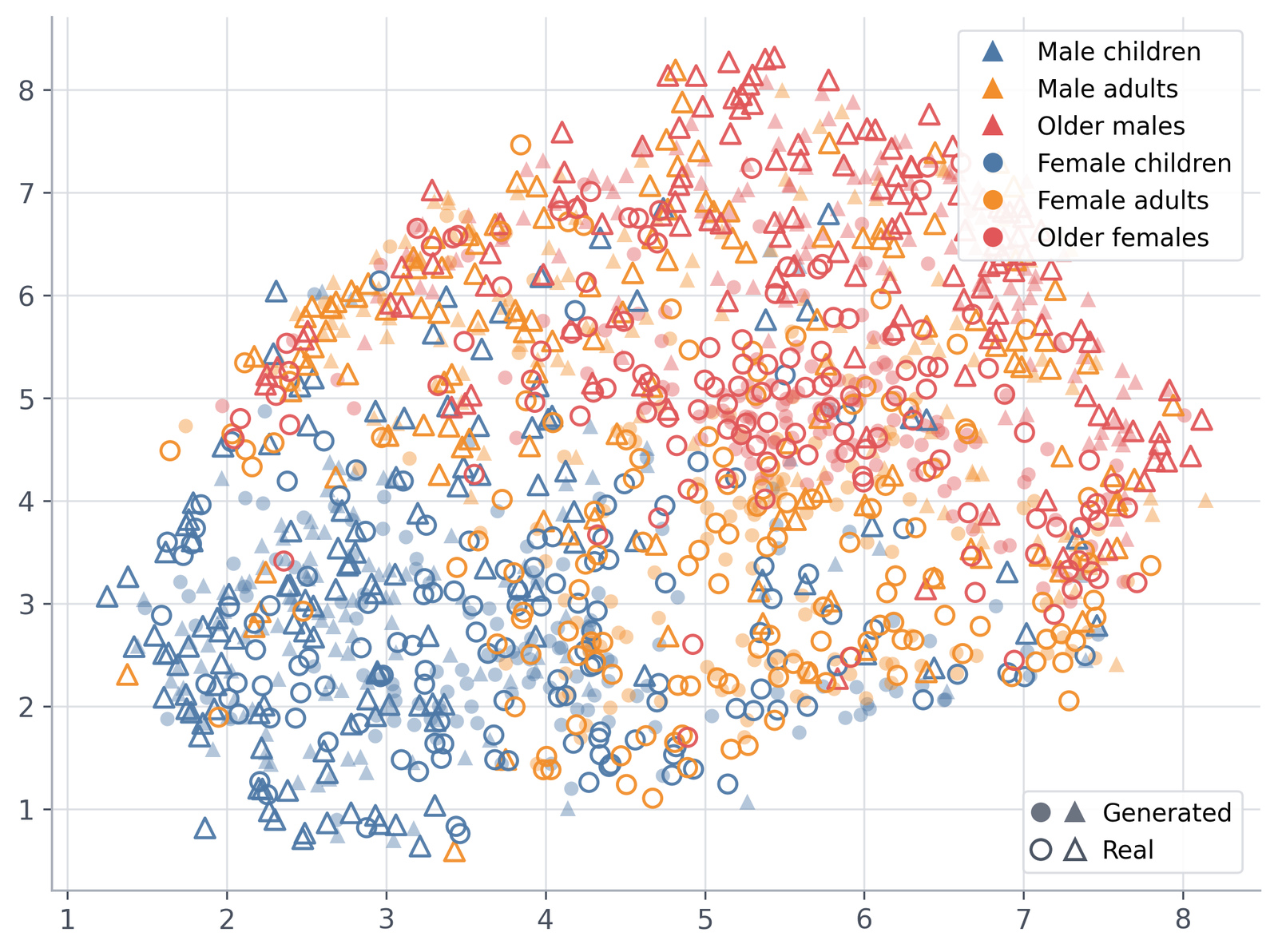} &
    \includegraphics[width=0.24\linewidth]{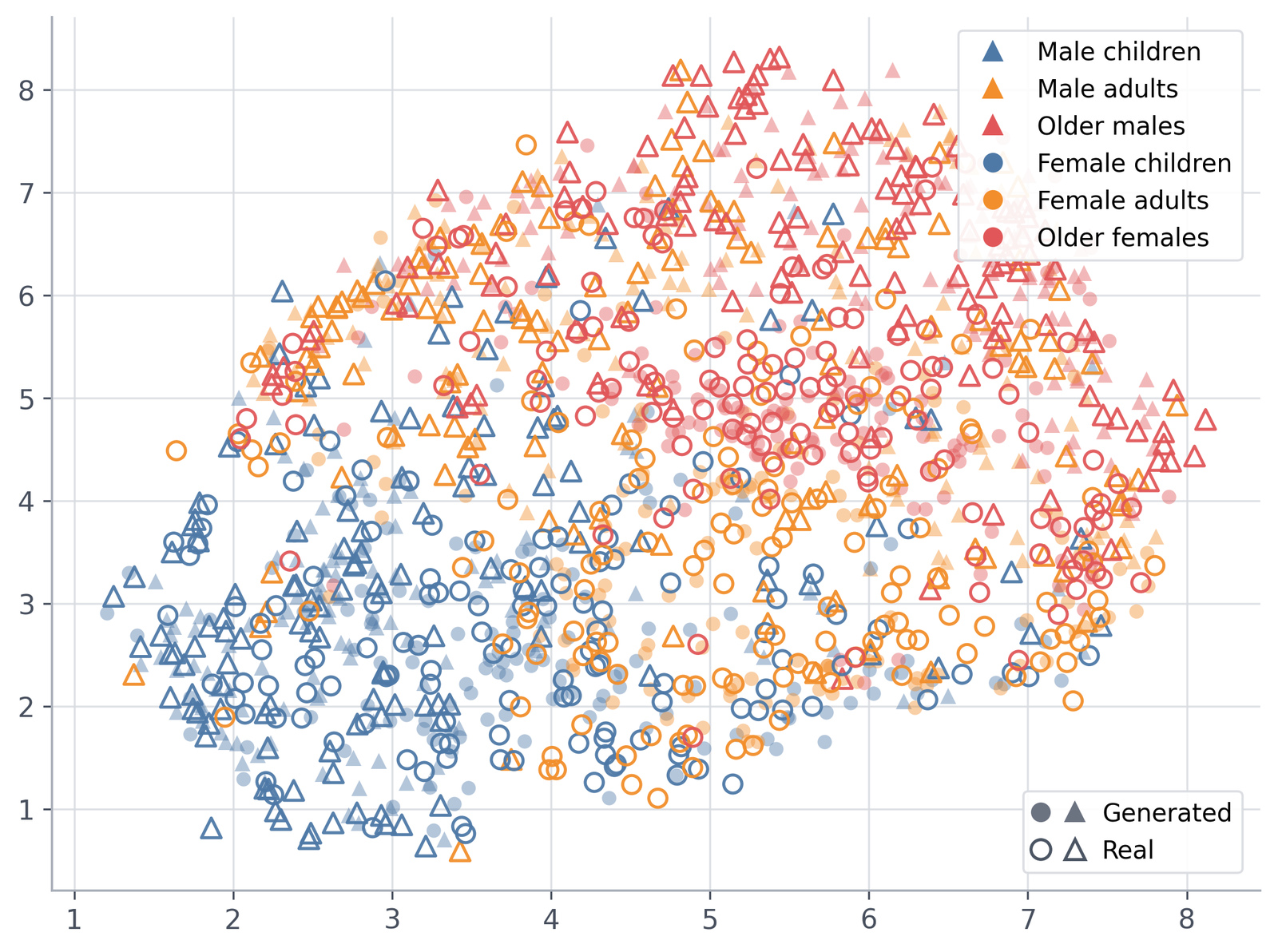} &
    \includegraphics[width=0.24\linewidth]{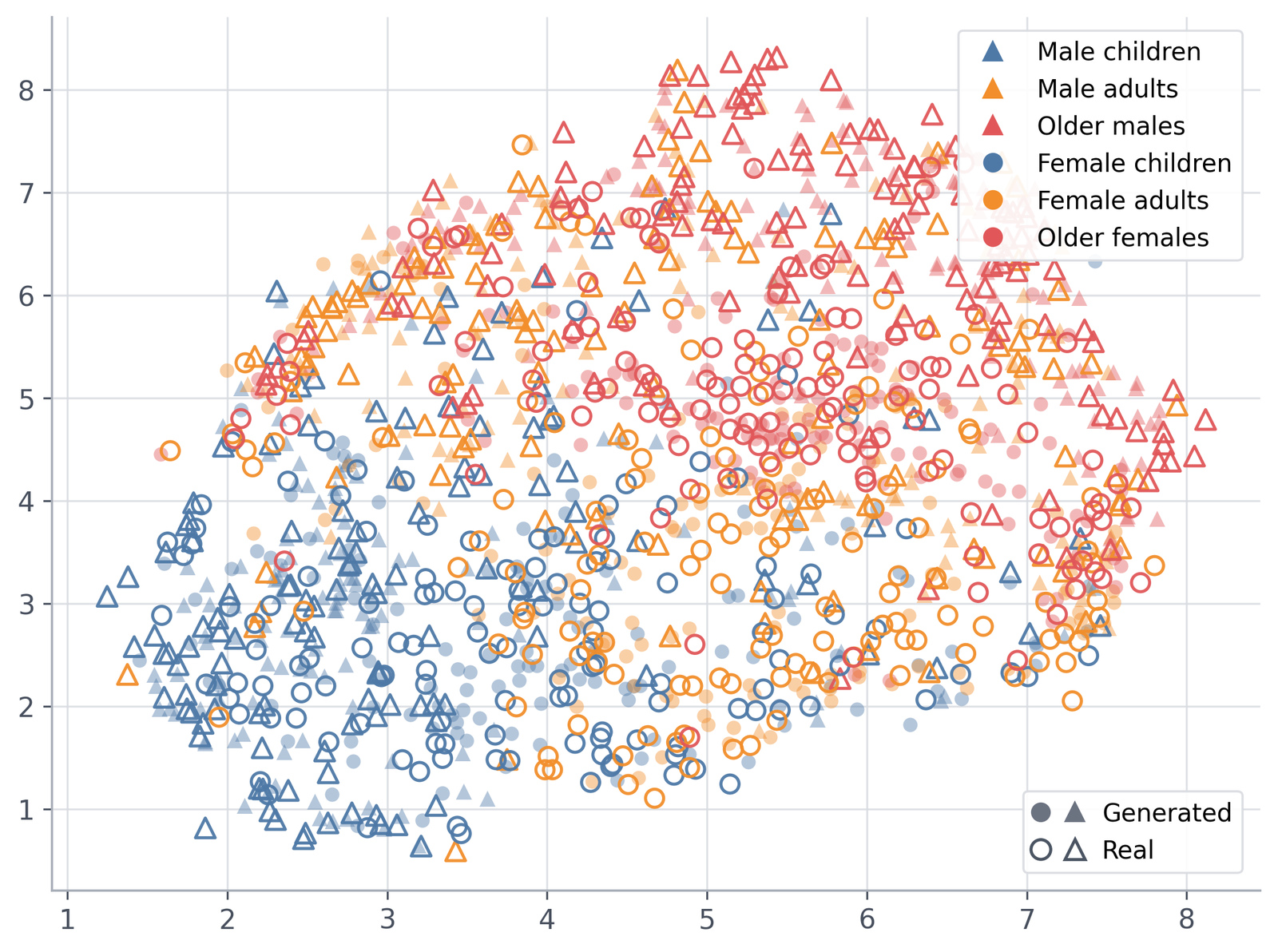} \\[0.25em]

    {\scriptsize NFE = 1 (Drift Model~\cite{deng2026generative,he2026sinkhorn})\strut} &
    {\scriptsize NFE = 2\strut} &
    {\scriptsize NFE = 5\strut} &
    {\scriptsize NFE = 10\strut}
  \end{tabular}

  \vspace{-0.5em}
  \caption{\textbf{FFHQ Latent Space Visualization under Different Inference Steps.}
  Each column corresponds to a different inference step. The top row shows PCA visualizations~\cite{pearson1901liii,hotelling1933analysis}, and the bottom row shows UMAP visualizations~\cite{mcinnes2018umap}.}
  \label{fig:ffhq_latent_inference_steps}
  \vspace{-0.5em}
\end{figure*}

\textbf{Evaluation for Conditional Generation (FFHQ).}
We evaluate class-conditional image generation on FFHQ~\cite{karras2019style} using a pre-trained Adversarial Latent Autoencoder (ALAE)~\cite{pidhorskyi2020adversarial}.
In our pipeline, we train and evaluate in the ALAE latent space ($\mathbb{R}^{512}$); image-space metrics are computed after decoding with the same frozen ALAE decoder at $1024 \times 1024$.
We consider six demographic classes: \{\textit{Male-Children}, \textit{Male-Adult}, \textit{Male-Old}, \textit{Female-Children}, \textit{Female-Adult}, \textit{Female-Old}\}, and train a conditional latent generator $T_{t,r}^\theta: \mathbb{R}^{512+64} \to \mathbb{R}^{512}$, implemented as a 3-layer MLP with hidden width 1024 and learned class embeddings.
We report latent EMD and image FID~\cite{heusel2017gans}, computed between decoded generated images and decoded real images, in Table~\ref{tab:mnist_ffhq_comparison}.
Generation results under different NFE are shown in Figure~\ref{fig:ffhq_latent_inference_steps}, including PCA~\cite{pearson1901liii,hotelling1933analysis} and UMAP~\cite{mcinnes2018umap} visualizations of the latent representations.

These results demonstrate the flexibility of DFM: even one-step generation, which recovers the Drift Model~\cite{he2026sinkhorn}, achieves strong performance compared with other baselines, while increasing the number of inference steps leads to clear improvements, i.e., test-time scaling similar to Flow Matching~\cite{liu2023flow,geng2025mean}.
Moreover, increasing the number of inference steps enhances the coverage of the generated data distribution, which is more evident in Figure~\ref{fig:ffhq_latent_inference_steps}.

\textbf{Visual Generation (ImageNet).}
To test robustness and scalability on large-scale data, we conduct experiments on the ImageNet-1k $256\times256$ dataset~\cite{krizhevsky2012imagenet}. 
Following prior works~\cite{deng2026generative,frans2025one}, all models are implemented in the latent space of a pre-trained VAE tokenizer~\cite{rombach2022highresolutionimagesynthesislatent}. 
For $256\times256$ images, the tokenizer maps each image to a latent representation of size $32\times32\times4$, which serves as the input to the generative model.
The DFM loss in Eq.~\eqref{eq:dfm_loss} is computed in the feature space of a latent-MAE encoder~\cite{he2022masked}, following the Drift Model~\cite{deng2026generative}. 
We adopt the Drift Model~\cite{deng2026generative} as the backbone architecture and follow the same training and evaluation protocol.
We evaluate Fr\'echet Inception Distance (FID)~\cite{heusel2017gans} and Inception Score (IS)~\cite{salimans2016improved} on 50K randomly generated images, as reported in Table~\ref{tab:image256}. 
The results show that our method achieves competitive performance when $\mathrm{NFE}=1$. 
Increasing the NFE further improves generation quality, demonstrating effective test-time scaling.

\textbf{Robotic Control.}
Beyond image generation, we further evaluate our method on robotic control~\cite{mandlekar2021matters,florence2022implicit,shafiullah2022behavior,gupta2019relay}. 
Our experimental design and protocols follow \textit{Diffusion Policy}~\cite{chi2025diffusion}, a multi-step diffusion-based generator, and \textit{Drift Policy}~\cite{deng2026generative}, a one-step drift-based generator, while replacing the core control policy with our \textit{Drift Flow Matching Policy}. 
We directly compute the DFM loss in Eq.~\eqref{eq:dfm_loss} on the raw representations for control tasks, without using an additional feature space~\cite{chi2025diffusion,deng2026generative}. 
The results are reported in Table~\ref{tab:robotics_control}. 
They demonstrate that our method preserves the one-step inference capability of \textit{Drift Policy} while exhibiting clear test-time scaling behavior
across diverse benchmarks~\cite{mandlekar2021matters,florence2022implicit,shafiullah2022behavior,gupta2019relay}.

\begin{table}[t]
\centering
\scriptsize
\resizebox{\textwidth}{!}{
\begin{tabular}{cllcccccc}
\toprule
& & & \multicolumn{1}{c}{Diffusion Policy}
& \multicolumn{1}{c}{Drift Policy}
& \multicolumn{4}{c}{\cellcolor{cvprblue!15}Drift Flow Matching Policy} \\
\cmidrule(lr){4-4}
\cmidrule(lr){5-5}
\cmidrule(lr){6-9}
& Task & Setting
& NFE: 100
& NFE: 1
& \cellcolor{cvprblue!15}NFE: 1
& \cellcolor{cvprblue!15}NFE: 2
& \cellcolor{cvprblue!15}NFE: 5
& \cellcolor{cvprblue!15}NFE: 10 \\
\midrule

\multirow{8}{*}{\rotatebox[origin=c]{90}{{Single-Stage}}}
& Lift
& State  & 0.98 & 1.00 & \cellcolor{cvprblue!15}1.00 & \cellcolor{cvprblue!15}1.00 & \cellcolor{cvprblue!15}1.00 & \cellcolor{cvprblue!15}1.00 \\
& 
& Visual & 1.00 & 1.00 & \cellcolor{cvprblue!15}1.00 & \cellcolor{cvprblue!15}1.00 & \cellcolor{cvprblue!15}1.00 & \cellcolor{cvprblue!15}1.00 \\

& Can
& State  & 0.96 & 0.98 & \cellcolor{cvprblue!15}0.98 & \cellcolor{cvprblue!15}1.00 & \cellcolor{cvprblue!15}1.00 & \cellcolor{cvprblue!15}1.00 \\
& 
& Visual & 0.97 & 0.99 & \cellcolor{cvprblue!15}0.99 & \cellcolor{cvprblue!15}1.00 & \cellcolor{cvprblue!15}1.00 & \cellcolor{cvprblue!15}1.00 \\

& ToolHang
& State  & 0.30 & 0.38 & \cellcolor{cvprblue!15}0.41 & \cellcolor{cvprblue!15}0.77 & \cellcolor{cvprblue!15}0.81 & \cellcolor{cvprblue!15}0.86 \\
& 
& Visual & 0.73 & 0.67 & \cellcolor{cvprblue!15}0.67 & \cellcolor{cvprblue!15}0.76 & \cellcolor{cvprblue!15}0.80 & \cellcolor{cvprblue!15}0.82 \\

& PushT
& State  & 0.91 & 0.86 & \cellcolor{cvprblue!15}0.86 & \cellcolor{cvprblue!15}0.91 & \cellcolor{cvprblue!15}0.93 & \cellcolor{cvprblue!15}0.94 \\
& 
& Visual & 0.84 & 0.86 & \cellcolor{cvprblue!15}0.86 & \cellcolor{cvprblue!15}0.87 & \cellcolor{cvprblue!15}0.89 & \cellcolor{cvprblue!15}0.90 \\

\midrule

\multirow{6}{*}{\rotatebox[origin=c]{90}{{Multi-Stage}}}
& BlockPush
& Phase 1 & 0.36 & 0.56 & \cellcolor{cvprblue!15}0.56 & \cellcolor{cvprblue!15}0.61 & \cellcolor{cvprblue!15}0.62 & \cellcolor{cvprblue!15}0.65 \\
& 
& Phase 2 & 0.11 & 0.16 & \cellcolor{cvprblue!15}0.16 & \cellcolor{cvprblue!15}0.24 & \cellcolor{cvprblue!15}0.29 & \cellcolor{cvprblue!15}0.33 \\

& Kitchen
& Phase 1 & 1.00 & 1.00 & \cellcolor{cvprblue!15}1.00 & \cellcolor{cvprblue!15}1.00 & \cellcolor{cvprblue!15}1.00 & \cellcolor{cvprblue!15}1.00 \\
& 
& Phase 2 & 1.00 & 1.00 & \cellcolor{cvprblue!15}1.00 & \cellcolor{cvprblue!15}1.00 & \cellcolor{cvprblue!15}1.00 & \cellcolor{cvprblue!15}1.00 \\
& 
& Phase 3 & 1.00 & 0.99 & \cellcolor{cvprblue!15}0.99 & \cellcolor{cvprblue!15}1.00 & \cellcolor{cvprblue!15}1.00 & \cellcolor{cvprblue!15}1.00 \\
& 
& Phase 4 & 0.99 & 0.96 & \cellcolor{cvprblue!15}0.98 & \cellcolor{cvprblue!15}0.98 & \cellcolor{cvprblue!15}1.00 & \cellcolor{cvprblue!15}1.00 \\

\bottomrule
\end{tabular}
}
\caption{
Robotics Control: Comparison with \textit{Diffusion Policy}~\citep{chi2025diffusion} and \textit{Drift Policy}~\cite{deng2026generative} under the same experimental setting and evaluation protocol, with only the core policy replaced by our \textit{Drift Flow Matching Policy}.
The table includes four single-stage tasks and two multi-stage tasks.
Success rates are reported as averages over the last 10 checkpoints.
}
\label{tab:robotics_control}
\vspace{-2em}
\end{table}

\textbf{Ablation Study.}
Finally, we conduct ablation experiments on several key variants and design choices, shown in Table~\ref{tab:ablation}, including the time-pair sampling distribution $(t,r)\sim\rho$, the kernel temperature $\tau$, the model parameterization, and the number of sampled time pairs $G$ and the number of positive/negative samples $n_g$ in training time. See Appendix~\S~\ref{app:ablation}.
\section{Conclusion} \label{sec:conclusion}

In this work, we introduce \textbf{Drift Flow Matching (DFM)} as a unified perspective that connects efficient one-step drifting generation with scalable multi-step flow-based generation.
DFM unlocks the test-time scaling capability of Drift Models, allowing generation to adapt to different quality--efficiency requirements.
This establishes a more flexible generative paradigm, where direct generation and iterative refinement are no longer separate design choices but can be viewed within a single framework.
Extensive experiments across different tasks and datasets demonstrate the effectiveness and generality of DFM.

\clearpage

\bibliographystyle{unsrtnat}
\bibliography{references}

\clearpage

\appendix
\addcontentsline{toc}{section}{Appendix} 

\noindent\rule{\textwidth}{0.8pt}
\startcontents[appendix]
\printcontents[appendix]{l}{1}{\section*{Appendix Contents}}
\noindent\rule{\textwidth}{0.8pt}

\section{Preliminary} \label{sec:appendix}

\subsection{Flow Matching} \label{proof:sec:fm}
\paragraph{Random variables and realizations.}
We work in $\mathbb{R}^d$. Uppercase letters (e.g., $X_t, Z$) denote random variables (RVs), and lowercase letters (e.g., $x_t, z$) denote their realizations (points/values). For a density (or probability law) of an RV $X_t$, we write $p_t(\cdot)$, and for a conditional density we write $p_{t\mid Z}(\cdot\mid z)$. Expectations are denoted by $\mathbb{E}[\cdot]$.

\paragraph{Source/target distributions and coupling.}
Let $X_0 \sim p_0$ be the \emph{source} distribution (e.g., standard Gaussian noise) and $X_1 \sim p_1$ be the \emph{target} distribution (e.g., images).
A generative model constructs a continuous path of distributions $\{p_t\}_{t\in[0,1]}$ that transports $p_0$ to $p_1$.
Let $(X_0,X_1)$ be any coupling on $\mathbb{R}^d$ with joint density $\pi$ whose marginals are $p_0$ and $p_1$ (not necessarily independent). In this work, we follow the standard Flow Matching setting, the \emph{source} and the \emph{target} distributions are independent: $\pi(x_0, x_1) = p_0(x_0) \, p_1(x_1)$.

\paragraph{Conditioning variable and conditional paths~\cite{lipman2023flow}.}
We use a conditioning RV $Z$ to index conditional paths.
Conditioned on $Z=z$, we obtain conditional coupling $(X_0^Z, X_1^Z)$ and a conditional path $\{X_t^Z\}_{t\in[0,1]}$ with conditional density $p_{t\mid Z}(\cdot\mid z)$.
The marginal path is $\{X_t\}_{t\in[0,1]}$, with density $p_t(\cdot)$, satisfying:
\begin{equation}
\label{proof:eq:bayes_xt}
p_t(x_t) \;=\; \int p_{t\mid Z}(x_t\mid z)\,p_Z(z) dz,
\qquad
X_t \sim p_t,\ \ X_t\mid(Z=z)\sim p_{t\mid Z}(\cdot\mid z).
\end{equation}

\paragraph{Interpolant.}
We consider a general interpolant between the \emph{marginal} endpoints $X_0$ and $X_1$,
specified by scalar functions $\alpha:[0,1]\to\mathbb{R}$ and $\beta:[0,1]\to\mathbb{R}$:
\begin{equation}
\label{proof:eq:generalInterpolant}
X_t \;=\; \alpha(t)\,X_0 + \beta(t)\,X_1,
\qquad t\in[0,1].
\end{equation}
We assume the boundary conditions $\alpha(0)=1,\ \beta(0)=0$ and $\alpha(1)=0,\ \beta(1)=1$, so that $X_{t=0}=X_0$ and $X_{t=1}=X_1$.
If $\alpha,\beta$ are differentiable, then
\begin{equation}
\label{proof:eq:generalInterpolant_dot}
\frac{d}{dt}X_t
\;=\;
\dot{\alpha}(t)\,X_0 + \dot{\beta}(t)\,X_1.
\end{equation}
Conditioned on $Z=z$, the same schedules induce the conditional path
$X_t^Z=\alpha(t)X_0^Z+\beta(t)X_1^Z$ and $\frac{d}{dt}X_t^Z=\dot{\alpha}(t)X_0^Z+\dot{\beta}(t)X_1^Z$.

\paragraph{Flow Matching vector fields~\cite{lipman2023flow}.}
Let $v(x_t,t\mid z)\in\mathbb{R}^d$ denote a \emph{conditional} velocity field that transports the conditional density $p_{t\mid Z}(\cdot\mid z)$ along time.
The corresponding \emph{marginal} velocity field is defined by conditional expectation:
\begin{equation}
\label{proof:eq:marginalV}
v(x_t,t)
\;=\;
\int v(x_t,t\mid z)\,p_{Z\mid t}(z\mid x_t)\,dz
\;=\;
\mathbb{E}\!\left[\,v(X_t,t\mid Z)\,\middle|\,X_t=x_t\,\right].
\end{equation}
A marginal trajectory, i.e., Flow Matching generation trajectory, follows the ODE:
\begin{equation}
\label{proof:eq:ode_marginal}
\frac{d x_t}{dt} = v(x_t,t), \qquad t\in[0,1],
\end{equation}
and similarly a conditional trajectory follows $\frac{d x_t^z}{dt}=v(x_t,t\mid z)$.

\paragraph{Continuity (transport) equations~\cite{lipman2023flow, lipman2024flowmatchingguidecode}.}
The evolution of densities induced by these velocity fields is characterized by the continuity (transport) equation.
For the marginal density $p_t$,
\begin{equation}
\label{proof:eq:continuity_marginal}
\partial_t p_t(x_t) + \nabla\!\cdot\!\big(p_t(x_t)\,v(x_t,t)\big)=0,
\qquad t\in[0,1].
\end{equation}
Conditioned on $Z=z$, the conditional density $p_{t\mid Z}(\cdot\mid z)$ satisfies
\begin{equation}
\label{proof:eq:continuity_conditional}
\partial_t p_{t\mid Z}(x_t\mid z) + \nabla\!\cdot\!\big(p_{t\mid Z}(x_t\mid z)\,v(x_t,t\mid z)\big)=0,
\qquad t\in[0,1].
\end{equation}
Flow Matching learns a parameterized velocity field (or equivalent dynamics) so that the induced marginal path $\{p_t\}$ solves Eq.\ \eqref{proof:eq:continuity_marginal} with boundary conditions $p_{t=0}=p_0$ and $p_{t=1}=p_1$.

\paragraph{Learning Flow Matching.}
Flow Matching introduce a velocity field model $v^\theta(X_t,t)$ to learn $v(X_t,t)$, ideally, by minimizing the marginal Flow Matching loss:
\begin{equation}
\label{proof:eq:MFM}
\mathcal{L}_\mathrm{MFM}(\theta) = \mathbb{E}_{t,\;X_t\sim p_t}\,
D\Big(v(X_t,t),\,v^\theta(X_t,t)\Big)
\end{equation}
However, since the marginal velocity $v(X_t,t)$ in Eq.\eqref{proof:eq:marginalV} is not tractable, so the marginal loss Eq.\eqref{proof:eq:MFM} above cannot be computed as is. Instead, we minimize the conditional Flow Matching loss:
\begin{equation}
\label{proof:eq:CFM}
\mathcal{L}_\mathrm{CFM}(\theta) = \mathbb{E}_{t,\;Z,\;X_t\sim p_{t\mid Z}(\cdot \mid Z)}\,
D\Big(v(X_t,t \mid Z),\,v^\theta(X_t,t)\Big)
\end{equation}
The two losses Eq.\eqref{proof:eq:MFM} and Eq.\eqref{proof:eq:CFM} are equivalent for learning purposes, since their gradients coincide:
\begin{theorem}[Gradient equivalence of Flow Matching~\cite{lipman2024flowmatchingguidecode}]\label{proof:thm:MequivC_FM}
The gradients of the marginal Flow Matching loss and the conditional Flow Matching loss coincide:
\begin{equation}
\label{proof:eq:marginalEqvconditional_FM}
\nabla_\theta \mathcal{L}_\mathrm{MFM}(\theta)
\;=\;
\nabla_\theta \mathcal{L}_\mathrm{CFM}(\theta)
\end{equation}
In particular, the minimizer of the conditional Flow Matching loss is the marginal velocity $v(x_t, t)$.
\end{theorem}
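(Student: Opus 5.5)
The plan is to show that $\mathcal{L}_\mathrm{CFM}(\theta)-\mathcal{L}_\mathrm{MFM}(\theta)$ does not depend on $\theta$. Differentiating then gives Eq.~\eqref{proof:eq:marginalEqvconditional_FM} immediately, and the minimizer claim follows as well.

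Since the statement leaves $D$ unspecified, I will take $D$ to be a Bregman divergence, $D(a,b)=\phi(a)-\phi(b)-\langle\nabla\phi(b),a-b\rangle$ for a strictly convex differentiable $\phi$. This is the standard setting of \cite{lipman2024flowmatchingguidecode}, and the squared Euclidean distance $\|a-b\|^2$ (i.e., $\phi=\|\cdot\|^2$) is the main case. I will also assume enough integrability that $\nabla_\theta$ can be exchanged with $\mathbb{E}$, for instance by dominated convergence with $v^\theta$ smooth in $\theta$ and square-integrable conditional velocities.

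\textbf{Step 1: rewrite the conditional loss as an expectation over $X_t$.} The joint law of $(t,Z,X_t)$ with $X_t\sim p_{t\mid Z}(\cdot\mid Z)$ has $X_t$-marginal $p_t$ by Eq.~\eqref{proof:eq:bayes_xt}. Its conditional law of $Z$ given $X_t=x_t$ is $p_{Z\mid t}(\cdot\mid x_t)$ by Bayes' rule. Hence the conditional loss can be written as $\mathbb{E}_{t,X_t\sim p_t}\,\mathbb{E}_{Z\mid X_t}[\,\cdot\,]$.

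\textbf{Step 2: expand the Bregman divergence in its first argument.} With $a=v(X_t,t\mid Z)$ and $b=v^\theta(X_t,t)$:
\begin{itemize}
\item the term $\phi(a)$ does not involve $\theta$;
\item the remaining terms $-\phi(b)+\langle\nabla\phi(b),b\rangle-\langle\nabla\phi(b),a\rangle$ are affine in $a$, with coefficients that depend only on $(X_t,t,\theta)$.
\end{itemize}
Taking $\mathbb{E}_{Z\mid X_t}$ replaces $a$ by $\mathbb{E}[v(X_t,t\mid Z)\mid X_t]$ in the affine part. By definition Eq.~\eqref{proof:eq:marginalV}, this conditional expectation is exactly $v(X_t,t)$. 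This tower-property step is the key identity.

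\textbf{Step 3: identify the constant.} After Step 2, the $\theta$-dependent part of the conditional loss coincides with the $\theta$-dependent part of $\mathbb{E}\,D(v(X_t,t),v^\theta(X_t,t))=\mathcal{L}_\mathrm{MFM}(\theta)$. Therefore
\[
\mathcal{L}_\mathrm{CFM}(\theta)=\mathcal{L}_\mathrm{MFM}(\theta)+\mathbb{E}\big[\phi(v(X_t,t\mid Z))-\phi(v(X_t,t))\big],
\]
where the last expectation is independent of $\theta$. In the squared-norm case this constant is the expected conditional variance $\mathbb{E}\|v(X_t,t\mid Z)-v(X_t,t)\|^2$. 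Differentiating in $\theta$ gives the gradient identity.

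\textbf{Step 4: the minimizer.} Because the two losses differ by a constant, they have the same minimizers. $\mathcal{L}_\mathrm{MFM}\ge 0$, and it vanishes exactly when $v^\theta(\cdot,t)=v(\cdot,t)$ $p_t$-a.e., by strict convexity of $\phi$. So, in a sufficiently expressive model class, the minimizer of $\mathcal{L}_\mathrm{CFM}$ is the marginal velocity.

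The main obstacle is less algebraic than technical. First, I must justify exchanging $\nabla_\theta$ with the expectation and the conditioning in Step 1, which requires finiteness of $\mathbb{E}\,\phi(v(X_t,t\mid Z))$. Second, I must make explicit that $D$ is Bregman: for a general divergence the affine-in-$a$ structure used in Step 2 fails, and the claim need not hold.
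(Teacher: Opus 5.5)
Your proof is correct. The paper gives no proof of its own and simply imports the theorem from \cite{lipman2024flowmatchingguidecode}. The standard argument there works at the level of gradients: differentiate under the expectation, use that $\nabla_b D(a,b)=\nabla^2\phi(b)\,(b-a)$ is affine in $a$, and apply the tower property with the definition \eqref{proof:eq:marginalV} of $v(x_t,t)$ to pass from $\mathcal{L}_\mathrm{MFM}$ to $\mathcal{L}_\mathrm{CFM}$. You use the same two ingredients, Bregman affinity and the tower property, one level earlier, on the loss itself, through the affinity of $D(a,b)-\phi(a)$ in $a$. This gives the stronger identity $\mathcal{L}_\mathrm{CFM}(\theta)=\mathcal{L}_\mathrm{MFM}(\theta)+K$, where $K=\mathbb{E}[\phi(v(X_t,t\mid Z))-\phi(v(X_t,t))]\ge 0$ is a $\theta$-independent Jensen gap. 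Both the gradient identity and the minimizer claim follow from it immediately. Your route needs only $\phi\in C^1$, with no Hessian. Contrary to what you list as the main obstacle, it also never requires interchanging $\nabla_\theta$ with $\mathbb{E}$. Two functions of $\theta$ that differ by a finite constant have the same gradient wherever either is differentiable, and the disintegration in Step 1 involves no derivatives. The price is the assumption $\mathbb{E}\,\phi(v(X_t,t\mid Z))<\infty$. That is essentially what is needed for $\mathcal{L}_\mathrm{CFM}$ to be finite in the first place, so it costs little. The gradient-level route instead needs $\phi\in C^2$ and a justification for differentiating under the integral sign. Your insistence that $D$ be a Bregman divergence is well placed, since the paper leaves $D$ unspecified in \eqref{proof:eq:MFM} and \eqref{proof:eq:CFM}.
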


\begin{remark}[Standard Flow Matching] \label{proof:remark:FM}
Flow Matching sets the conditioning variable to be the endpoint pair
\[
Z \;=\; (X_0,X_1),
\]
so that conditioning on $Z=z$ fixes the endpoint pair $(X_0^Z,X_1^Z) = (X_0,X_1)$.
Choosing the linear schedules $\alpha(t)=1-t$ and $\beta(t)=t$ yields the conditional path
\[
X_t^Z \;=\; (1-t)X_0^Z + tX_1^Z \;=\; (1-t)X_0 + tX_1,\qquad 0\le t\le 1
\]
hence the time-derivative of this conditional path is constant:
\[
\frac{d}{dt}X_t^Z \;=\; X_1^Z - X_0^Z \;=\; X_1 - X_0.
\]
Therefore, for $Z=(X_0,X_1)$, the conditional velocity used as supervision in the conditional Flow Matching loss is a constant:
\[
v(X_t,t\mid Z) \;=\; X_1^Z - X_0^Z \;=\; X_1 - X_0,
\]

With this setup, the Flow Matching objective in Eq.~\eqref{proof:eq:CFM} can be written explicitly as
\[
\mathcal{L}_\mathrm{FM}(\theta)
\;=\;
\mathbb{E}_{t,\;X_0\sim p_0,\;X_1\sim p_1}\,
D\Big(X_1 - X_0,\;v^\theta(X_t,t)\Big),
\qquad
X_t=(1-t)X_0+tX_1,
\]
i.e., one samples $t\sim\mathrm{Unif}[0,1]$, draws independent endpoints $X_0\sim p_0$ and $X_1\sim p_1$,
forms the interpolated state $X_t$, and regresses the model $v^\theta(X_t,t)$ to the constant target $X_1-X_0$.
By Theorem~\ref{proof:thm:MequivC_FM}, minimizing this conditional loss yields the marginal velocity field $v(x_t,t)$ in Eq.~\eqref{proof:eq:marginalV}.
\end{remark}

\subsection{Drift Method}

\paragraph{Drift Velocity Field~\cite{deng2026generative,he2026sinkhorn}.}
Let $p$ denote a fixed target distribution on $\mathbb{R}^d$, and let $q$ denote the current model distribution. The Drift is constructed as a velocity field
$V_{q,p} : \mathbb{R}^d \to \mathbb{R}^d$:
\begin{equation}
V_{q,p}(x) = V_p^{+}(x) - V_q^{-}(x),
\label{proof:eq:drift-velocity-field}
\end{equation}
where each term is a kernel-weighted (and normalized) average of displacement vectors~\cite{deng2026generative,he2026sinkhorn,lai2026unified}. Concretely, given a positive kernel
$k : \mathbb{R}^d \times \mathbb{R}^d \to \mathbb{R}_{+}$
and the normalizers
\[
Z_p(x) := \int k(x,y)\,dp(y), \qquad
Z_q(x) := \int k(x,y)\,dq(y),
\]
the Drift update is
\begin{equation}
V_p^{+}(x) := \frac{1}{Z_p(x)} \int k(x,y)(y-x)\,dp(y),
\qquad
V_q^{-}(x) := \frac{1}{Z_q(x)} \int k(x,y)(y-x)\,dq(y).
\label{proof:eq:drift-positive-negative-components}
\end{equation}

Intuitively, $V_p^{+}(x)$ attracts $x$ toward the target $p$, while $V_q^{-}(x)$ subtracts an analogous attraction toward the current model $q$, producing a repulsive (self-correction) effect~\cite{deng2026generative,he2026sinkhorn}.

We use the Gibbs kernel
$
k(x,y)=e^{-\frac{C(x,y)}{\tau}}
$\footnote{The kernel used in \cite{deng2026generative} is $e^{-\frac{\lVert x-y\rVert}{\tau}}$, whereas the kernel used in \cite{he2026sinkhorn} is $e^{-\frac{\frac{1}{2}\lVert x-y\rVert^2}{\tau}}$.}
where $C(x,y)$ is a cost function between $x$ and $y$; by default, we set
$
C(x,y)=\frac{1}{2}\lVert x-y\rVert^2
$
and use constant temperature $\tau>0$.

In the discrete setting, let
$
p=\frac{1}{n}\sum_{j=1}^n \delta_{y_j}
$ and $
q=\frac{1}{n}\sum_{i=1}^n \delta_{x_i}
$
be empirical measures with samples
$Y=\{y_j\}_{j=1}^n$
and
$X=\{x_i\}_{i=1}^n$.
Then
\begin{equation}
V_{q,p}(x_i)
=
\sum_{j=1}^n P_{XY}^{\mathrm{drift}}[i,j]\,y_j
-
\sum_{j=1}^n P_{XX}^{\mathrm{drift}}[i,j]\,x_j,
\label{proof:eq:discrete-drift-update}
\end{equation}
where
\[
P_{XY}^{\mathrm{drift}}[i,j]
:=
\frac{k(x_i,y_j)}{\sum_{l=1}^n k(x_i,y_l)},
\qquad
P_{XX}^{\mathrm{drift}}[i,j]
:=
\frac{k(x_i,x_j)}{\sum_{l=1}^n k(x_i,x_l)}.
\]

\paragraph{Drift generative model.}
Drift generative models define a one-step generative model:
\begin{equation}
f_\theta : \mathbb{R}^d \to \mathbb{R}^d,
\qquad
\epsilon \mapsto x_\theta := f_\theta(\epsilon),
\label{proof:eq:one-step-generative-model}
\end{equation}
where $\epsilon \sim p_\epsilon$ for some prior distribution $p_\epsilon$. The drift flow is the probability path
$
q_\theta := (f_\theta)_{\#}p_\epsilon
$
generated by $V_{q_\theta,p}$~\cite{lai2026unified}:
\begin{equation}
\dot{x}_\theta = V_{q_\theta,p}(x_\theta),
\qquad
\forall x_\theta = f_\theta(\epsilon),
\label{proof:eq:drift-flow-ode}
\end{equation}
which is approximated by its forward Euler process in the time discretization scheme~\cite{lai2026unified}:
\begin{equation}
x_{\theta_{k+1}} = x_{\theta_k} + V_{q_\theta,p}(x_{\theta_k}).
\label{proof:eq:forward-euler-drift}
\end{equation}

The training loss is then set to be
\begin{equation}
\mathcal{L}^{\mathrm{drift}}
:=
\frac{1}{2}\mathbb{E}_{\epsilon}
\left[
\left\|
f_\theta(\epsilon)
-
\operatorname{sg}\!\left(
f_\theta(\epsilon)+V_{q_\theta,p_{\mathrm{data}}}(f_\theta(\epsilon))
\right)
\right\|^2
\right],
\label{proof:eq:drift-training-loss}
\end{equation}
where $\operatorname{sg}$ is the stop-gradient operator, and the optimization scheme is based on gradient descent:
\begin{equation}
\theta \leftarrow \theta - \eta \nabla_\theta \mathcal{L}^{\mathrm{drift}},
\qquad
\nabla_\theta \mathcal{L}^{\mathrm{drift}}
=
-\mathbb{E}_{\epsilon}
\left[
J_f(\theta,\epsilon)^\top
V_{q_\theta,p_{\mathrm{data}}}(f_\theta(\epsilon))
\right],
\label{proof:eq:drift-gradient-update}
\end{equation}
where $\eta$ is the learning rate, and $J_{f}(\theta,\epsilon) \in \mathbb{R}^{d \times \dim(\theta)}$ denotes the Jacobian. It can be verified that the above parameter update~\eqref{proof:eq:drift-gradient-update} induces the drift flow~\eqref{proof:eq:drift-flow-ode}. Proof and related details are shown in \cite{he2026sinkhorn}.

\begin{remark}[Identifiability of the drift objective]
\label{proof:remark:drift_identifiability}
The drifting objective Eq.~\eqref{proof:eq:drift-training-loss} minimizes the magnitude of the drift field, i.e., $\|V_{q,p}\|^2$.
By construction, the equilibrium condition $q=p$ implies $V_{q,p}(x)=0$ for all $x$~\cite{deng2026generative,he2026sinkhorn,lai2026unified}.
However, the converse implication does not hold in general and depends on the choice of kernel and the distribution family.
Under suitable non-degeneracy conditions (e.g., sufficiently expressive kernels or feature representations), the condition $V_{q,p} \approx 0$ can serve as a useful surrogate for distribution matching, in the sense that it encourages $q \approx p$. This heuristic is supported both empirically and by recent theoretical analyses of drifting-based methods~\cite{deng2026generative,he2026sinkhorn,lai2026unified}.
We note that stronger drift constructions (e.g., those derived from optimal transport objectives) can improve identifiability by making the zero-drift condition more directly correspond to distribution matching~\cite{he2026sinkhorn,lai2026unified}. Our formulation is compatible with such improvements.
\end{remark}

\section{Relation to Prior Work}
\label{app:relation2prior}

\begin{figure}[htbp]
    \centering
    \includegraphics[width=1.0\linewidth]{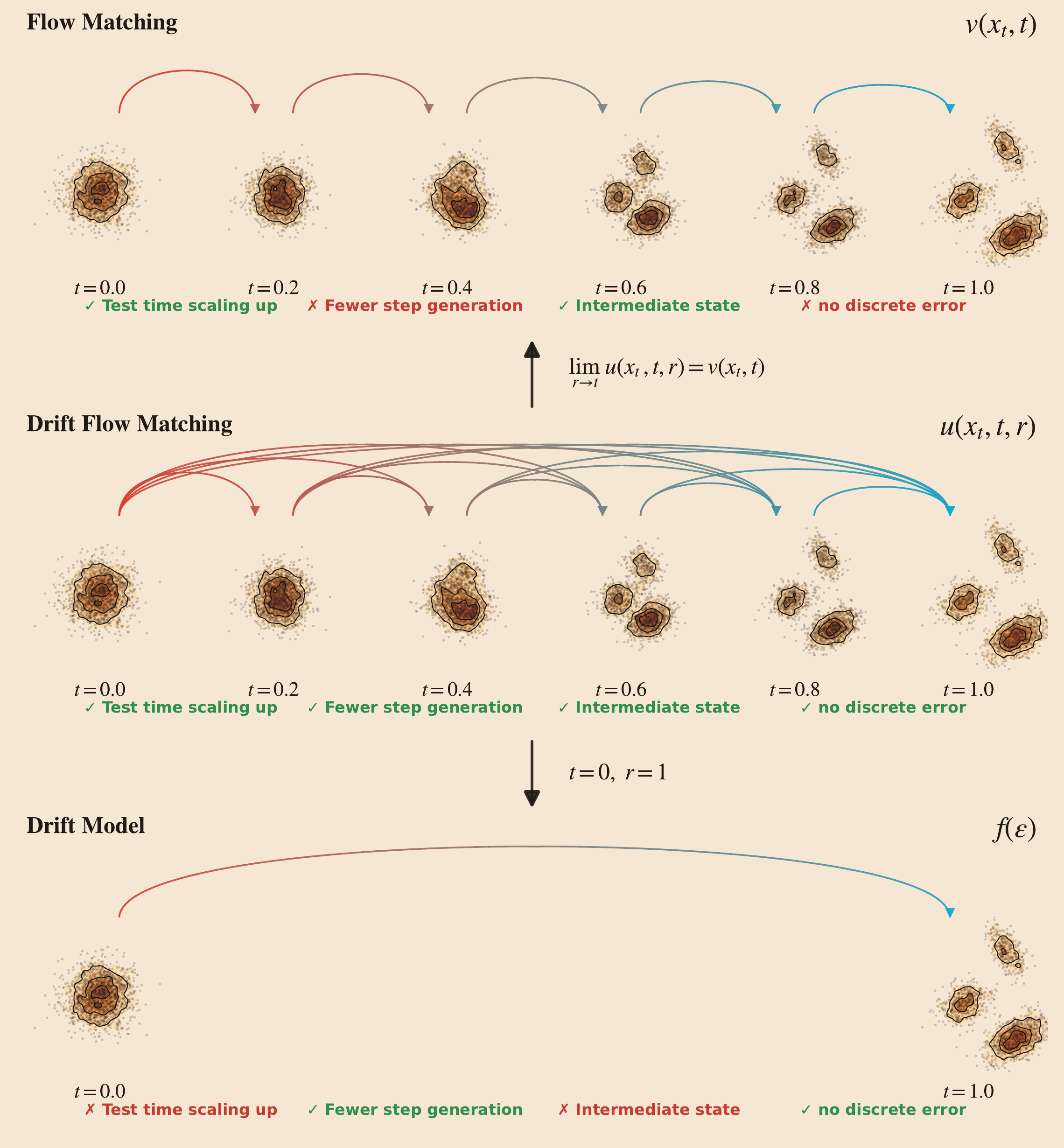}
    \caption{Drift Flow Matching Comparing with Flow Matching and Drift Model.}
    \label{fig:illustration}
\end{figure}

\paragraph{Flow Matching.}
Flow Matching~\cite{lipman2023flow,lipman2024flowmatchingguidecode} learns the marginal velocity field $v(x_t, t)$ in Eq.~\eqref{proof:eq:ode_marginal}, which requires simulating the dynamics through multiple small steps to obtain the full generation trajectory.
It constructs a conditional path to derive a tractable conditional velocity target, and recovers the marginal dynamics via Theorem~\ref{proof:thm:MequivC_FM}.
\textbf{DFM} retains the same path construction, but changes the supervision mechanism. 
Instead of regressing a local velocity field, \textbf{DFM} uses the conditional path to construct tractable samples from marginal pairs $(p_t, p_r)$, and directly learns the transport between them through a drift-based objective.
In this sense, \textbf{DFM} shifts the role of the conditional--marginal relationship from \emph{velocity regression} to \emph{distribution-level transport}. 
Importantly, \textbf{DFM} remains consistent with Flow Matching in the infinitesimal-step limit, where it recovers the standard velocity-based formulation.

\paragraph{Drift Model.}
Drift models~\cite{deng2026generative,he2026sinkhorn,lai2026unified} learn an evolving mapping $f(\epsilon)$ in Eq.~\eqref{proof:eq:one-step-generative-model}, directly transporting samples from a source distribution to a target distribution, often in a single step. 
While this enables efficient generation, it limits flexibility in trajectory control and test-time scaling.
\textbf{DFM} generalizes this paradigm by introducing a continuum of intermediate distributions $\{p_t\}_{t \in [0,1]}$ and learning transports between arbitrary pairs $(p_t, p_r)$. 
This extends drift-based modeling from a fixed source--target mapping to a flexible, two-time transport mechanism.
As a result, \textbf{DFM} combines the efficiency of drift models with the flexibility of multi-step generation, enabling controllable trade-offs between generation speed and sample quality.

\paragraph{Mean Velocity Flow Matching.}
Mean Velocity Flow Matching~\cite{geng2025mean,geng2025improved,lu2026one,ma2026transition} establishes a relationship between the instantaneous velocity $v(x_t, t)$ in Flow Matching~\cite{lipman2023flow,lipman2024flowmatchingguidecode} and a mean velocity target $u(x_t,t,r)$ pointing toward a future time $r$. 
It optimizes an objective derived from this relationship, effectively linking local dynamics to future transport behavior.
However, such approaches typically rely on expensive Jacobian-vector product (JVP) computations, and are often implemented in a \emph{self-distillation} manner, which requires careful design and does not always guarantee stable or improved performance.
In contrast, the model parameterization of \textbf{DFM} shown in Eq.~\eqref{eq:dfm_meanV} and Eq.~\eqref{eq:dfm_transport_map} naturally admits an interpretation as a mean velocity field between two time points. 
Rather than enforcing consistency through auxiliary objectives, \textbf{DFM} directly learns this two-time transport via a drift-based objective over marginal pairs, leading to a simpler and more direct formulation.

\paragraph{Figure~\ref{fig:illustration} illustrates the relationship between \textbf{DFM} and prior work.}

\section{Ablation Study}
\label{app:ablation}

To provide a systematic analysis of the proposed method, we conduct ablation experiments on several key variants and design choices, shown in Table~\ref{tab:ablation}, including the time-pair sampling distribution $(t,r)\sim\rho$, the kernel temperature $\tau$, the model parameterization, and the number of sampled time pairs $G$ and the number of positive/negative samples $n_g$ in training time.

Across all ablation experiments, we adopt the Drift Model~\cite{deng2026generative} (B/2) as the backbone and train it for 100 epochs, following the same training and evaluation protocol.

\begin{table}[htbp]
\centering
\caption{Ablation results for each key design. The number reported in the table is the FID score. Lower FID is better.}
\label{tab:ablation}
\renewcommand{\arraystretch}{1.0}
\setlength{\tabcolsep}{5pt}
\scriptsize

\begin{subtable}{0.78\linewidth}
\centering
\caption{Time embedding.}
\label{tab:ablation_timeEmbedding}
\begin{tabular}{lcccc}
\toprule
\textbf{time embed} & \textbf{NFE=1} & \textbf{NFE=2} & \textbf{NFE=10} & \textbf{NFE=50} \\
\midrule
$(t, r)$       & 29.73 & 27.84 & 23.91 & 21.36 \\
$(t, t-r)$     & \cellcolor{gray!20}\textbf{29.06} & \cellcolor{gray!20}\textbf{26.97} & \cellcolor{gray!20}\textbf{23.48} & \cellcolor{gray!20}\textbf{20.82} \\
$(t, r, t-r)$  & 31.93 & 29.22 & 25.93 & 22.71 \\
\bottomrule
\end{tabular}
\end{subtable}

\vspace{0em}

\begin{subtable}{0.78\linewidth}
\centering
\caption{Time samplers.}
\label{tab:ablation_timeSampler}
\begin{tabular}{lcccc}
\toprule
\textbf{$t,r$ sampler} & \textbf{NFE=1} & \textbf{NFE=2} & \textbf{NFE=10} & \textbf{NFE=50} \\
\midrule
$\mathrm{uniform}(0, 1)$        & 35.90 & 33.20 & 28.31 & 25.07 \\
$\mathrm{lognorm}(-0.2, 1.0)$   & 33.83 & 31.92 & 27.15 & 24.01 \\
$\mathrm{lognorm}(-0.2, 1.2)$   & 34.72 & 32.74 & 28.03 & 24.88 \\
$\mathrm{lognorm}(-0.2, 1.4)$   & 35.36 & 33.28 & 28.64 & 25.31 \\
$\mathrm{lognorm}(-0.4, 1.0)$   & \cellcolor{gray!20}\textbf{29.06} & \cellcolor{gray!20}\textbf{26.97} & \cellcolor{gray!20}\textbf{23.48} & \cellcolor{gray!20}\textbf{20.82} \\
$\mathrm{lognorm}(-0.4, 1.2)$   & 29.79 & 27.81 & 24.22 & 21.30 \\
$\mathrm{lognorm}(-0.4, 1.4)$   & 31.18 & 29.03 & 25.37 & 22.46 \\
$\mathrm{lognorm}(-0.6, 1.0)$   & 31.64 & 29.52 & 25.71 & 22.83 \\
$\mathrm{lognorm}(-0.6, 1.2)$   & 32.08 & 30.01 & 25.96 & 23.12 \\
$\mathrm{lognorm}(-0.6, 1.4)$   & 32.46 & 30.35 & 26.14 & 23.37 \\
\bottomrule
\end{tabular}
\end{subtable}

\vspace{0em}

\begin{subtable}{0.78\linewidth}
\centering
\caption{Kernel temperature $\tau$.}
\label{tab:ablation_temp}
\begin{tabular}{lcccc}
\toprule
\textbf{$\tau$} & \textbf{NFE=1} & \textbf{NFE=2} & \textbf{NFE=10} & \textbf{NFE=50} \\
\midrule
$0.01$ & 36.24 & 34.10 & 29.90 & 26.42 \\
$0.05$ & 34.37 & 32.15 & 27.83 & 24.94 \\
$0.1$  & 33.21 & 31.08 & 26.47 & 23.58 \\
$0.5$  & 29.84 & 27.73 & 24.05 & 21.36 \\
$1.0$  & \cellcolor{gray!20}\textbf{29.06} & \cellcolor{gray!20}\textbf{26.97} & \cellcolor{gray!20}\textbf{23.48} & \cellcolor{gray!20}\textbf{20.82} \\
$10.0$ & 32.48 & 30.39 & 25.76 & 22.03 \\
\bottomrule
\end{tabular}
\end{subtable}

\vspace{0em}

\begin{subtable}{0.78\linewidth}
\centering
\caption{Model parameterization.}
\label{tab:ablation_model}
\begin{tabular}{lcccc}
\toprule
\textbf{model param.} & \textbf{NFE=1} & \textbf{NFE=2} & \textbf{NFE=10} & \textbf{NFE=50} \\
\midrule
$u^\theta(x_t,t,r)$ 
& \cellcolor{gray!20}\textbf{29.06} 
& \cellcolor{gray!20}\textbf{26.97} 
& \cellcolor{gray!20}\textbf{23.48} 
& \cellcolor{gray!20}\textbf{20.82} \\
$X^\theta(x_t,t,r)$ 
& 34.28 & 32.10 & 27.74 & 24.61 \\
\bottomrule
\end{tabular}
\end{subtable}

\vspace{0em}

\begin{subtable}{0.78\linewidth}
\centering
\caption{Number of sampled time pairs $G$ with $n_g=64$.}
\label{tab:ablation_G}
\begin{tabular}{lcccc}
\toprule
\textbf{$G$} & \textbf{NFE=1} & \textbf{NFE=2} & \textbf{NFE=10} & \textbf{NFE=50} \\
\midrule
$1$  & 30.84 & 28.75 & 25.26 & 22.59 \\
$2$  & 29.73 & 27.64 & 24.15 & 21.48 \\
$4$  & \cellcolor{gray!20}\textbf{29.06} & \cellcolor{gray!20}\textbf{26.97} & \cellcolor{gray!20}\textbf{23.48} & \cellcolor{gray!20}\textbf{20.82} \\
$8$  & 28.82 & 26.73 & 23.24 & 20.58 \\
$16$ & 28.65 & 26.56 & 23.07 & 20.41 \\
\bottomrule
\end{tabular}
\end{subtable}

\vspace{0em}

\begin{subtable}{0.78\linewidth}
\centering
\caption{Number of positive/negative samples $n_g$ with $G=4$.}
\label{tab:ablation_ng}
\begin{tabular}{lcccc}
\toprule
\textbf{$n_g$} & \textbf{NFE=1} & \textbf{NFE=2} & \textbf{NFE=10} & \textbf{NFE=50} \\
\midrule
$1$   & 220.31 & 178.42 & 57.08 & 26.04 \\
$4$   & 191.26 & 38.92  & 32.71 & 24.88 \\
$16$  & 125.43 & 32.96  & 27.46 & 22.54 \\
$64$  & \cellcolor{gray!20}\textbf{29.06} & \cellcolor{gray!20}\textbf{26.97} & \cellcolor{gray!20}\textbf{23.48} & \cellcolor{gray!20}\textbf{20.82} \\
$128$ & 27.42 & 25.83 & 22.95 & 20.61 \\
\bottomrule
\end{tabular}
\end{subtable}

\end{table}





\paragraph{Time Embedding.}
Since DFM models a two-time transport map $T_{t,r}^{\theta}$, the network must be conditioned on both the absolute time location and the transport interval~\cite{geng2025mean,geng2025improved}. Table~\ref{tab:ablation_timeEmbedding} shows that all time-embedding variants produce reasonable FID scores, confirming that DFM is not sensitive to a single specific encoding choice. Among them, embedding $(t,t-r)$ performs best. This suggests that explicitly providing the current time $t$ together with the step size $t-r$ gives the model a more direct representation of the marginal transport problem. Directly embedding $(t,r)$ is also competitive, while adding all three variables $(t,r,t-r)$ does not further improve performance.

\paragraph{Time-pair Sampling Distribution.}
Prior work~\cite{esser2024scaling, geng2025mean,ma2025stochastic,anonymous2025cadvae,ma2025beyond} has shown that the distribution used to sample $t$ influences the generation quality. We study the distribution used to sample $(r,t)$ in Table~\ref{tab:ablation_timeSampler}. Note that $(r,t)$ are first sampled independently, followed by a post-processing step that enforces $t > r$ by swapping. Table~\ref{tab:ablation_timeSampler} reports that a logit-normal sampler performs the best, consistent with observations on Flow Matching~\cite{esser2024scaling, geng2025mean}.

\paragraph{Kernel Temperature.}
The kernel temperature $\tau$ controls the locality of the positive and negative interactions in the drift field~\cite{deng2026generative,he2026sinkhorn,lai2026unified}. As shown in Table~\ref{tab:ablation_temp}, very small temperatures degrade performance because the drift estimate becomes overly local and sensitive to finite-sample noise. Conversely, very large temperatures also hurt performance, since the kernel weights become too smooth and weaken the sample-specific attraction--repulsion structure. The best result is obtained at $\tau=1.0$, indicating that an intermediate temperature provides the most effective drift supervision for matching the empirical marginal pair $(q_{t,r}^{\theta},p_r)$.

\paragraph{Model Parameterization.}
We compare the proposed mean-velocity parameterization~\cite{geng2025mean,geng2025improved} $u^\theta(x_t,t,r)$ with directly predicting the future state $X^\theta(x_t,t,r)$~\cite{ma2026transition}, as shown in Table~\ref{tab:ablation_model}. The mean-velocity form gives better results across all NFEs. This is consistent with the DFM formulation in Eq.~\eqref{eq:dfm_transport_map}, where the transport map is written as
$T_{t,r}^{\theta}(x_t)=x_t+(r-t)u^\theta(x_t,t,r)$.
Factoring out the interval length $(r-t)$ makes the model naturally compatible with different step sizes and preserves the connection to the infinitesimal Flow Matching limit. In contrast, direct state prediction does not explicitly encode this step-size structure and is less stable when evaluated under different NFEs.

\paragraph{Number of Time Pairs and Positive/Negative Samples.}
We ablate the grouped drift configuration, including the number of sampled time pairs $G$ and the number of positive/negative samples $n_g$ per group, as shown in Table~\ref{tab:ablation_G} and Table~\ref{tab:ablation_ng}, respectively.

Increasing $G$ improves performance because each minibatch covers more marginal transport problems. However, even when $G$ is reduced to $1$, DFM still remains effective. This is because SGD optimization~\cite{bottou2012stochastic} already samples different time-pair distributions across training iterations, so the role of $G$ is mainly to increase the time-pair diversity within each minibatch rather than to make the objective well-defined. Consequently, the gain from increasing $G$ becomes modest after $G=4$, indicating that a small number of time-pair groups is sufficient in practice.

The effect of $n_g$ is more fundamental. The value of $n_g$ determines how well the empirical marginal drift field in Eq.~\eqref{eq:dfm_group_drift} approximates the distribution-level transport between $q_{t,r}^{\theta}$ and $p_r$~\cite{deng2026generative,he2026sinkhorn,lai2026unified}. When $n_g$ is very small, the drift field degenerates toward a pointwise conditional transport signal. In the extreme case $n_g=1$, the group contains only one positive and one negative sample, so the drift supervision effectively reduces to a Flow-Matching-like conditional velocity target~\cite{liu2023flow,lipman2024flow} rather than a marginal distribution-level drift~\cite{deng2026generative,he2026sinkhorn,lai2026unified}. As a result, one-step or few-step generation breaks down, since such conditional supervision is not sufficient to learn reliable large-step marginal transport~\cite{ma2025learningstraightflowsvariational,liu2023flow}. Nevertheless, at larger NFEs, the model still preserves a substantial part of the performance, consistent with the behavior of Flow Matching under multi-step sampling~\cite{liu2023flow,lipman2024flow}.
By contrast, increasing $n_g$ provides a better empirical estimate of the positive and negative drift components, making DFM closer to its intended marginal transport objective~\cite{deng2026generative,he2026sinkhorn,lai2026unified}. This substantially improves small-NFE generation, where accurate large-step transport is critical. Once $n_g$ is sufficiently large, the drift estimate becomes stable; performance therefore saturates around $n_g=64$, and increasing $n_g$ from $64$ to $128$ brings only marginal gains. Increasing NFE further improves performance in all cases, but the benefit is most meaningful when the drift estimate is already reliable.

\section{Conditional Generation}
\label{app:conditionalG}

\paragraph{Conditional target distributions.}
The formulation in the main paper is presented in the unconditional setting. In that case, the endpoint distribution is
\(X_1\sim p_1\), where \(p_1=p_{\mathrm{data}}\) is the full target distribution, while the source distribution \(p_0\) is fixed, e.g., a Gaussian distribution.
Conditional generation changes only the target distribution used for the transport problem.
Let \(c\) denote an external generation condition, such as a class label for ImageNet, MNIST, or FFHQ, an observation and previous actions in robotic control, or more general conditioning inputs such as prompts.
For a fixed condition \(c\), the target endpoint becomes the conditional data distribution
\begin{equation}
p_1(\cdot\mid c)
=
p_{\mathrm{data}}(\cdot\mid c),
\qquad
X_0\sim p_0,
\qquad
X_1\sim p_{\mathrm{data}}(\cdot\mid c).
\end{equation}
Thus, the source distribution remains unchanged, while the target distribution is restricted to samples compatible with \(c\).

It is important to distinguish this generation condition \(c\) from the conditional-path variable \(Z\) used in Flow Matching and in the DFM construction.
In Eq.~\eqref{eq:bayes_xt} and Eq.~\eqref{eq:dfm_conditional_pair}, \(Z=(X_0,X_1)\) indexes conditional paths whose mixture forms the marginal path.
By contrast, \(c\) is an observed input that specifies which conditional data distribution should be generated.
For each fixed \(c\), the conditional endpoint coupling is
\begin{equation}
(X_0,X_1)\mid c
\sim
\pi_c(x_0,x_1)
=
p_0(x_0)\,p_{\mathrm{data}}(x_1\mid c).
\end{equation}
Using the same interpolant as Eq.~\eqref{eq:generalInterpolant}, and equivalently the same two-time construction as Eq.~\eqref{eq:dfm_conditional_pair}, this coupling induces
\begin{equation}
X_{t\mid c}
=
\alpha(t)X_0+\beta(t)X_1,
\qquad
X_{r\mid c}
=
\alpha(r)X_0+\beta(r)X_1,
\qquad
(X_0,X_1)\mid c\sim\pi_c.
\end{equation}
We denote the corresponding marginal distributions at times \(t\) and \(r\) by \(p_{t\mid c}\) and \(p_{r\mid c}\).
The conditional DFM problem is therefore the same distribution transport problem as in the main text, but applied at each fixed condition:
\[
p_{t\mid c}
\longrightarrow
p_{r\mid c}.
\]

\paragraph{Conditional DFM objective.}
The transport map is conditioned on \(c\) by feeding the same condition input into the model:
\begin{equation}
T_{t,r}^{\theta,c}(x_t)
:=
x_t+(r-t)u^\theta(x_t,t,r,c),
\qquad
q_{t,r}^{\theta,c}
:=
\left(T_{t,r}^{\theta,c}\right)_{\#}p_{t\mid c}.
\end{equation}
The training objective is obtained from Eq.~\eqref{eq:dfm_loss} by replacing
\[
p_t,\ p_r,\ q_{t,r}^{\theta}
\quad\text{with}\quad
p_{t\mid c},\ p_{r\mid c},\ q_{t,r}^{\theta,c}.
\]
Likewise, the grouped empirical drift in Eq.~\eqref{eq:dfm_group_drift} and the group-wise loss in Eq.~\eqref{eq:dfm_group_loss} are computed within each fixed condition and each sampled time pair.
Positive samples are drawn from the conditional target-time marginal \(p_{r\mid c}\), while negative samples are generated by applying the same conditional model \(T_{t,r}^{\theta,c}\) to samples from \(p_{t\mid c}\).
Therefore, both the positive and negative empirical measures correspond to the same condition \(c\), and samples from different conditions should not be mixed when estimating the conditional drift field.

\paragraph{High-dimensional conditions.}
For class-conditional generation, a minibatch can usually contain multiple data samples for the same label, giving an empirical approximation of \(p_{\mathrm{data}}(\cdot\mid c)\).
For high-dimensional or structured conditions, such as robotic observations with previous actions or future prompt-based generation, sampling the full distribution of \(c\) may be difficult or unnecessary.
Often, the available training datum is a single observed pair \((c,x_1)\).
In this case, for that fixed condition, we use the empirical conditional target
\[
\widehat p_{\mathrm{data}}(\cdot\mid c)
=
\delta_{x_1}.
\]
This does not require modeling or sampling the distribution of \(c\).
Instead, \(c\) is treated as the observed condition input, while multiple source samples can still be drawn from \(p_0\).
Given \(x_{0,i}\sim p_0\), we construct multiple conditional positive samples through the same interpolant:
\begin{equation}
x_{t,i}
=
\alpha(t)x_{0,i}+\beta(t)x_1,
\qquad
x_{r,i}
=
\alpha(r)x_{0,i}+\beta(r)x_1.
\end{equation}
The corresponding negative samples are produced by the conditional model under the same \(c\):
\begin{equation}
\widehat x_{r,i}
=
T_{t,r}^{\theta,c}(x_{t,i}).
\end{equation}
Thus, even when only one target sample is observed for a complex condition, DFM can still form a conditional empirical marginal pair through multiple source samples and train the drift-based transport from \(p_{t\mid c}\) to \(p_{r\mid c}\).

\section{Wasserstein geometry of Drift Flow Matching}
\label{app:dfm_ot}

In this subsection, we record a basic $W_2$-geometric property of \textbf{DFM}. 
The key observation is that the same conditional path construction as in Flow Matching induces, for every pair $(t,r)\in\Delta$, a natural feasible coupling between the intermediate marginals $p_t$ and $p_r$. 
This immediately yields an explicit upper bound on the quadratic transport cost between $p_t$ and $p_r$. 
Under the linear interpolant, this bound simplifies to a quantitative $W_2$-Lipschitz estimate and a uniform discrete Wasserstein-action bound along the whole \textbf{DFM} path.

For probability measures $\mu,\nu\in\mathcal P_2(\mathbb R^d)$, we write
\begin{equation}
W_2^2(\mu,\nu)
:=
\inf_{\eta\in\Pi(\mu,\nu)}
\int_{\mathbb R^d\times\mathbb R^d}\|x-y\|^2\,d\eta(x,y),
\label{proof:eq:w2_squared_def}
\end{equation}
where $\Pi(\mu,\nu)$ denotes the set of couplings with marginals $\mu$ and $\nu$. 
Thus, $W_2^2(\mu,\nu)$ is the minimal quadratic transport cost between $\mu$ and $\nu$, and
\begin{equation}
W_2(\mu,\nu):=\bigl(W_2^2(\mu,\nu)\bigr)^{1/2}
\label{proof:eq:w2_def}
\end{equation}
is the associated $2$-Wasserstein distance.

For intuition, $W_2(\mu,\nu)$ is a distance-type quantity, whereas $W_2^2(\mu,\nu)$ is the corresponding quadratic transport cost. 
Accordingly, $W_2^2$ is not additive along a partition in general, while an action-type quantity is additive over time intervals. 
In dynamic optimal transport, the Wasserstein action is the time integral of the squared transport speed along a distribution path. 
For a discrete time interval $[t_m,t_{m+1}]$, the quantity
\[
\frac{W_2^2(p_{t_m},p_{t_{m+1}})}{t_{m+1}-t_m}
\]
is therefore the natural discrete analogue of the action on that interval: it is the squared transport distance normalized by the time length. 
Summing over a partition yields the total discrete Wasserstein action of the path.

We emphasize that the results below provide \emph{geometric upper bounds} induced by a feasible coupling, and show that the \textbf{DFM} path inherits from the Flow Matching construction a quantitatively controlled $W_2$ geometry.

\begin{proposition}[Canonical intermediate coupling induced by the Flow Matching path construction]
\label{prop:dfm_canonical_coupling}
Let $(X_0,X_1)\sim \pi$ be the endpoint coupling used in Eq.~\eqref{eq:dfm_conditional_pair}, and let
\[
X_t^Z = \alpha(t)X_0^Z+\beta(t)X_1^Z,
\qquad
X_r^Z = \alpha(r)X_0^Z+\beta(r)X_1^Z
\]
be the conditional pair in Eq.~\eqref{eq:dfm_conditional_pair}. Define
\[
\gamma_{t,r}:=\mathrm{Law}(X_t^Z,X_r^Z).
\]
Then $\gamma_{t,r}\in \Pi(p_t,p_r)$. Consequently,
\begin{equation}
W_2^2(p_t,p_r)
\;\le\;
\mathbb{E}\!\left[\|X_r^Z-X_t^Z\|^2\right]
=
\mathbb{E}\!\left[\|\Delta\alpha_{t,r}X_0+\Delta\beta_{t,r}X_1\|^2\right],
\label{proof:eq:dfm_w2_general_bound}
\end{equation}
where
\[
\Delta\alpha_{t,r}:=\alpha(r)-\alpha(t),
\qquad
\Delta\beta_{t,r}:=\beta(r)-\beta(t).
\]
\end{proposition}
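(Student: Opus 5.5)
The plan is to verify directly that $\gamma_{t,r}$ has the correct marginals, and then use the definition of $W_2^2$ in Eq.~\eqref{proof:eq:w2_squared_def} as an infimum over couplings. The one point needing care is that $X_t^Z$ and $X_r^Z$ are built from the \emph{same} realization of $Z=(X_0,X_1)$. So $(X_t^Z,X_r^Z)$ is the image of the single random variable $Z$ under the measurable (linear) map
\[
\Psi_{t,r}:(x_0,x_1)\mapsto\bigl(\alpha(t)x_0+\beta(t)x_1,\ \alpha(r)x_0+\beta(r)x_1\bigr).
\]
Hence $\gamma_{t,r}=(\Psi_{t,r})_{\#}\pi$ is a well-defined probability measure on $\mathbb{R}^d\times\mathbb{R}^d$.

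\textbf{Marginals.} The first marginal of $\gamma_{t,r}$ is the law of $\alpha(t)X_0+\beta(t)X_1$ with $(X_0,X_1)\sim\pi$. By Eq.~\eqref{eq:dfm_two_marginals} (equivalently Eq.~\eqref{eq:bayes_xt}), $p_t$ is the mixture over $Z\sim p_Z=\pi$ of the conditional laws $p_{t\mid Z}(\cdot\mid z)$. Each of these is the Dirac mass at $\alpha(t)x_0+\beta(t)x_1$, because conditioning on $Z=z$ fixes the endpoints (Remark~\ref{proof:remark:FM}). The mixture is therefore exactly the pushforward of $\pi$ under the first component of $\Psi_{t,r}$. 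Testing against a bounded measurable $\varphi$ gives
\[
\int\varphi\,dp_t=\int\!\!\int\varphi\,dp_{t\mid Z}(\cdot\mid z)\,dp_Z(z)=\mathbb{E}[\varphi(X_t^Z)].
\]
The same argument applied to the second component shows the second marginal is $p_r$. Thus $\gamma_{t,r}\in\Pi(p_t,p_r)$.

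\textbf{Bound.} Since $W_2^2$ is an infimum over $\Pi(p_t,p_r)$, plugging in the feasible coupling $\gamma_{t,r}$ gives
\[
W_2^2(p_t,p_r)\le\int\|x-y\|^2\,d\gamma_{t,r}=\mathbb{E}\|X_r^Z-X_t^Z\|^2.
\]
Subtracting the two interpolants gives $X_r^Z-X_t^Z=\Delta\alpha_{t,r}X_0+\Delta\beta_{t,r}X_1$, which is the stated identity. If the right-hand side is infinite, the inequality holds trivially. If $X_0,X_1$ have finite second moments, then $p_t,p_r\in\mathcal P_2$ and the bound is finite, by $\|a+b\|^2\le 2\|a\|^2+2\|b\|^2$.

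\textbf{Main obstacle.} The only real subtlety is justifying that the marginal of the pushforward coincides with $p_t$ as defined through the conditional-density mixture. The conditional "densities" here are Dirac masses, so I would phrase everything at the level of laws and test functions rather than densities. The rest is immediate.
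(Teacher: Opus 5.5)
Your proposal is correct and follows the same route as the paper's proof: check that $\gamma_{t,r}$ has marginals $p_t$ and $p_r$, use it as a feasible coupling in the infimum defining $W_2^2$, and subtract the two interpolants to get $\Delta\alpha_{t,r}X_0+\Delta\beta_{t,r}X_1$. The paper simply asserts the marginal identification ``by construction,'' so the following are added rigor rather than a different approach: your pushforward and test-function argument, which handles the degenerate Dirac conditionals, and your remarks on infinite versus finite second moments.
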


\begin{proof}
By construction, the first marginal of $\gamma_{t,r}$ is the law of $X_t^Z$, namely $p_t$, and the second marginal is the law of $X_r^Z$, namely $p_r$. Therefore $\gamma_{t,r}\in \Pi(p_t,p_r)$.

By the definition of the $2$-Wasserstein distance,
\[
W_2^2(p_t,p_r)
=
\inf_{\eta\in\Pi(p_t,p_r)}
\int \|x-y\|^2\,d\eta(x,y)
\le
\int \|x-y\|^2\,d\gamma_{t,r}(x,y).
\]
Using the definition of $\gamma_{t,r}$,
\[
\int \|x-y\|^2\,d\gamma_{t,r}(x,y)
=
\mathbb{E}\!\left[\|X_r^Z-X_t^Z\|^2\right].
\]
Moreover, by Eq.~\eqref{eq:dfm_conditional_pair},
\[
X_r^Z-X_t^Z
=
(\alpha(r)-\alpha(t))X_0^Z+(\beta(r)-\beta(t))X_1^Z
=
\Delta\alpha_{t,r}X_0+\Delta\beta_{t,r}X_1,
\]
which proves Eq.~\eqref{proof:eq:dfm_w2_general_bound}.
\end{proof}

\begin{corollary}[General $W_2$ continuity bound]
\label{cor:dfm_w2_continuity_general}
Assume $X_0,X_1\in L^2$, and let
\[
M_2:=\Big(\mathbb{E}\big[(\|X_0\|+\|X_1\|)^2\big]\Big)^{1/2}.
\]
Then for any $(t,r)\in\Delta$,
\begin{equation}
W_2(p_t,p_r)
\;\le\;
\bigl(|\Delta\alpha_{t,r}|+|\Delta\beta_{t,r}|\bigr)\,M_2.
\label{proof:eq:dfm_w2_continuity_general}
\end{equation}
In particular, if $\alpha$ and $\beta$ are Lipschitz on $[0,1]$ with constants $L_\alpha$ and $L_\beta$, then
\begin{equation}
W_2(p_t,p_r)
\;\le\;
(L_\alpha+L_\beta)\,M_2\,|r-t|,
\qquad \forall\, (t,r)\in\Delta,
\label{proof:eq:dfm_w2_lipschitz_general}
\end{equation}
so $\{p_t\}_{t\in[0,1]}\subset\mathcal P_2(\mathbb R^d)$ is a $W_2$-Lipschitz path, and hence a $W_2$-absolutely-continuous path.
\end{corollary}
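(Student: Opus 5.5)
The plan is to start from Proposition~\ref{prop:dfm_canonical_coupling}, which already gives
\[
W_2^2(p_t,p_r)\le \mathbb{E}\bigl[\|\Delta\alpha_{t,r}X_0+\Delta\beta_{t,r}X_1\|^2\bigr].
\]
The first step is to take square roots and apply the triangle inequality pointwise:
\[
\|\Delta\alpha_{t,r}X_0+\Delta\beta_{t,r}X_1\|\le |\Delta\alpha_{t,r}|\,\|X_0\|+|\Delta\beta_{t,r}|\,\|X_1\|\le \bigl(|\Delta\alpha_{t,r}|+|\Delta\beta_{t,r}|\bigr)\bigl(\|X_0\|+\|X_1\|\bigr).
\]
The second inequality uses that each coefficient is bounded by their sum and that the norms are nonnegative. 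Squaring, taking expectations, and then square roots gives \eqref{proof:eq:dfm_w2_continuity_general}. The assumption $X_0,X_1\in L^2$ guarantees that $M_2<\infty$, since $(a+b)^2\le 2a^2+2b^2$. It also guarantees that $p_t,p_r\in\mathcal P_2$, so $W_2$ is well defined.

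For the Lipschitz statement, I would insert $|\Delta\alpha_{t,r}|\le L_\alpha|r-t|$ and $|\Delta\beta_{t,r}|\le L_\beta|r-t|$ directly, which yields \eqref{proof:eq:dfm_w2_lipschitz_general}. Because $W_2$ is symmetric, the bound holds for all $t,r\in[0,1]$, not only for pairs in $\Delta$.

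The final claim is that a Lipschitz curve in the metric space $(\mathcal P_2,W_2)$ is absolutely continuous. This follows from the definition of absolute continuity with the constant function $m\equiv (L_\alpha+L_\beta)M_2\in L^1[0,1]$ as the integrable majorant. Membership $p_t\in\mathcal P_2$ for every $t$ comes from the identity $X_t=\alpha(t)X_0+\beta(t)X_1$ together with the $L^2$ assumption.

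There is no real obstacle here. The only point requiring care is the well-definedness and finiteness of $M_2$ and of the second moments, which the $L^2$ hypothesis settles.
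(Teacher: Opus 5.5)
Your proposal is correct and follows the paper's proof essentially step for step: you invoke the canonical-coupling bound of Proposition~\ref{prop:dfm_canonical_coupling}, apply the triangle inequality pointwise, take the $L^2$ norm, and substitute the Lipschitz bounds on $\Delta\alpha_{t,r}$ and $\Delta\beta_{t,r}$. Your extra remarks (finiteness of $M_2$, $p_t\in\mathcal P_2$, and the constant integrable majorant giving absolute continuity) fill in details the paper leaves implicit.
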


\begin{proof}
By Proposition~\ref{prop:dfm_canonical_coupling},
\[
W_2(p_t,p_r)
\le
\Big(\mathbb{E}\!\left[\|\Delta\alpha_{t,r}X_0+\Delta\beta_{t,r}X_1\|^2\right]\Big)^{1/2}.
\]
Using the triangle inequality,
\[
\|\Delta\alpha_{t,r}X_0+\Delta\beta_{t,r}X_1\|
\le
\bigl(|\Delta\alpha_{t,r}|+|\Delta\beta_{t,r}|\bigr)(\|X_0\|+\|X_1\|).
\]
Taking the $L^2$ norm yields Eq.~\eqref{proof:eq:dfm_w2_continuity_general}. 
If $\alpha$ and $\beta$ are Lipschitz, then
\[
|\Delta\alpha_{t,r}|\le L_\alpha |r-t|,
\qquad
|\Delta\beta_{t,r}|\le L_\beta |r-t|,
\]
and Eq.~\eqref{proof:eq:dfm_w2_lipschitz_general} follows immediately.
\end{proof}

\begin{corollary}[Linear-interpolant specialization]
\label{cor:dfm_w2_linear_bound}
Assume the linear interpolant $\alpha(t)=1-t$ and $\beta(t)=t$. Then for any $(t,r)\in\Delta$,
\begin{equation}
W_2^2(p_t,p_r)
\;\le\;
(r-t)^2\,\mathbb{E}\!\left[\|X_1-X_0\|^2\right].
\label{proof:eq:dfm_w2_linear_bound}
\end{equation}
Hence the path $\{p_t\}_{t\in[0,1]}$ is $W_2$-Lipschitz, with
\begin{equation}
W_2(p_t,p_r)
\;\le\;
|r-t|\,\Big(\mathbb{E}\|X_1-X_0\|^2\Big)^{1/2}.
\label{proof:eq:dfm_w2_lipschitz}
\end{equation}
\end{corollary}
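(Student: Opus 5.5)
The plan is to specialize Proposition~\ref{prop:dfm_canonical_coupling} to the linear schedules and then take square roots. With $\alpha(t)=1-t$ and $\beta(t)=t$ we have
\[
\Delta\alpha_{t,r}=(1-r)-(1-t)=-(r-t),
\qquad
\Delta\beta_{t,r}=r-t .
\]
Hence the displacement of the canonical coupling is
\[
X_r^Z-X_t^Z=\Delta\alpha_{t,r}X_0+\Delta\beta_{t,r}X_1=(r-t)(X_1-X_0).
\]
Substituting this into Eq.~\eqref{proof:eq:dfm_w2_general_bound} gives
\[
W_2^2(p_t,p_r)\le \mathbb{E}\big[(r-t)^2\|X_1-X_0\|^2\big]=(r-t)^2\,\mathbb{E}\|X_1-X_0\|^2,
\]
since the scalar $(r-t)^2$ is deterministic and factors out of the expectation. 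This is exactly Eq.~\eqref{proof:eq:dfm_w2_linear_bound}.

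For Eq.~\eqref{proof:eq:dfm_w2_lipschitz}, take nonnegative square roots of both sides, using $\sqrt{(r-t)^2}=|r-t|$. On $\Delta$ we have $r\ge t$, so $|r-t|=r-t$; the absolute value only makes the statement symmetric in $t$ and $r$. The $W_2$-Lipschitz claim then follows with constant $L=(\mathbb{E}\|X_1-X_0\|^2)^{1/2}$. For pairs with $r<t$, swap the roles of $t$ and $r$ and use the symmetry $W_2(\mu,\nu)=W_2(\nu,\mu)$.

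The only point requiring care is integrability. If $X_0,X_1\in L^2$, then $\mathbb{E}\|X_1-X_0\|^2\le 2\mathbb{E}\|X_0\|^2+2\mathbb{E}\|X_1\|^2<\infty$, so the constant is finite. The same assumption puts $p_t,p_r\in\mathcal P_2(\mathbb R^d)$, which makes $W_2$ well defined. If the second moment were infinite, the bound would hold trivially.

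As a consistency check, Corollary~\ref{cor:dfm_w2_continuity_general} with $L_\alpha=L_\beta=1$ gives the weaker constant $2M_2$. The direct computation is sharper because the two increments combine exactly into $X_1-X_0$. There is no real obstacle here; the argument is a one-line specialization of the canonical coupling bound.
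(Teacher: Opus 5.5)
Your proposal is correct and follows the paper's proof essentially verbatim: compute $\Delta\alpha_{t,r}=-(r-t)$ and $\Delta\beta_{t,r}=r-t$, so $X_r^Z-X_t^Z=(r-t)(X_1-X_0)$. You then substitute into the canonical-coupling bound of Proposition~\ref{prop:dfm_canonical_coupling} and take square roots. Your extra remarks on integrability, the symmetric extension to $r<t$, and the comparison with the weaker constant $2M_2$ from Corollary~\ref{cor:dfm_w2_continuity_general} are accurate and harmless additions.
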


\begin{proof}
Under the linear interpolant,
\[
\Delta\alpha_{t,r}=-(r-t),
\qquad
\Delta\beta_{t,r}=r-t,
\]
hence
\[
X_r^Z-X_t^Z=(r-t)(X_1-X_0).
\]
Substituting this identity into Eq.~\eqref{proof:eq:dfm_w2_general_bound} gives Eq.~\eqref{proof:eq:dfm_w2_linear_bound}. Taking square roots yields Eq.~\eqref{proof:eq:dfm_w2_lipschitz}.
\end{proof}


\begin{proposition}[Discrete Wasserstein-action bound along the DFM path]
\label{prop:dfm_action_bound}
Assume the linear interpolant $\alpha(t)=1-t$ and $\beta(t)=t$. Let
\[
0=t_0<t_1<\cdots<t_N=1
\]
be any partition of $[0,1]$. Then
\begin{equation}
\sum_{m=0}^{N-1}
\frac{W_2^2(p_{t_m},p_{t_{m+1}})}{t_{m+1}-t_m}
\;\le\;
\mathbb{E}\!\left[\|X_1-X_0\|^2\right].
\label{proof:eq:dfm_discrete_action_bound}
\end{equation}
In this sense, each quantity
\[
\frac{W_2^2(p_{t_m},p_{t_{m+1}})}{t_{m+1}-t_m}
\]
can be interpreted as the discrete Wasserstein action of the interval $[t_m,t_{m+1}]$.
\end{proposition}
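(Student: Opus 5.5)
The plan is to apply Corollary~\ref{cor:dfm_w2_linear_bound} to each subinterval of the partition and then sum, using that the interval lengths telescope to one. No new coupling construction is needed: the canonical coupling $\gamma_{t_m,t_{m+1}}$ of Proposition~\ref{prop:dfm_canonical_coupling} already supplies a feasible transport plan on each interval $[t_m,t_{m+1}]$.

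First, for each $m\in\{0,\dots,N-1\}$ we have $(t_m,t_{m+1})\in\Delta$ with $t_{m+1}-t_m>0$. Hence Eq.~\eqref{proof:eq:dfm_w2_linear_bound} gives
\[
W_2^2(p_{t_m},p_{t_{m+1}})\le (t_{m+1}-t_m)^2\,\mathbb{E}\|X_1-X_0\|^2 .
\]
This bound is quadratic in the step size, which is exactly what makes the action-type normalization work.

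Second, dividing by the strictly positive length $t_{m+1}-t_m$ yields
\[
\frac{W_2^2(p_{t_m},p_{t_{m+1}})}{t_{m+1}-t_m}\le (t_{m+1}-t_m)\,\mathbb{E}\|X_1-X_0\|^2 .
\]
Summing over $m$ and using the telescoping identity $\sum_{m=0}^{N-1}(t_{m+1}-t_m)=t_N-t_0=1$ gives Eq.~\eqref{proof:eq:dfm_discrete_action_bound}.

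There is no real obstacle; the argument is essentially bookkeeping. The one point to state explicitly is the standing assumption $X_0,X_1\in L^2$, i.e.\ $\mathbb{E}\|X_1-X_0\|^2<\infty$. If this fails the inequality holds trivially, since the right-hand side is $+\infty$. One may also remark that the bound is uniform in the partition. It can therefore be read as saying that the $W_2$ metric-derivative action of $\{p_t\}$ is at most $\mathbb{E}\|X_1-X_0\|^2$, which is the kinetic energy of the straight conditional paths.
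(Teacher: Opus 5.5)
Your proposal is correct and follows the paper's proof exactly: apply Corollary~\ref{cor:dfm_w2_linear_bound} on each subinterval $[t_m,t_{m+1}]$, divide by $t_{m+1}-t_m>0$, and sum using $\sum_{m}(t_{m+1}-t_m)=1$. Your added remarks on the $L^2$ assumption and on reading the partition-uniform bound as control of the metric-derivative action are accurate, but the proof does not need them.
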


\begin{proof}
For each interval $[t_m,t_{m+1}]$, Corollary~\ref{cor:dfm_w2_linear_bound} gives
\[
W_2^2(p_{t_m},p_{t_{m+1}})
\le
(t_{m+1}-t_m)^2\,\mathbb{E}\!\left[\|X_1-X_0\|^2\right].
\]
Dividing both sides by $(t_{m+1}-t_m)$ and summing over $m$, we obtain
\[
\sum_{m=0}^{N-1}
\frac{W_2^2(p_{t_m},p_{t_{m+1}})}{t_{m+1}-t_m}
\le
\sum_{m=0}^{N-1}
(t_{m+1}-t_m)\,\mathbb{E}\!\left[\|X_1-X_0\|^2\right].
\]
Since $\sum_{m=0}^{N-1}(t_{m+1}-t_m)=1$, Eq.~\eqref{proof:eq:dfm_discrete_action_bound} follows.
\end{proof}

\begin{remark}[Interpretation and relation to Flow Matching]
\label{remark:dfm_ot_interpretation}
Proposition~\ref{prop:dfm_canonical_coupling} shows that the intermediate path $\{p_t\}_{t\in[0,1]}$ used by \textbf{DFM} inherits, from the same conditional path construction as in Flow Matching, a natural coupling structure between any two intermediate marginals. 
Therefore, for every pair $(p_t,p_r)$, one obtains an explicit feasible coupling $\gamma_{t,r}$ and hence an explicit Wasserstein upper bound.

The general estimate in Eq.~\eqref{proof:eq:dfm_w2_general_bound} does not require the linear interpolant. 
It shows that the Wasserstein geometry of the \textbf{DFM} path is already controlled at the level of the general Flow Matching path construction. 
Under additional regularity of $\alpha$ and $\beta$, this yields $W_2$-continuity of the path; under the linear interpolant, it further simplifies to the explicit scaling law in Eq.~\eqref{proof:eq:dfm_w2_linear_bound}, which implies both the $W_2$-Lipschitz estimate in Eq.~\eqref{proof:eq:dfm_w2_lipschitz} and the finite discrete Wasserstein-action bound in Eq.~\eqref{proof:eq:dfm_discrete_action_bound}.

Thus, even when the endpoint coupling $\pi$ is not an optimal coupling, \textbf{DFM} still decomposes endpoint generation into a family of short-range transport problems along a $W_2$-absolutely-continuous path. 
The significance of these results is therefore not global optimality, but rather that \textbf{DFM} inherits from Flow Matching a geometrically controlled transport path whose local marginal pairs satisfy explicit Wasserstein bounds.
\end{remark}

\begin{remark}[Comparison with the whole FM process under the same endpoint coupling]
\label{remark:dfm_vs_fm_whole_process}
Under the same endpoint coupling $\pi=\mathrm{Law}(X_0,X_1)$ used to construct the Flow Matching path, the comparison between \textbf{DFM} and Flow Matching should be made at the level of the whole transport process, rather than at the level of a single local pair $(p_t,p_r)$.
In particular, Proposition~\ref{prop:dfm_action_bound} yields
\[
\sum_{m=0}^{N-1}
\frac{W_2^2(p_{t_m},p_{t_{m+1}})}{t_{m+1}-t_m}
\le
\mathbb{E}\!\left[\|X_1-X_0\|^2\right].
\]
The left-hand side is the discrete Wasserstein action of the entire \textbf{DFM} path, whereas the right-hand side is the quadratic transport cost associated with the endpoint coupling $\pi$.
Therefore, relative to the same endpoint coupling, \textbf{DFM} does not increase the total quadratic transport action; instead, it decomposes the endpoint transport into a sequence of short-range transport steps with controlled $W_2$ geometry.

\end{remark}

\section{Infinitesimal-step limit of Drift Flow Matching}
\label{app:dfm_infinitesimal_limit}

In this section, we justify Eq.~\eqref{eq:dfm_infinitesimal_limit}.
The main distinction is that the infinitesimal limit concerns the \emph{two-time transport velocity}
\[
u^\theta(x_t,t,r)
=
\frac{T_{t,r}^\theta(x_t)-x_t}{r-t},
\]
which is used by the DFM transport map during generation, rather than the drift correction field \(V_{q_{t,r}^\theta,p_r}\) used as the distribution-matching signal in training. The drift field pushes the predicted distribution \(q_{t,r}^\theta\) toward the target marginal \(p_r\), whereas
\(u^\theta\) parameterizes the actual displacement from time \(t\) to time \(r\).

\begin{definition}[Canonical Flow Matching transport and mean velocity]
\label{def:fm_canonical_mean_velocity}
Let \(\Phi_{t,r}\) denote the flow map induced by the marginal Flow Matching ODE
\begin{equation}
\frac{d x_\tau}{d\tau}=v(x_\tau,\tau),
\qquad
x_\tau\big|_{\tau=t}=x_t,
\label{proof:eq:dfm_limit_fm_ode}
\end{equation}
where \(v(x_\tau,\tau)\) is the marginal Flow Matching velocity in Eq.~\eqref{proof:eq:marginalV}. Then
\begin{equation}
\Phi_{t,r}(x_t)
=
x_t+\int_t^r v(x_\tau,\tau)\,d\tau,
\qquad
x_\tau=\Phi_{t,\tau}(x_t).
\label{proof:eq:dfm_limit_flow_map}
\end{equation}
The corresponding canonical two-time mean velocity is defined as
\begin{equation}
u(x_t,t,r)
:=
\frac{\Phi_{t,r}(x_t)-x_t}{r-t}
=
\frac{1}{r-t}\int_t^r v(x_\tau,\tau)\,d\tau .
\label{proof:eq:dfm_limit_mean_velocity}
\end{equation}
\end{definition}

\begin{proposition}[Mean-velocity representation of the canonical transport]
\label{prop:dfm_mean_velocity_representation}
Suppose the two-time transport from \(p_t\) to \(p_r\) coincides with the canonical marginal Flow Matching flow map \(\Phi_{t,r}\). Then the associated DFM velocity equals the average of the marginal Flow Matching velocity along the trajectory over the interval \([t,r]\), namely
\begin{equation}
u(x_t,t,r)
=
\frac{1}{r-t}\int_t^r v(x_\tau,\tau)\,d\tau .
\end{equation}
\end{proposition}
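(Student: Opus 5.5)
The proposition is essentially definitional, so the plan is to unwind the DFM parameterization and match it against Definition~\ref{def:fm_canonical_mean_velocity}. First, recall that DFM writes the two-time transport as $T_{t,r}(x_t)=x_t+(r-t)\,u(x_t,t,r)$, as in Eq.~\eqref{eq:dfm_transport_map}. For $r>t$ this can be inverted as $u(x_t,t,r)=\bigl(T_{t,r}(x_t)-x_t\bigr)/(r-t)$, so the DFM velocity is uniquely determined by the transport map. Under the hypothesis $T_{t,r}=\Phi_{t,r}$, this gives $u(x_t,t,r)=\bigl(\Phi_{t,r}(x_t)-x_t\bigr)/(r-t)$.

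Second, express $\Phi_{t,r}(x_t)-x_t$ as an integral of the velocity. Assume $v$ is regular enough for the ODE~\eqref{proof:eq:dfm_limit_fm_ode} to have a unique absolutely continuous solution $\tau\mapsto x_\tau=\Phi_{t,\tau}(x_t)$ on $[t,r]$; for instance, $v$ continuous in $\tau$ and locally Lipschitz in $x$, with no blow-up on the interval. The fundamental theorem of calculus for absolutely continuous functions then gives
\[
\Phi_{t,r}(x_t)-x_t=\int_t^r \frac{d x_\tau}{d\tau}\,d\tau=\int_t^r v(x_\tau,\tau)\,d\tau,
\]
which is Eq.~\eqref{proof:eq:dfm_limit_flow_map}. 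Dividing by $r-t>0$ yields the claimed identity.

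The only real obstacle is making this integral representation rigorous. That requires well-posedness of the marginal Flow Matching ODE, so that $\Phi_{t,\tau}$ is well-defined and the map $\tau\mapsto v(\Phi_{t,\tau}(x_t),\tau)$ is integrable. I would state this as a standing regularity assumption, the one standard in Flow Matching under which the continuity equation~\eqref{proof:eq:continuity_marginal} is generated by $v$, rather than try to derive it.

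The degenerate case $r=t$ should be handled separately. There $u$ is not defined by the quotient, and the identity is understood as the limit $r\downarrow t$, which is the content of Eq.~\eqref{eq:dfm_infinitesimal_limit} and is treated afterwards.
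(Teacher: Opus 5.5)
Your proposal is correct and follows the paper's route: the paper simply takes the integral form $\Phi_{t,r}(x_t)-x_t=\int_t^r v(x_\tau,\tau)\,d\tau$ from Definition~\ref{def:fm_canonical_mean_velocity} and divides by $r-t$. Your FTC justification under a well-posedness assumption and your separate handling of $r=t$ are extra rigor layered on that same argument, not a different proof.
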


\begin{proof}
By Definition~\ref{def:fm_canonical_mean_velocity},
\[
\Phi_{t,r}(x_t)-x_t
=
\int_t^r v(x_\tau,\tau)\,d\tau .
\]
Dividing both sides by \(r-t\) gives the desired expression.
\end{proof}

\begin{lemma}[First-order expansion of the Flow Matching marginal path]
\label{lem:fm_density_expansion}
Assume the marginal Flow Matching path \(p_t\) satisfies the continuity equation
\begin{equation}
\partial_t p_t+\nabla\cdot(p_t v)=0 .
\label{proof:eq:dfm_limit_fm_continuity}
\end{equation}
Then, for \(h>0\) sufficiently small,
\begin{equation}
p_{t+h}
=
p_t
-
h\,\nabla\cdot(p_t v)
+
o(h),
\label{proof:eq:dfm_limit_density_expansion_fm}
\end{equation}
in the distributional sense.
\end{lemma}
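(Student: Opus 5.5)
We prove Lemma~\ref{lem:fm_density_expansion} weakly, by testing the continuity equation Eq.~\eqref{proof:eq:dfm_limit_fm_continuity} against a smooth compactly supported function. Fix $\varphi\in C_c^\infty(\mathbb{R}^d)$ and set $F(s):=\int\varphi\,dp_s$. The claim Eq.~\eqref{proof:eq:dfm_limit_density_expansion_fm} in the distributional sense means
\[
\int\varphi\,dp_{t+h}=\int\varphi\,dp_t+h\int\nabla\varphi\cdot v(x,t)\,dp_t(x)+o(h),
\]
because the distribution $-\nabla\cdot(p_t v)$ acts on $\varphi$ as $\int\nabla\varphi\cdot v\,dp_t$ after integration by parts.

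The weak form of the continuity equation gives, for a.e.\ $s$,
\[
F'(s)=\int\nabla\varphi(x)\cdot v(x,s)\,dp_s(x)=:G(s).
\]
The second description of $G$ that we want is probabilistic: with $X_s=\alpha(s)X_0+\beta(s)X_1$ and Eq.~\eqref{proof:eq:marginalV},
\[
G(s)=\mathbb{E}\bigl[\nabla\varphi(X_s)\cdot(\dot\alpha(s)X_0+\dot\beta(s)X_1)\bigr].
\]
This follows by the tower property, since $v(X_s,s)=\mathbb{E}[\dot X_s\mid X_s]$. This route also gives $F'=G$ directly by differentiating $\mathbb{E}[\varphi(X_s)]$ under the expectation. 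That differentiation is justified by dominated convergence, because $\nabla\varphi$ is bounded and $X_0,X_1\in L^1$, provided $\alpha,\beta$ are $C^1$.

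From here, $F(t+h)-F(t)=\int_t^{t+h}G(s)\,ds$. It remains to show that $G$ is continuous at $t$, so that this integral equals $hG(t)+o(h)$. As $s\to t$, $X_s\to X_t$ pointwise and $\dot\alpha(s),\dot\beta(s)$ converge. The integrand $\nabla\varphi(X_s)\cdot(\dot\alpha(s)X_0+\dot\beta(s)X_1)$ is therefore dominated by $\|\nabla\varphi\|_\infty\sup_s(|\dot\alpha|+|\dot\beta|)(\|X_0\|+\|X_1\|)\in L^1$. It also converges pointwise, because $\nabla\varphi$ is continuous. Dominated convergence then yields $G(s)\to G(t)$, which completes the argument.

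The main obstacle is the regularity needed for this continuity step and for identifying $F'$. The paper only assumes abstractly that $p_t$ solves the continuity equation. I would therefore add the standing assumptions that $\alpha,\beta\in C^1$ and $X_0,X_1\in L^1$, and note that the $o(h)$ is uniform over $\varphi$ in bounded $C^1$ sets. If one instead wants the purely PDE route, continuity of $s\mapsto p_s v(\cdot,s)$ in the distributional sense must be assumed, and the same integral-remainder argument applies. The argument is unchanged for $h<0$.
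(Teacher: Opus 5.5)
Your proof is correct and is the paper's argument (first-order expansion of $s\mapsto p_s$ in time, with $\partial_t p_t$ replaced by $-\nabla\cdot(p_t v)$) carried out rigorously in weak form. The paper simply presumes $p_s$ is differentiable at $t$, whereas you derive this and the continuity of $G$ from the interpolant under $\alpha,\beta\in C^1$ and $X_0,X_1\in L^1$; this is a genuine addition, because the abstract continuity equation alone only gives the expansion at Lebesgue points of $G$, i.e.\ for a.e.\ $t$.

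One side remark is wrong, though the lemma does not need it. Your dominated-convergence step depends on the modulus of continuity of $\nabla\varphi$, so the $o(h)$ is uniform over $C^2$-bounded sets of test functions but not over $C^1$-bounded ones. For example, with $X_0\equiv 0$, $X_1\equiv 1$ in $\mathbb{R}$ and the linear interpolant, the remainder is $\varphi(t+h)-\varphi(t)-h\varphi'(t)$, which is not uniformly $o(h)$ on $\{\|\varphi\|_{C^1}\le 1\}$.
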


\begin{proof}
The result follows directly from the first-order Taylor expansion of the path \(p_t\) in time and the continuity equation \(\partial_t p_t=-\nabla\cdot(p_t v)\).
\end{proof}

\begin{lemma}[First-order expansion of a local DFM pushforward]
\label{lem:dfm_pushforward_expansion}
Consider a local DFM map of the form
\begin{equation}
T_{t,t+h}^\theta(x)
=
x+h\,u^\theta(x,t,t+h),
\label{proof:eq:dfm_limit_local_map}
\end{equation}
and let \(X_t\sim p_t\). Denote the pushforward distribution by
\[
q_{t,t+h}^\theta
:=
(T_{t,t+h}^\theta)_\# p_t .
\]
Then, for \(h>0\) sufficiently small,
\begin{equation}
q_{t,t+h}^\theta
=
p_t
-
h\,\nabla\cdot\!\left(
p_t u^\theta(\cdot,t,t+h)
\right)
+
o(h),
\label{proof:eq:dfm_limit_density_expansion_dfm}
\end{equation}
in the distributional sense.
\end{lemma}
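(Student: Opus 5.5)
We prove Eq.~\eqref{proof:eq:dfm_limit_density_expansion_dfm} in the weak (distributional) sense by testing the pushforward against an arbitrary smooth, compactly supported function. Fix $\varphi\in C_c^\infty(\mathbb{R}^d)$ and write $u_h(x):=u^\theta(x,t,t+h)$. The defining property of the pushforward gives
\[
\int \varphi\,dq_{t,t+h}^\theta
=
\mathbb{E}\big[\varphi\big(X_t+h\,u_h(X_t)\big)\big],
\qquad X_t\sim p_t .
\]
This reduces the density statement to a scalar expansion in $h$. We assume $u_h$ is locally bounded uniformly for small $h$; concretely, $\sup_{0<h\le h_0}\mathbb{E}\|u_h(X_t)\|^2<\infty$, or $u_h\to u_0$ in $L^2(p_t)$.

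We Taylor-expand $\varphi$ to second order with integral remainder:
\[
\varphi(x+hu_h(x))=\varphi(x)+h\,\nabla\varphi(x)\cdot u_h(x)+R_h(x),
\qquad |R_h(x)|\le \tfrac12 h^2\|\nabla^2\varphi\|_\infty\|u_h(x)\|^2 .
\]
Taking expectations and using the uniform moment bound gives $\mathbb{E}|R_h(X_t)|=O(h^2)=o(h)$. What remains is
\[
\int\varphi\,dq^\theta_{t,t+h}=\int\varphi\,p_t+h\int \nabla\varphi\cdot u_h\,p_t\,dx+o(h).
\]

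For the middle term we integrate by parts. Since $\varphi$ has compact support, there are no boundary terms, and
\[
\int \nabla\varphi\cdot (p_t u_h)\,dx=-\int \varphi\,\nabla\cdot(p_t u_h)\,dx .
\]
Read this as the definition of the distribution $\nabla\cdot(p_tu_h)$ when $p_tu_h$ is merely $L^1_{\mathrm{loc}}$. This yields exactly $\langle q^\theta_{t,t+h},\varphi\rangle=\langle p_t-h\nabla\cdot(p_tu_h),\varphi\rangle+o(h)$ for every test function, which is Eq.~\eqref{proof:eq:dfm_limit_density_expansion_dfm}.

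The main obstacle is making the remainder genuinely $o(h)$ while $u_h$ itself depends on $h$. Without a uniform-in-$h$ integrability assumption on $u^\theta(\cdot,t,t+h)$, the factor $h^2\mathbb{E}\|u_h\|^2$ need not vanish faster than $h$. We will therefore state the uniform second-moment (or local boundedness) hypothesis explicitly, which is natural since $u^\theta$ is a continuous network on $\mathbb{R}^d\times\Delta$. Alternatively, we can use a first-order remainder with the modulus of continuity of $\nabla\varphi$, which only needs $\sup_h\mathbb{E}\|u_h\|<\infty$ together with tightness, giving $o(h)$ by dominated convergence.

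Comparing with Lemma~\ref{lem:fm_density_expansion} then shows that matching $q^\theta_{t,t+h}$ to $p_{t+h}$ at first order forces $\nabla\cdot(p_t(u_h-v))\to0$. This is the route toward Eq.~\eqref{eq:dfm_infinitesimal_limit}.
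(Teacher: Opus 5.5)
Your proof is correct and follows the same route as the paper's: test against $\varphi\in C_c^\infty$, write the pushforward pairing as $\int\varphi(x+h\,u^\theta(x,t,t+h))\,p_t(x)\,dx$, Taylor-expand, and integrate by parts; the one difference is that you state the uniform-in-$h$ second-moment hypothesis on $u^\theta(\cdot,t,t+h)$ that makes the integrated remainder genuinely $o(h)$, whereas the paper writes $o(h)$ inside the integral without justification. One caveat on your optional alternative: bounded first moments plus tightness do not suffice there (take $u_h=e/h$ with $\|e\|=1$ on a set of $p_t$-mass $h$ and $u_h=0$ elsewhere, which leaves a $\Theta(h)$ remainder), so that route needs uniform integrability of $\|u^\theta(X_t,t,t+h)\|$ or a genuine dominating function.
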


\begin{proof}
For any smooth compactly supported test function \(\varphi\), the pushforward definition gives
\[
\int \varphi(y)\,dq_{t,t+h}^\theta(y)
=
\int \varphi\!\left(x+h\,u^\theta(x,t,t+h)\right)p_t(x)\,dx .
\]
Using the first-order Taylor expansion,
\[
\varphi\!\left(x+h\,u^\theta(x,t,t+h)\right)
=
\varphi(x)
+
h\,\nabla\varphi(x)\cdot u^\theta(x,t,t+h)
+
o(h).
\]
Therefore,
\[
\int \varphi(y)\,dq_{t,t+h}^\theta(y)
=
\int \varphi(x)p_t(x)\,dx
+
h\int \nabla\varphi(x)\cdot u^\theta(x,t,t+h)p_t(x)\,dx
+
o(h).
\]
By integration by parts,
\[
\int \nabla\varphi(x)\cdot u^\theta(x,t,t+h)p_t(x)\,dx
=
-\int \varphi(x)\,
\nabla\cdot\!\left(p_t u^\theta(\cdot,t,t+h)\right)(x)\,dx .
\]
Hence
\[
q_{t,t+h}^\theta
=
p_t
-
h\,\nabla\cdot\!\left(
p_t u^\theta(\cdot,t,t+h)
\right)
+
o(h),
\]
which proves the claim.
\end{proof}

\begin{proposition}[Infinitesimal distributional consistency]
\label{prop:dfm_distributional_consistency}
Assume that the learned DFM transport matches the nearby Flow Matching marginal transport to first order, namely
\begin{equation}
q_{t,t+h}^\theta
=
p_{t+h}
+
o(h).
\label{proof:eq:dfm_limit_distributional_matching}
\end{equation}
Then DFM and Flow Matching induce the same infinitesimal marginal transport in the sense that
\begin{equation}
\nabla\cdot\!\left(
p_t u^\theta(\cdot,t,t+h)
\right)
=
\nabla\cdot(p_t v)
+
o(1).
\label{proof:eq:dfm_limit_same_continuity}
\end{equation}
\end{proposition}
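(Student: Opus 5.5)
The plan is to compare the two first-order expansions already established, namely Lemma~\ref{lem:fm_density_expansion} for the Flow Matching marginal path and Lemma~\ref{lem:dfm_pushforward_expansion} for the DFM pushforward, and to read off the claim from the assumed matching~\eqref{proof:eq:dfm_limit_distributional_matching}. All identities will be understood in the distributional sense: we test against an arbitrary \(\varphi\in C_c^\infty(\mathbb{R}^d)\), and \(o(h)\) means a remainder whose pairing with each fixed \(\varphi\) is \(o(h)\) as \(h\to 0^+\).

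\textbf{Step 1.} By Lemma~\ref{lem:dfm_pushforward_expansion},
\[
q_{t,t+h}^\theta = p_t - h\,\nabla\cdot\bigl(p_t u^\theta(\cdot,t,t+h)\bigr) + o(h).
\]
By Lemma~\ref{lem:fm_density_expansion},
\[
p_{t+h} = p_t - h\,\nabla\cdot(p_t v) + o(h).
\]

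\textbf{Step 2.} Substitute both expansions into the hypothesis \(q_{t,t+h}^\theta = p_{t+h}+o(h)\). The zeroth-order terms \(p_t\) cancel, which leaves
\[
h\,\nabla\cdot\bigl(p_t u^\theta(\cdot,t,t+h)\bigr) = h\,\nabla\cdot(p_t v) + o(h).
\]

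\textbf{Step 3.} Divide by \(h>0\). Since \(o(h)/h=o(1)\) pairwise against each test function, this gives~\eqref{proof:eq:dfm_limit_same_continuity}.

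The step needing care is the uniformity of the remainder in Lemma~\ref{lem:dfm_pushforward_expansion}. There the map \(u^\theta(\cdot,t,t+h)\) itself depends on \(h\), so the Taylor remainder \(\tfrac12 h^2 u^{\theta\top}\nabla^2\varphi\,u^\theta\) is \(o(h)\) only if \(\int \|u^\theta(x,t,t+h)\|^2 p_t(x)\,dx\) stays bounded as \(h\to0\). I would make this explicit by assuming \(\sup_{h\le h_0}\|u^\theta(\cdot,t,t+h)\|_{L^2(p_t)}<\infty\), which holds, for example, when \(u^\theta\) is continuous in \(r\) with locally bounded growth in \(x\). Under that assumption the remainder is at most \(\tfrac12 h^2\|\nabla^2\varphi\|_\infty C\), which is \(o(h)\).

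Similarly, Lemma~\ref{lem:fm_density_expansion} requires \(t\mapsto\langle p_t,\varphi\rangle\) to be differentiable at \(t\). This follows from the continuity equation in its weak form.

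Finally, if \(u^\theta(\cdot,t,t+h)\to u^\theta(\cdot,t,t)\) as \(h\to0\), passing to the limit gives
\[
\nabla\cdot\bigl(p_t u^\theta(\cdot,t,t)\bigr)=\nabla\cdot(p_t v).
\]
This identifies \(u^\theta(\cdot,t,t)\) with \(v\) up to a divergence-free (relative to \(p_t\)) component, which is the sense in which Eq.~\eqref{eq:dfm_infinitesimal_limit} holds.
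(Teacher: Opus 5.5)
Your Steps 1--3 are exactly the paper's proof: expand both sides via Lemma~\ref{lem:fm_density_expansion} and Lemma~\ref{lem:dfm_pushforward_expansion}, equate them through the hypothesis, cancel $p_t$, and divide by $h$. Your added bound $\sup_{h\le h_0}\|u^\theta(\cdot,t,t+h)\|_{L^2(p_t)}<\infty$ usefully justifies the $o(h)$ remainder that the paper's pushforward lemma leaves implicit. Your closing remark agrees with Remark~\ref{rem:dfm_distributional_vs_pointwise}, but it understates the paper's claim: the paper asserts Eq.~\eqref{eq:dfm_infinitesimal_limit} pointwise under the stronger flow-map consistency of Theorem~\ref{thm:dfm_infinitesimal_limit}, not only up to $p_t$-weighted divergence-free fields.
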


\begin{proof}
By Lemma~\ref{lem:fm_density_expansion},
\[
p_{t+h}
=
p_t
-
h\,\nabla\cdot(p_t v)
+
o(h).
\]
By Lemma~\ref{lem:dfm_pushforward_expansion},
\[
q_{t,t+h}^\theta
=
p_t
-
h\,\nabla\cdot\!\left(
p_t u^\theta(\cdot,t,t+h)
\right)
+
o(h).
\]
Using the assumption \(q_{t,t+h}^\theta=p_{t+h}+o(h)\), we obtain
\[
p_t
-
h\,\nabla\cdot\!\left(
p_t u^\theta(\cdot,t,t+h)
\right)
+
o(h)
=
p_t
-
h\,\nabla\cdot(p_t v)
+
o(h).
\]
Subtracting \(p_t\) and dividing by \(h\) gives
\[
\nabla\cdot\!\left(
p_t u^\theta(\cdot,t,t+h)
\right)
=
\nabla\cdot(p_t v)
+
o(1).
\]
\end{proof}

\begin{theorem}[Infinitesimal-step limit of DFM]
\label{thm:dfm_infinitesimal_limit}
Let \(h=r-t\). Assume that the learned DFM map is first-order consistent with the canonical local flow map induced by the marginal Flow Matching ODE, i.e.,
\begin{equation}
T_{t,t+h}^\theta(x_t)
=
\Phi_{t,t+h}(x_t)
+
o(h).
\label{proof:eq:dfm_limit_first_order_consistency}
\end{equation}
Assume further that \(v(x_\tau,\tau)\) is continuous along the marginal trajectory \(\tau\mapsto x_\tau=\Phi_{t,\tau}(x_t)\). Then
\begin{equation}
\lim_{h\to 0}
u^\theta(x_t,t,t+h)
=
v(x_t,t).
\label{proof:eq:dfm_limit_pointwise_h}
\end{equation}
Equivalently,
\begin{equation}
\lim_{r\to t}
u^\theta(x_t,t,r)
=
v(x_t,t).
\label{proof:eq:dfm_limit_pointwise_r}
\end{equation}
Thus, in the infinitesimal-step limit, the DFM two-time transport velocity recovers the marginal velocity field of standard Flow Matching.
\end{theorem}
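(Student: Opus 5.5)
The plan is to combine the first-order consistency assumption Eq.~\eqref{proof:eq:dfm_limit_first_order_consistency} with the mean-velocity representation of the canonical flow map from Proposition~\ref{prop:dfm_mean_velocity_representation}. Then we pass to the limit using continuity of $v$ along the trajectory. No distributional argument (Lemmas~\ref{lem:fm_density_expansion}--\ref{lem:dfm_pushforward_expansion}) is needed, since the hypothesis is pointwise.

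\emph{Step 1.} By the transport-map parameterization Eq.~\eqref{proof:eq:dfm_limit_local_map}, $u^\theta(x_t,t,t+h) = \bigl(T^\theta_{t,t+h}(x_t)-x_t\bigr)/h$. Substituting the assumption $T^\theta_{t,t+h}(x_t)=\Phi_{t,t+h}(x_t)+o(h)$ gives
\[
u^\theta(x_t,t,t+h)=\frac{\Phi_{t,t+h}(x_t)-x_t}{h}+\frac{o(h)}{h}=u(x_t,t,t+h)+o(1),
\]
where $u$ is the canonical mean velocity of Definition~\ref{def:fm_canonical_mean_velocity}.

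\emph{Step 2.} By Proposition~\ref{prop:dfm_mean_velocity_representation}, $u(x_t,t,t+h)=\frac1h\int_t^{t+h}v(x_\tau,\tau)\,d\tau$ with $x_\tau=\Phi_{t,\tau}(x_t)$. Hence
\[
\Bigl\|u(x_t,t,t+h)-v(x_t,t)\Bigr\|\le\sup_{\tau\in[t,t+h]}\|v(x_\tau,\tau)-v(x_t,t)\|.
\]
This tends to $0$ as $h\to0$ by the assumed continuity of $\tau\mapsto v(x_\tau,\tau)$ at $\tau=t$, using $x_\tau\big|_{\tau=t}=x_t$. This is the Lebesgue-differentiation / mean-value step for a continuous integrand. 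Combining with Step 1 yields Eq.~\eqref{proof:eq:dfm_limit_pointwise_h}. The reformulation Eq.~\eqref{proof:eq:dfm_limit_pointwise_r} is just the substitution $r=t+h$, with $h\to0^+$ corresponding to $r\to t^+$; this is the only direction that makes sense on $\Delta$.

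The only delicate point is interpreting the $o(h)$ in the hypothesis. It must be pointwise at the fixed $x_t$ (or uniform in a neighborhood), so that dividing by $h$ legitimately gives an $o(1)$ term. I would state this reading explicitly. I would also note that well-posedness of $\Phi_{t,\tau}$, for instance local Lipschitz regularity of $v$, is implicitly assumed by Definition~\ref{def:fm_canonical_mean_velocity}. Continuity of the trajectory then holds automatically, so the continuity hypothesis on $v$ along it is all that is needed.
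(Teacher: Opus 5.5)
Your proposal is correct and follows essentially the same route as the paper. Both arguments write $u^\theta(x_t,t,t+h)=\bigl(T^\theta_{t,t+h}(x_t)-x_t\bigr)/h$, substitute the first-order consistency hypothesis together with $\Phi_{t,t+h}(x_t)=x_t+\int_t^{t+h}v(x_\tau,\tau)\,d\tau$, divide the $o(h)$ remainder by $h$, and conclude by continuity of $\tau\mapsto v(x_\tau,\tau)$ at $\tau=t$. Your explicit sup-bound for the averaging step, and your remarks on reading $o(h)$ pointwise and on the one-sided limit $r\to t^+$, make precise what the paper leaves implicit.
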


\begin{proof}
By the DFM transport parameterization in Eq.~\eqref{eq:dfm_transport_map},
\[
T_{t,t+h}^\theta(x_t)
=
x_t+h\,u^\theta(x_t,t,t+h).
\]
By the Flow Matching flow-map representation in Eq.~\eqref{proof:eq:dfm_limit_flow_map},
\[
\Phi_{t,t+h}(x_t)
=
x_t+\int_t^{t+h}v(x_\tau,\tau)\,d\tau .
\]
Using the first-order consistency assumption
\[
T_{t,t+h}^\theta(x_t)
=
\Phi_{t,t+h}(x_t)
+
o(h),
\]
we obtain
\[
x_t+h\,u^\theta(x_t,t,t+h)
=
x_t+\int_t^{t+h}v(x_\tau,\tau)\,d\tau
+
o(h).
\]
Canceling \(x_t\) gives
\begin{equation}
h\,u^\theta(x_t,t,t+h)
=
\int_t^{t+h}v(x_\tau,\tau)\,d\tau
+
o(h).
\label{proof:eq:dfm_limit_compare_displacement}
\end{equation}
Dividing by \(h\), we have
\begin{equation}
u^\theta(x_t,t,t+h)
=
\frac{1}{h}
\int_t^{t+h}v(x_\tau,\tau)\,d\tau
+
o(1).
\label{proof:eq:dfm_limit_average_plus_error}
\end{equation}
Since \(v(x_\tau,\tau)\) is continuous along the trajectory,
\[
\lim_{h\to 0}
\frac{1}{h}
\int_t^{t+h}v(x_\tau,\tau)\,d\tau
=
v(x_t,t).
\]
Therefore,
\[
\lim_{h\to 0}
u^\theta(x_t,t,t+h)
=
v(x_t,t).
\]
Since \(h=r-t\), this is equivalently
\[
\lim_{r\to t}
u^\theta(x_t,t,r)
=
v(x_t,t).
\]
\end{proof}

\begin{remark}[Distributional versus pointwise consistency]
\label{rem:dfm_distributional_vs_pointwise}
Proposition~\ref{prop:dfm_distributional_consistency} shows that if the DFM pushforward \(q_{t,t+h}^\theta\) matches \(p_{t+h}\) to first order, then DFM and Flow Matching agree at the level of infinitesimal marginal transport.
This is a distributional statement and only identifies the divergence
\(\nabla\cdot(p_t u^\theta)\), not necessarily the pointwise velocity field.
The pointwise limit in Theorem~\ref{thm:dfm_infinitesimal_limit} requires the stronger first-order consistency assumption
\(T_{t,t+h}^\theta(x_t)=\Phi_{t,t+h}(x_t)+o(h)\).
\end{remark}

\begin{remark}[Role of the drift field]
\label{rem:dfm_drift_field_role}
The drift correction field \(V_{q_{t,r}^\theta,p_r}\) does not itself converge to the Flow Matching velocity \(v(x_t,t)\). Instead, it provides the distribution-level training signal that updates the predicted target-time distribution \(q_{t,r}^\theta\) toward \(p_r\). The infinitesimal Flow Matching limit concerns the learned transport velocity
\[
u^\theta(x_t,t,r)
=
\frac{T_{t,r}^\theta(x_t)-x_t}{r-t},
\]
which is the velocity used by DFM during generation.
\end{remark}

\section{Gradient Descent in Drift Field for Drift Flow Matching}
\label{app:dfm_gradient_descent}

We justify the statement following Eq.~\eqref{eq:dfm_gradient}. For a fixed sampled pair $(t,r)$, the DFM objective is exactly the Drift stop-gradient loss applied to the predicted target-time particles $\widehat x_r$, with current model distribution $q_{t,r}^\theta$ and target marginal $p_r$. 

\begin{proposition}[Stop-gradient DFM loss induces the DFM drift ODE in particle space]  \label{propo:dfmloss_induces_ode}
Fix a sampled time pair $(t,r)$. Let $\widehat x_r^1,\ldots,\widehat x_r^n \in \mathbb{R}^d$ be free particles and $x_r^1,\ldots,x_r^n\in \mathbb{R}^d\sim p_r$ be target particles (no parametrization restriction), and let
\[
q_{t,r}
=
\frac{1}{n}\sum_{i=1}^n \delta_{\widehat x_r^i},
\qquad
p_r
=
\frac{1}{n}\sum_{i=1}^{n}\delta_{x_{r}^{i}}
\]
be their empirical distribution and target distribution at time $r$, respectively. Let $V_{q_{t,r},p_r} : \mathbb{R}^d \to \mathbb{R}^d$ denote the DFM drift field for this time pair. Consider the stop-gradient objective
\[
\mathcal{L}_{\mathrm{DFM}}(\widehat x_r^1,\ldots,\widehat x_r^n)
:=
\frac{1}{2}\sum_{i=1}^n
\left\|
\widehat x_r^i
-
\operatorname{sg}\!\left(
\widehat x_r^i
+
V_{q_{t,r},p_r}(\widehat x_r^i)
\right)
\right\|^2.
\]
Then the gradient with respect to each particle is
\[
\nabla_{\widehat x_r^i}\mathcal{L}_{\mathrm{DFM}}
=
-
V_{q_{t,r},p_r}(\widehat x_r^i).
\]
Here $s \ge 0$ denotes the auxiliary gradient-flow time associated with continuous-time descent on $\mathcal{L}_{\mathrm{DFM}}$ for the fixed pair $(t,r)$; it is independent of the model time indices $t$ and $r$. Hence the continuous-time gradient flow in particle space satisfies
\[
\dot{\widehat x}_{r,s}^i
=
-
\nabla_{\widehat x_r^i}\mathcal{L}_{\mathrm{DFM}}(\widehat x_{r,s})
=
V_{q_{t,r},p_r}(\widehat x_{r,s}^i).
\]
Therefore, when the target-time outputs are viewed directly as particles, the DFM stop-gradient loss induces the DFM drift field, which pushes the predicted distribution $q_{t,r}$ toward the target marginal $p_r$.
\end{proposition}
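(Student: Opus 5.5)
The plan is to differentiate the stop-gradient objective term by term, exploiting the fact that $\operatorname{sg}$ freezes its argument. For each $i$, set
\[
c_i:=\operatorname{sg}\!\left(\widehat x_r^i+V_{q_{t,r},p_r}(\widehat x_r^i)\right).
\]
Under the stop-gradient convention, $c_i$ is evaluated at the current particle configuration but treated as a constant when differentiating. Every dependence of $V_{q_{t,r},p_r}$ on the particles, both through the evaluation point $\widehat x_r^i$ and through the empirical measure $q_{t,r}$ inside $V^-_{q_{t,r}}$ and the normalizer $Z_{q_{t,r}}$, therefore sits inside $\operatorname{sg}$ and contributes no gradient. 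The target particles $x_r^j$ are data and are fixed. Consequently $\mathcal{L}_{\mathrm{DFM}}=\frac12\sum_i\|\widehat x_r^i-c_i\|^2$, and $\widehat x_r^i$ enters only through its own summand.

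Differentiating the $i$-th summand gives
\[
\nabla_{\widehat x_r^i}\mathcal{L}_{\mathrm{DFM}}=\widehat x_r^i-c_i.
\]
Substituting the forward value $c_i=\widehat x_r^i+V_{q_{t,r},p_r}(\widehat x_r^i)$ yields
\[
\nabla_{\widehat x_r^i}\mathcal{L}_{\mathrm{DFM}}=-V_{q_{t,r},p_r}(\widehat x_r^i).
\]
This is the same computation that underlies the one-step drift gradient in Eq.~\eqref{proof:eq:drift-gradient-update}. The difference is that the Jacobian $J_f$ is replaced by the identity, because the particles themselves are the free variables, and $(q_\theta,p_{\mathrm{data}})$ is replaced by the time-pair marginals $(q_{t,r},p_r)$ as in Eq.~\eqref{eq:dfm_drift_field}.

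For the flow statement, introduce the auxiliary descent time $s$ and write $\dot{\widehat x}^i_{r,s}=-\nabla_{\widehat x_r^i}\mathcal{L}_{\mathrm{DFM}}(\widehat x_{r,s})$. At each $s$ the stop-gradient target is re-evaluated at the current configuration $\widehat x_{r,s}$, so the empirical measure $q_{t,r}$ inside $V$ is the current one. Plugging in the previous gradient identity gives $\dot{\widehat x}^i_{r,s}=V_{q_{t,r},p_r}(\widehat x^i_{r,s})$, which is the particle-space drift flow of Eq.~\eqref{proof:eq:drift-flow-ode} for the pair $(q_{t,r},p_r)$. The concluding sentence about pushing $q_{t,r}$ toward $p_r$ is then the standard interpretation of this flow from the drift literature, as in Remark~\ref{proof:remark:drift_identifiability}; I would present it as an interpretation rather than prove it.

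The only real obstacle is conceptual: the objective is not an honest function whose gradient flow is the drift. The identity above is the gradient of a surrogate in which the target is frozen, re-frozen at every instant. I would state this convention explicitly at the start, namely "gradient" means the gradient with $\operatorname{sg}(\cdot)$ held fixed and then evaluated at the current point. I would also note, to be safe, that $\mathcal{L}$ could carry a $\frac{1}{n}$ normalization as in Eq.~\eqref{eq:dfm_group_loss}. That factor would only rescale the descent time $s$ and would not change the direction of the flow.
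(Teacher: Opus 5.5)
Your proposal is correct and follows essentially the same route as the paper's proof. You freeze the stop-gradient target $c_i$ (the paper calls it $s_i$), differentiate $\frac12\|\widehat x_r^i-c_i\|^2$ to get $\widehat x_r^i-c_i=-V_{q_{t,r},p_r}(\widehat x_r^i)$, and substitute this into the gradient-flow equation. Your extra remarks are accurate and go slightly beyond what the paper states: the target is re-frozen at each instant, so this is a surrogate rather than an honest gradient flow; a $\frac1n$ factor only rescales $s$; and the closing ``pushes toward $p_r$'' claim is interpretive rather than proved.
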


\begin{proof}
Fix $i$ and denote
\[
s_i
:=
\operatorname{sg}\!\left(
\widehat x_r^i
+
V_{q_{t,r},p_r}(\widehat x_r^i)
\right).
\]
By definition of $\operatorname{sg}(\cdot)$, $s_i$ is treated as constant when differentiating with respect to $\widehat x_r^i$. Therefore,
\[
\nabla_{\widehat x_r^i}
\frac{1}{2}
\left\|
\widehat x_r^i-s_i
\right\|^2
=
\widehat x_r^i-s_i.
\]
Since
\[
\widehat x_r^i-s_i
=
-
V_{q_{t,r},p_r}(\widehat x_r^i),
\]
we obtain
\[
\nabla_{\widehat x_r^i}\mathcal{L}_{\mathrm{DFM}}
=
-
V_{q_{t,r},p_r}(\widehat x_r^i).
\]
The gradient-flow identity then gives
\[
\dot{\widehat x}_{r,s}^i
=
-
\nabla_{\widehat x_r^i}\mathcal{L}_{\mathrm{DFM}}(\widehat x_{r,s})
=
V_{q_{t,r},p_r}(\widehat x_{r,s}^i).
\]
This is exactly the DFM drift ODE in particle space for the fixed pair $(t,r)$.
\end{proof}

\begin{proposition}[Gradient of the DFM stop-gradient loss for $T_{t,r}^\theta$]
Fix a sampled pair $(t,r)$ and let $x_r^1,\ldots,x_r^n\in \mathbb{R}^d\sim p_r$ be target particles. Define the predicted target-time particles by
\[
\widehat x_r^i(\theta)
:=
T_{t,r}^\theta(x_t^i) \in \mathbb{R}^d,
\qquad
q_{t,r}^\theta
=
\frac{1}{n}\sum_{i=1}^n \delta_{\widehat x_r^i(\theta)}.
\]
Consider the stop-gradient regression objective
\[
\mathcal{L}_{\mathrm{DFM}}^{(t,r)}(\theta)
:=
\frac{1}{2}\sum_{i=1}^n
\left\|
\widehat x_r^i(\theta)
-
\operatorname{sg}\!\left(
\widehat x_r^i(\theta)
+
V_{q_{t,r}^\theta,p_r}(\widehat x_r^i(\theta))
\right)
\right\|^2.
\]
Then its gradient is
\[
\nabla_\theta \mathcal{L}_{\mathrm{DFM}}^{(t,r)}(\theta)
=
-
\sum_{i=1}^n
J_{T_{t,r}}(\theta,x_t^i)^\top
V_{q_{t,r}^\theta,p_r}(\widehat x_r^i(\theta)),
\]
where $J_{T_{t,r}}(\theta,x_t^i)\in\mathbb{R}^{d\times \dim(\theta)}$ denotes the Jacobian of $T_{t,r}^\theta(x_t^i)$ with respect to $\theta$. Equivalently, after reinstating the empirical normalization and taking expectation over $(t,r)\sim\rho$ and $X_t\sim p_t$, we recover Eq.~\eqref{eq:dfm_gradient}:
\[
\nabla_\theta \mathcal{L}_{\mathrm{DFM}}(\theta)
=
-
\mathbb{E}_{(t,r)\sim\rho,\;X_t\sim p_t}
\left[
J_{T_{t,r}}(\theta,X_t)^\top
V_{q_{t,r}^\theta,p_r}\!\left(T_{t,r}^\theta(X_t)\right)
\right].
\]
\end{proposition}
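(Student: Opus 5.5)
The plan is to apply the chain rule, with $\operatorname{sg}$ read as a constant, and reuse the particle-level computation of Proposition~\ref{propo:dfmloss_induces_ode}. Fix the sampled pair $(t,r)$, the source particles $x_t^1,\ldots,x_t^n$ and the target particles $x_r^1,\ldots,x_r^n$. For each $i$ define the frozen target
\[
s_i := \operatorname{sg}\!\left(\widehat x_r^i(\theta)+V_{q_{t,r}^\theta,p_r}(\widehat x_r^i(\theta))\right).
\]
By the definition of the stop-gradient operator, $s_i$ has zero derivative with respect to $\theta$. This holds even though it depends on $\theta$ through both $\widehat x_r^i(\theta)$ and the empirical measure $q_{t,r}^\theta$ that enters the negative drift term. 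This is the one point that needs care. Without $\operatorname{sg}$, differentiating $V_{q_{t,r}^\theta,p_r}$ through the kernel weights $P^{(g)}_{\widehat X_r\widehat X_r}$ and $P^{(g)}_{\widehat X_r X_r}$ would give extra terms. The statement concerns the stop-gradient objective, so those terms vanish by definition.

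With $s_i$ treated as constant, each summand is $\ell_i(\theta)=\frac12\|\widehat x_r^i(\theta)-s_i\|^2$. Its gradient in the output variable is $\widehat x_r^i-s_i$, which equals $-V_{q_{t,r}^\theta,p_r}(\widehat x_r^i(\theta))$ exactly as in Proposition~\ref{propo:dfmloss_induces_ode}. The multivariate chain rule through the map $\theta\mapsto \widehat x_r^i(\theta)=T_{t,r}^\theta(x_t^i)$ then gives
\[
\nabla_\theta \ell_i(\theta)=J_{T_{t,r}}(\theta,x_t^i)^\top(\widehat x_r^i-s_i)=-J_{T_{t,r}}(\theta,x_t^i)^\top V_{q_{t,r}^\theta,p_r}(\widehat x_r^i(\theta)).
\]
Here $J_{T_{t,r}}\in\mathbb{R}^{d\times\dim(\theta)}$, so $J^\top$ maps $\mathbb{R}^d$ to $\mathbb{R}^{\dim(\theta)}$ and the dimensions match. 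Since $T_{t,r}^\theta(x_t)=x_t+(r-t)u^\theta(x_t,t,r)$, we can also write $J_{T_{t,r}}=(r-t)\,\partial_\theta u^\theta$. This is optional and only needs $u^\theta$ to be differentiable in $\theta$, which I will assume. Summing over $i$ gives the displayed finite-sample formula.

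For the expectation form, I will insert the empirical normalization $1/n$ (or $1/n_g$ as in Eq.~\eqref{eq:dfm_group_loss}), average over groups as in Eq.~\eqref{eq:dfm_batch_loss}, and then read the result as a Monte Carlo estimate of Eq.~\eqref{eq:dfm_loss}. Exchanging $\nabla_\theta$ with $\mathbb{E}_{(t,r)\sim\rho,\,X_t\sim p_t}$ needs a standard dominated-convergence condition. A sufficient one is that $\|J_{T_{t,r}}^\top V\|$ is dominated by an integrable function, uniformly in a neighbourhood of $\theta$. This holds, for example, under bounded Jacobians together with the fact that the drift field is bounded by the second moments of $p_r$ and $q_{t,r}^\theta$. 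Applying the per-pair identity inside the expectation then yields Eq.~\eqref{eq:dfm_gradient}.

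I expect the main obstacle to be this last interchange, together with making precise that in the population version the stop-gradient also freezes the dependence of the field $V_{q_{t,r}^\theta,p_r}$ on $\theta$ through the measure $q_{t,r}^\theta$. I will state it as a convention: the gradient is taken with the field evaluated at the current $\theta$ and held fixed. The algebraic identity itself is routine once that convention is fixed.
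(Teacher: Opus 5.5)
Your proposal is correct and is essentially the paper's own proof: treat $s_i$ as constant under the stop-gradient, apply the chain rule through $\theta\mapsto T_{t,r}^\theta(x_t^i)$ to get $J_{T_{t,r}}(\theta,x_t^i)^\top(\widehat x_r^i-s_i)=-J_{T_{t,r}}(\theta,x_t^i)^\top V_{q_{t,r}^\theta,p_r}(\widehat x_r^i(\theta))$, sum over $i$, then normalize and take the expectation over $(t,r)\sim\rho$ and $X_t\sim p_t$. Two of your points go beyond what the paper writes: that the stop-gradient also freezes the dependence through $q_{t,r}^\theta$, and that interchanging $\nabla_\theta$ with $\mathbb{E}$ needs justification. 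One small correction there: the population drift is not uniformly bounded, but for the Gibbs kernel it grows at most linearly in $\|x\|$, with constants depending on the second moments and $\tau$, which still gives the integrability you need under bounded Jacobians.
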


\begin{proof}
Let
\[
\widehat x_r^i(\theta)=T_{t,r}^\theta(x_t^i)
\]
and denote
\[
s_i
:=
\operatorname{sg}\!\left(
\widehat x_r^i(\theta)
+
V_{q_{t,r}^\theta,p_r}(\widehat x_r^i(\theta))
\right).
\]
By definition of $\operatorname{sg}(\cdot)$, $s_i$ is treated as constant when differentiating with respect to $\theta$. Hence
\[
\nabla_\theta
\frac{1}{2}
\left\|
\widehat x_r^i(\theta)-s_i
\right\|^2
=
J_{T_{t,r}}(\theta,x_t^i)^\top
\bigl(\widehat x_r^i(\theta)-s_i\bigr).
\]
Since
\[
\widehat x_r^i(\theta)-s_i
=
-
V_{q_{t,r}^\theta,p_r}(\widehat x_r^i(\theta)),
\]
summing over $i$ yields
\[
\nabla_\theta \mathcal{L}_{\mathrm{DFM}}^{(t,r)}(\theta)
=
-
\sum_{i=1}^n
J_{T_{t,r}}(\theta,x_t^i)^\top
V_{q_{t,r}^\theta,p_r}(\widehat x_r^i(\theta)).
\]
Dividing by the minibatch size and taking expectation over the sampled pair $(t,r)$ and the marginal state $X_t\sim p_t$ gives Eq.~\eqref{eq:dfm_gradient}.
\end{proof}

\begin{remark}[Particle-space drift versus parameter-space update]
\label{rem:dfm_particle_vs_parameter_update}
Proposition~\ref{propo:dfmloss_induces_ode} shows that, if the predicted target-time particles \(\widehat x_r^i\) were optimized directly, the stop-gradient DFM loss would induce the particle-space drift ODE
\[
\dot{\widehat x}_{r,s}^i
=
V_{q_{t,r},p_r}(\widehat x_{r,s}^i)
\]
for each fixed sampled pair \((t,r)\). Under the parameterized map \(\widehat x_r^i=T_{t,r}^\theta(x_t^i)\), however, the output-space motion is constrained by the model Jacobian. Therefore, a gradient step in \(\theta\) is not generally identical to the explicit particle update
\[
\widehat x_r^i
\mapsto
\widehat x_r^i+\eta V_{q_{t,r}^\theta,p_r}(\widehat x_r^i).
\]
Instead, Eq.~\eqref{eq:dfm_gradient} shows that the parameter-space descent direction is the pullback of the DFM drift field through \(T_{t,r}^\theta\). Thus the stop-gradient objective supplies the correct distribution-level drift signal, while the realizable update is mediated by the parameterization.
\end{remark}

\section{Pseudocode of Drift Flow Matching}
\label{app:dfm_pseudocode}
The training and inference procedures of \textbf{DFM} are summarized in pseudocode.
The training pipeline is presented in Algorithm~\ref{alg:train}, where the group drift computation is detailed in Algorithm~\ref{alg:drift}.
The inference-time generation procedure is presented in Algorithm~\ref{alg:inference}.

\begin{figure}[t]
\centering
\refstepcounter{algorithm}
\label{alg:train}
\begin{tcolorbox}[
    colback=algbg,
    colframe=algframe,
    boxrule=0.8pt,
    sharp corners,
    enhanced,
    width=\linewidth,
    left=6pt,right=6pt,top=6pt,bottom=6pt
]
{\bfseries Algorithm 1:} {\bfseries Drift Flow Matching Training}

\vspace{0.45em}

{\algcodefont

\cmt{\# u\^{}theta: Drift Flow Matching network, corresponding to $u^\theta(x_t,t,r)$} \\
\cmt{\# N: batch size; G: number of time-pair groups; B = N / G: samples per group; D: data dimension} \\
\cmt{\# grouped tensors have shape [G, B, *]; flattened tensors have shape [N, *]} \\
\cmt{\# rho: distribution over time pairs $(t,r) \sim \rho$ with $0 \le t \le r \le 1$} \\
\cmt{\# alpha(.), beta(.): interpolation schedules in the conditional path construction} \\

\vspace{0.55em}
\code{(x0, x1) <- sample\_endpoint\_pairs(p0, p1)} \hfill \cmt{\# corresponds to $(X_0, X_1) \sim p_0(x_0)p_1(x_1)$, shape [N, D]} \\
\code{(t\_raw, r\_raw) <- sample\_time\_pairs(rho, G)} \hfill \cmt{\# one sampled time pair per group, shape [G]} \\

\vspace{0.35em}
\code{x0\_grp = }\fn{reshape\_grouped}\code{(x0, G)} \hfill \cmt{\# grouped endpoint samples from $X_0$, shape [G, B, D]} \\
\code{x1\_grp = }\fn{reshape\_grouped}\code{(x1, G)} \hfill \cmt{\# grouped endpoint samples from $X_1$, shape [G, B, D]} \\

\code{t\_grp, r\_grp = }\fn{expand\_groupwise}\code{(t\_raw, r\_raw, B)} \hfill \cmt{\# broadcast group-wise times, shape [G, B, 1]} \\
\code{xt\_grp = alpha(t\_grp) * x0\_grp + beta(t\_grp) * x1\_grp} \hfill \cmt{\# corresponds to $X_t = \alpha(t)X_0 + \beta(t)X_1$, shape [G, B, D]} \\
\code{xr\_grp = alpha(r\_grp) * x0\_grp + beta(r\_grp) * x1\_grp} \hfill \cmt{\# corresponds to $X_r = \alpha(r)X_0 + \beta(r)X_1$, shape [G, B, D]} \\

\vspace{0.35em}
\code{xt\_flat = }\fn{flatten\_grouped}\code{(xt\_grp)} \hfill \cmt{\# flattened source states $X_t$, shape [N, D]} \\
\code{t\_flat = }\fn{flatten\_grouped}\code{(t\_grp)} \hfill \cmt{\# flattened time tensor for $t$, shape [N, 1]} \\
\code{r\_flat = }\fn{flatten\_grouped}\code{(r\_grp)} \hfill \cmt{\# flattened step tensor for $r$, shape [N, 1]} \\

\vspace{0.35em}
\code{xr\_hat\_flat = xt\_flat + (r\_flat - t\_flat) * }\fn{u}\code{(xt\_flat, t\_flat, r\_flat)} \hfill \cmt{\# corresponds to $\widehat X_r^\theta = X_t + (r-t)u^\theta(X_t,t,r)$, shape [N, D]} \\
\code{xr\_hat\_grp = }\fn{reshape\_grouped}\code{(xr\_hat\_flat, G)} \hfill \cmt{\# grouped predicted states, i.e. empirical $q_{t,r}^\theta$, shape [G, B, D]} \\

\vspace{0.35em}
\code{V\_grp = }\fn{compute\_grouped\_drift}\code{(xr\_hat\_grp, xr\_grp, xr\_hat\_grp)} \hfill \cmt{\# grouped Drift field $V_{q_{t,r}^\theta,p_r}$, shape [G, B, D]} \\
\code{x\_target\_grp = }\fn{stopgrad}\code{(xr\_hat\_grp + V\_grp)} \hfill \cmt{\# corresponds to $\operatorname{sg}(\widehat X_r^\theta + V_{q_{t,r}^\theta,p_r})$, shape [G, B, D]} \\
\code{x\_target\_flat = }\fn{flatten\_grouped}\code{(x\_target\_grp)} \hfill \cmt{\# flattened stop-gradient target, shape [N, D]} \\

\vspace{0.35em}
\code{L\_DFM = }\fn{mse\_loss}\code{(xr\_hat\_flat, x\_target\_flat)} \hfill \cmt{\# corresponds to the grouped realization of $\mathcal{L}_{\mathrm{DFM}}$} \\

\code{theta <- }\fn{optimizer\_step}\code{(theta, grad(L\_DFM))} \hfill \cmt{\# gradient update for model parameters $\theta$}
}
\end{tcolorbox}
\end{figure}
\begin{figure}[t]
\centering
\refstepcounter{algorithm}
\label{alg:drift}
\begin{tcolorbox}[
    colback=algbg,
    colframe=algframe,
    boxrule=0.8pt,
    sharp corners,
    enhanced,
    width=\linewidth,
    left=6pt,right=6pt,top=6pt,bottom=6pt
]
{\bfseries Algorithm 2:} {\bfseries Grouped Drift Field}

\vspace{0.45em}

{\algcodefont

\code{def }\fn{compute\_grouped\_drift}\code{(x, pos, neg, temp\_pos, temp\_neg, sinkhorn\_iters):} \\

\cmt{\# We adopt a grouped variant of the Sinkhorn Drifting Field~\cite{he2026sinkhorn} as our drift field. The only modification is that the input data now includes an additional dimension G, which indexes groups, resulting in a tensor of shape [G, B, D]. The original Sinkhorn Drifting Field~\cite{he2026sinkhorn} procedure is then applied independently over the B and D dimensions [$\cdot$, B, D] within each group. Notably, when sinkhorn\_iters = 1, our formulation reduces to that of~\cite{deng2026generative}} \\

\cmt{\# x:   [G, B, D], grouped query states, corresponding to xr\_hat\_grp} \\
\cmt{\# pos: [G, B, D], grouped positive states, corresponding to xr\_grp} \\
\cmt{\# neg: [G, B, D], grouped negative states, corresponding to xr\_hat\_grp} \\
\cmt{\# temp\_pos, temp\_neg: positive / negative kernel temperatures} \\
\cmt{\# sinkhorn\_iters: number of Sinkhorn iterations} \\

\vspace{0.55em}
\code{G, B, D = shape(x)} \\

\vspace{0.55em}
\cmt{\# pairwise distances for positive transport} \\
\code{dist\_pos = }\fn{cdist}\code{(x, pos)} \hfill \cmt{\# [G, B, B]} \\
\code{logits\_pos = -dist\_pos / temp\_pos} \hfill \cmt{\# Gibbs logits for positive coupling} \\

\vspace{0.35em}
\cmt{\# uniform marginals for grouped positive OT} \\
\code{row\_marginal\_pos = }\fn{full}\code{([B], 1.0 / B)} \\
\code{col\_marginal\_pos = }\fn{full}\code{([B], 1.0 / B)} \\

\vspace{0.35em}
\cmt{\# Sinkhorn coupling for x -> pos, batched over groups} \\
\code{plan\_pos = }\fn{sinkhorn\_from\_logits\_batched}\code{(} \\
\hspace*{1.5em}\code{logits\_pos, row\_marginal\_pos, col\_marginal\_pos, sinkhorn\_iters)} \hfill \cmt{\# [G, B, B]} \\
\code{weights\_pos = }\fn{row\_normalize\_plan}\code{(plan\_pos)} \hfill \cmt{\# [G, B, B]} \\
\code{drift\_pos = weights\_pos @ pos} \hfill \cmt{\# [G, B, D]} \\

\vspace{0.55em}
\cmt{\# pairwise distances for negative transport} \\
\code{dist\_neg = }\fn{cdist}\code{(x, neg)} \hfill \cmt{\# [G, B, B]} \\
\code{logits\_neg = -dist\_neg / temp\_neg} \hfill \cmt{\# Gibbs logits for negative coupling} \\

\vspace{0.35em}
\cmt{\# uniform marginals for grouped negative OT} \\
\code{row\_marginal\_neg = }\fn{full}\code{([B], 1.0 / B)} \\
\code{col\_marginal\_neg = }\fn{full}\code{([B], 1.0 / B)} \\

\vspace{0.35em}
\cmt{\# Sinkhorn coupling for x -> neg, batched over groups.} \\
\code{plan\_neg = }\fn{sinkhorn\_from\_logits\_batched}\code{(} \\
\hspace*{1.5em}\code{logits\_neg, row\_marginal\_neg, col\_marginal\_neg, sinkhorn\_iters)} \hfill \cmt{\# [G, B, B]} \\
\code{weights\_neg = }\fn{row\_normalize\_plan}\code{(plan\_neg)} \hfill \cmt{\# [G, B, B]} \\
\code{drift\_neg = weights\_neg @ neg} \hfill \cmt{\# [G, B, D]} \\

\vspace{0.55em}
\cmt{\# grouped cross-minus-self drift} \\
\code{V\_grp = drift\_pos - drift\_neg} \hfill \cmt{\# corresponds to $V_{q_{t,r}^\theta,p_r}$, shape [G, B, D]} \\
\code{return V\_grp}
}
\end{tcolorbox}
\end{figure}
\begin{figure}[t]
\centering
\refstepcounter{algorithm}
\label{alg:inference}
\begin{tcolorbox}[
    colback=algbg,
    colframe=algframe,
    boxrule=0.8pt,
    sharp corners,
    enhanced,
    width=\linewidth,
    left=6pt,right=6pt,top=6pt,bottom=6pt
]
{\bfseries Algorithm 3:} {\bfseries Drift Flow Matching Inference}

\vspace{0.45em}

{\algcodefont

\cmt{\# u\^{}theta: trained Drift Flow Matching network, corresponding to $u^\theta(x_t,t,r)$} \\
\cmt{\# N: number of generated samples; D: data dimension; M: number of inference steps} \\
\cmt{\# time\_grid = [t\_0, t\_1, ..., t\_M] with $0=t_0<t_1<\cdots<t_M=1$} \\

\vspace{0.55em}
\code{x = }\fn{sample\_source\_states}\code{(p0, N)} \hfill \cmt{\# corresponds to $x_{t_0} \sim p_0$, shape [N, D]} \\
\code{time\_grid = }\fn{build\_time\_grid}\code{(M)} \hfill \cmt{\# inference time grid from $0$ to $1$} \\

\vspace{0.35em}
\code{for m in range(M):} \\
\hspace*{1.5em}\code{t\_cur, t\_next = time\_grid[m], time\_grid[m + 1]} \\
\hspace*{1.5em}\code{t\_cur\_batch, t\_next\_batch = }\fn{expand\_timesteps}\code{(t\_cur, t\_next, N)} \hfill \cmt{\# both have shape [N, 1]} \\
\hspace*{1.5em}\code{x = x + (t\_next\_batch - t\_cur\_batch) * }\fn{u}\code{(x, t\_cur\_batch, t\_next\_batch)} \\
\code{end for} \\

\vspace{0.35em}
\code{return x} \hfill \cmt{\# output samples at final time $t_M = 1$}
}
\end{tcolorbox}
\end{figure}

\section{Limitations, Broader Impacts, and Reproducibility}
\label{app:checklist_support}

\paragraph{Limitations.}
Like all generative models, DFM can learn and amplify harmful, biased, private, or otherwise unsafe patterns present in the training data. This limitation is inherited from both the data distribution and the generative modeling objective, and it should be considered when applying the method to sensitive image, language, or control domains.

\paragraph{Broader impacts.}
DFM can improve the quality--efficiency trade-off of generative modeling by allowing users to choose between efficient one-step generation and more accurate multi-step refinement. Potential negative impacts include misuse of synthetic media, privacy leakage from training data, amplification of dataset biases, and unsafe deployment of generated robot policies; practical mitigations include careful data governance, respecting dataset and model licenses, safety evaluation before deployment, human oversight in high-risk applications, and controlled release when models or outputs could be misused.

\paragraph{Reproducibility and experimental details.}
The paper provides the full method formulation together with training, grouped-drift, and inference pseudocode in Appendix~\S\ref{app:dfm_pseudocode}. For the image-generation experiments, the model backbone is kept consistent with the corresponding Drift Model baseline whenever applicable; for small- and medium-scale experiments such as synthetic data, MNIST, and FFHQ, the main text specifies the datasets, latent representations, metrics, and core model settings, while for large-scale ImageNet and robotic-control experiments, all training and evaluation protocols follow the specified baselines except for the necessary DFM algorithmic and model changes. All datasets used in the experiments are publicly available, and the complete project code will be made public after acceptance.

\paragraph{Statistics, compute, and assets.}
Except for the robotic-control experiments, reported experimental results are averaged over six random seeds; the robotic-control table reports averages over the last 10 checkpoints following the stated evaluation protocol. Small- and medium-scale experiments, including synthetic data, MNIST, and FFHQ, were run on 2 NVIDIA A100 80GB GPUs, while large-scale ImageNet and robotic-control experiments were run on 16 NVIDIA A100 80GB GPUs. Existing datasets, pretrained models, codebases, and baselines used in the paper, including MNIST, FFHQ, ImageNet, ALAE, VAE or latent-MAE components, robotics benchmarks, and Drift or Diffusion Policy baselines, are cited to their original sources and used in accordance with their licenses and terms; the new project code will be released as a documented asset after acceptance.

\clearpage
\section{Generation Result}
\label{app:generation_result}

The FFHQ generated results in Figures~\ref{fig:ffhq_fake_samples_steps_1_2}
and~\ref{fig:ffhq_fake_samples_steps_5_10} correspond to the FFHQ
conditional-generation setting reported in Sec.~\ref{sec:experiment},
Table~\ref{tab:mnist_ffhq_comparison}, and
Figure~\ref{fig:ffhq_latent_inference_steps}. The drift temperature is fixed
to $\tau=1.0$, consistent with the temperature setting selected in the
ablation study in Table~\ref{tab:ablation_temp}.

\begin{figure*}[p]
  \centering
  \includegraphics[width=0.98\textwidth]{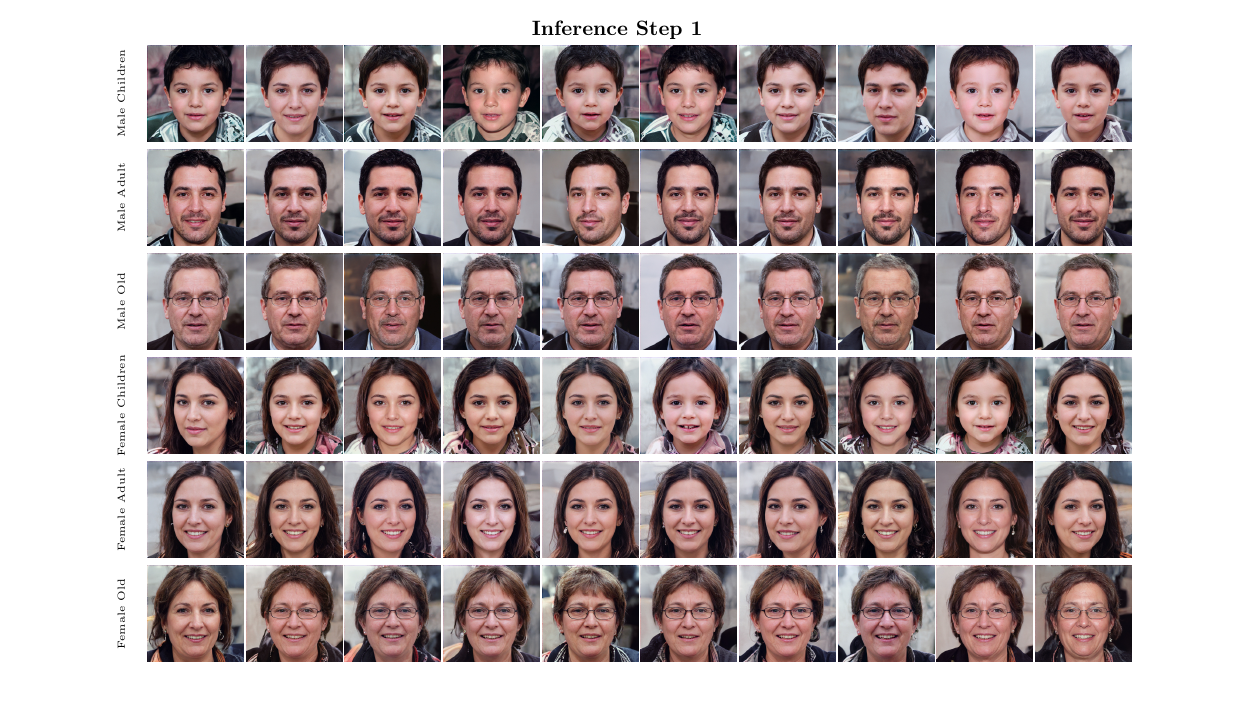}

  \vspace{0.35em}
  \includegraphics[width=0.98\textwidth]{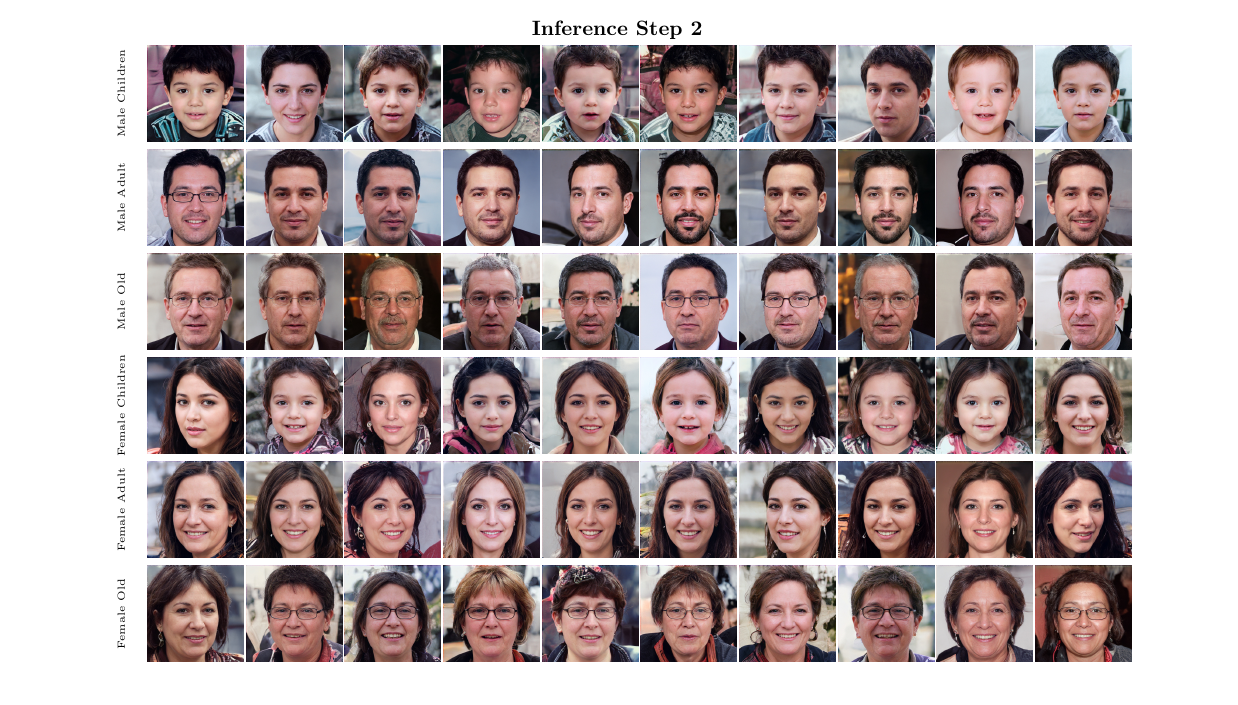}

  \caption{\textbf{FFHQ generated results at low inference steps.}
  Each row corresponds to one FFHQ class and each column shows one selected generated result. The panels show inference steps 1 (Drift Model~\cite{deng2026generative,he2026sinkhorn}) and 2.}
  \label{fig:ffhq_fake_samples_steps_1_2}
\end{figure*}

\clearpage

\begin{figure*}[p]
  \centering
  \includegraphics[width=0.98\textwidth]{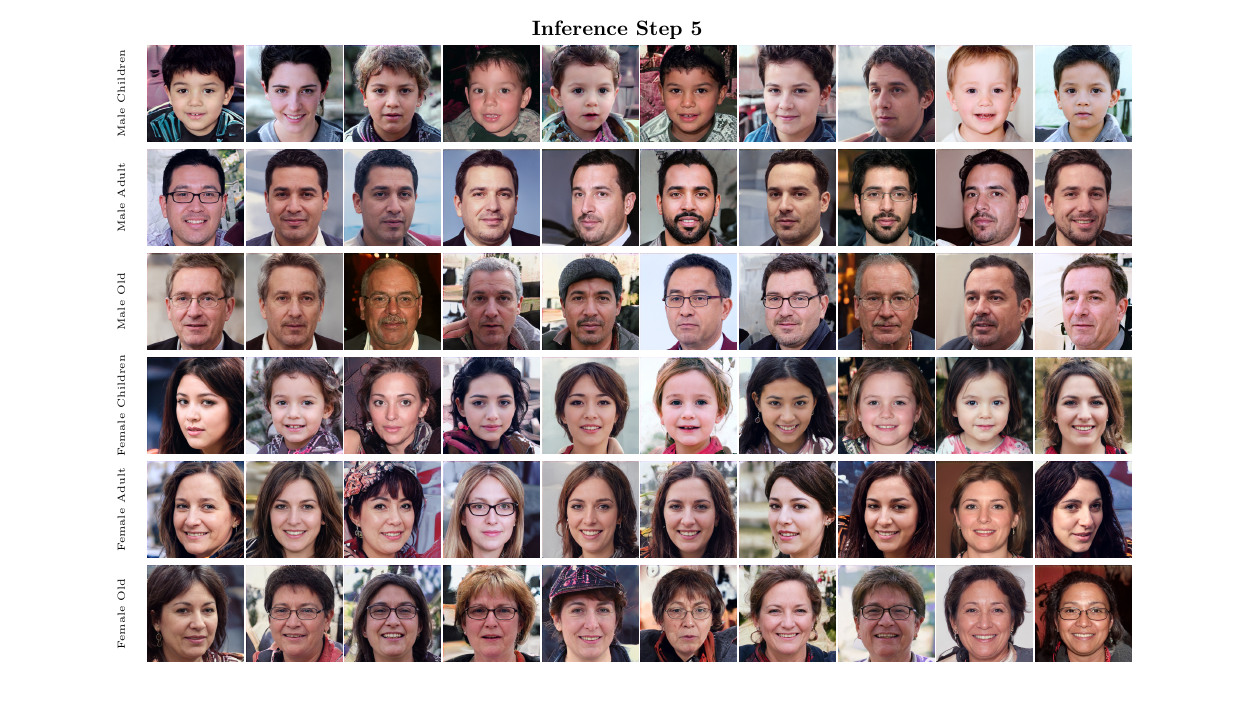}

  \vspace{0.35em}
  \includegraphics[width=0.98\textwidth]{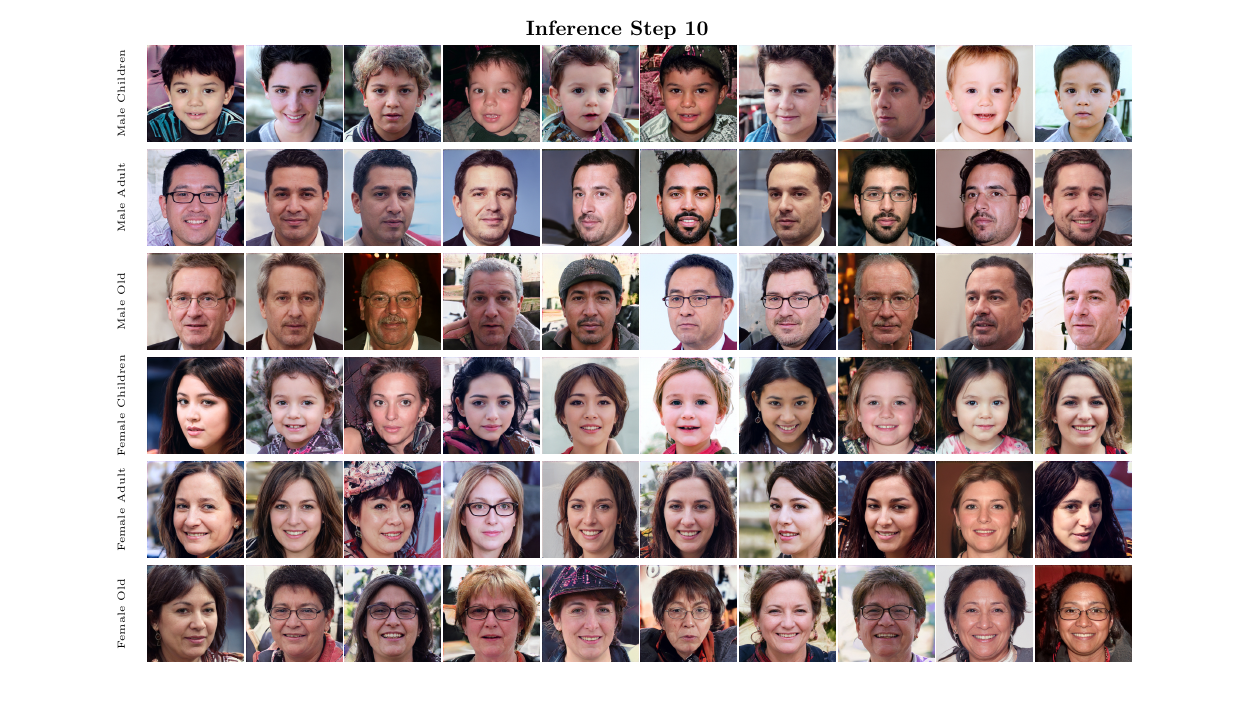}

  \caption{\textbf{FFHQ generated results at higher inference steps.}
  Each row corresponds to one FFHQ class and each column shows one selected generated result. The panels show inference steps 5 and 10.}
  \label{fig:ffhq_fake_samples_steps_5_10}
\end{figure*}

\clearpage

\clearpage



\end{document}